\def\startlocaldefs{\makeatletter}
\def\endlocaldefs{\makeatother}
\theoremstyle{plain}
\newtheorem{theorem}{Theorem}[section]
\newtheorem{corollary}{Corollary}[section]
\newtheorem{lemma}[theorem]{Lemma}
\newtheorem{proposition}[theorem]{Proposition}
\newenvironment{restatedproposition}[1]
{%
  \par\smallskip
  \noindent\textbf{Proposition~\ref{#1}.}
}
{%
  \par
}
\theoremstyle{definition}
\newtheorem{definition}[theorem]{Definition}
\newtheorem*{example}{Example}
\newtheorem{remark}{Remark}[section]
\newtheorem{assumption}{\bf Assumption}
\theoremstyle{remark}
\title{Weighted Random Dot Product Graphs
\thanks{\textit{\underline{Citation}}: 
\textbf{Bernardo Marenco, Paola Bermolen, Marcelo Fiori, Federico Larroca, Gonzalo Mateos. ``Weighted Random
Dot Product Graphs.'' Electronic Journal of Statistics, 20(1) 2456-2499 2026. \href{https://doi.org/10.1214/26-EJS2538}{DOI:10.1214/26-EJS2538.} }} 
}
\author{
  Bernardo Marenco, Paola Bermolen, Marcelo Fiori, Federico Larroca \\
  Facultad de Ingenier\'ia \\
  Universidad de la Rep\'ublica \\
  Uruguay\\
  \texttt{\{bmarenco, paola, mfiori, flarroca\}@fing.edu.uy} \\
   \And
  Gonzalo Mateos \\
  Dept. of Electrical and Computer Engineering \\
  University of Rochester \\
  Rochester, NY, USA\\
  \texttt{gmateosb@ur.rochester.edu} 
}
\begin{document}
\maketitle

\begin{abstract}
Modeling of intricate relational patterns has become a cornerstone of contemporary statistical research and related data science fields. Networks, represented as graphs, offer a natural framework for this analysis. This paper extends the Random Dot Product Graph (RDPG) model to accommodate weighted graphs, markedly broadening the model's scope to scenarios where edges exhibit heterogeneous weight distributions. We propose a nonparametric weighted (W)RDPG model that assigns a sequence of latent positions to each node. Inner products of these nodal vectors specify the moments of their incident edge weights' distribution via moment-generating functions. In this way, and unlike prior art, the WRDPG can discriminate between weight distributions that share the same mean but differ in other higher-order moments. We derive statistical guarantees for an estimator of the nodal's latent positions adapted from the workhorse adjacency spectral embedding, establishing its consistency and asymptotic normality. We also contribute a generative framework that enables sampling of graphs that adhere to a (prescribed or data-fitted) WRDPG, facilitating, e.g., the analysis and testing of observed graph metrics using judicious reference distributions. The paper is organized to formalize the model's definition, the estimation (or nodal embedding) process and its guarantees, as well as the methodologies for generating weighted graphs, all complemented by illustrative and reproducible examples showcasing the WRDPG's effectiveness in various network analytic applications.

\end{abstract}

\keywords{Random weighted graph \and Latent positions model \and Random dot product graph \and Statistical network analysis}

\section{Introduction}\label{sec:intro}

The study of network data has become a central theme in modern statistical research due to the growing demands to understand complex relational structures found in social, biological, and technological systems; see e.g.,~\cite{kolaczyk_2009}. Networks, represented as graphs $G = (V, E)$ with vertex set $V$ and edge set $E$, offer a natural framework to describe pairwise interactions among entities. Statistical approaches to the analysis of network data not only address issues of modeling, inference, and prediction, but also provide insights into the underlying mechanisms governing connectivity and community structure in complex systems.

A powerful approach within this domain is the use of latent variable models~\cite{hoff2002latent}, where each vertex is endowed with an unobserved feature vector that encapsulates its propensity to establish relational ties. This general class of models offers a probabilistic lens through which the observed network structure can be understood in terms of latent geometric or probabilistic constructs, e.g.,~\cite[Ch. 2.2]{kolaczyk_2017} and~\cite[Ch. 10.7]{Izenman_2023}. Among these models, the Random Dot Product Graph (RDPG) stands out for its interpretability and theoretical tractability, providing a bridge between latent geometry and observed connectivity patterns.

In the standard RDPG model, we consider a simple undirected and unweighted random graph $G = (V, E)$ with $ N $ vertices, where each vertex $ i \in V $ is associated with a latent position $ \bbx_i \in \reals^d $. Typically, the dimensionality of the embedding space $d\ll N$. The RDPG model posits that for any pair of vertices $ i $ and $ j $, the presence of an edge is determined by a Bernoulli random variable $ A_{ij}\in\{0,1\} $ with success probability given by the dot product of the latent positions, that is,
\begin{align*}
A_{ij} \sim \text{Bernoulli}(\bbx_i^\top \bbx_j), \quad \text{for } i \neq j,
\end{align*}
where the $A_{ij}$'s are mutually independent. Since self loops are excluded, $A_{ii}\equiv 0$ for all $i\in V$. Here, $ \bbA \in\{0,1\}^{N\times N}$ is the graph's symmetric adjacency matrix, and the inner product $ \bbx_i^\top \bbx_j $ must lie in the interval $[0,1]$ to yield as a valid probability. RDPGs can also incorporate random latent positions by defining a distribution $F$ supported on $\ccalX \subset \reals^d$, such that $\bbx^\top\bby \in [0,1]$ for all $\bbx,\bby \in \ccalX$. 

One can verify that Erd\"os-R\'enyi (ER) graphs, stochastic block models (SBMs) with a positive semidefinite (PSD) probability matrix,  or other more sophisticated random graph models are particular cases of an RDPG~\cite{priebe2018survey}. The resulting expressiveness has been a main driver behind the popularity of the model. Moreover, the RDPG definition provides a natural geometric interpretation of connectivity: angles between latent position vectors indicate affinity between vertices, and their magnitudes indicate how well connected they are. For $d\leq 3$, visual inspection of the nodes' vector representations can reveal community structure. For higher dimensions or more complex scenarios, angle-based clustering of nodal embeddings can also be used~\cite{lyzinski2017community,scheinerman2010rdpg}. 

The associated RDPG inference problem of estimating latent positions from graph observations enjoys strong asymptotic properties, facilitating statistical inference tasks by bringing to bear tools of geometrical data analysis in latent space~\cite{priebe2018survey}. Let us briefly describe the inference method in such a setup. Consider stacking all the nodes' latent position vectors in the rows of matrix $\bbX=[\bbx_1,\ldots,\bbx_N]^\top\in\reals^{N\times d}$. Given an observed graph $\bbA$ and a prescribed embedding dimension $d$ (typically obtained using an elbow rule on $\bbA$'s eigenvalue scree plot, using e.g.~\cite{zhu2006automatic}), the goal is to estimate $\bbX$. For the RDPG model, the edge-formation probabilities are given by the entries \( P_{ij} = \bbx_i^\top \bbx_j \) of the PSD matrix \( \bbP = \bbX\bbX^\top \). Latent positions are uniquely defined only up to an orthogonal transformation, since for any orthogonal matrix \( \bbQ \in O(d) \), the rotated positions \( \bbY = \bbX\bbQ \) satisfy \( \bbP = \bbY\bbY^\top \) as well.
The Adjacency Spectral Embedding (ASE) estimator solves the following least-squares approximation~\cite{scheinerman2010rdpg} to obtain
	\begin{gather}\label{eq:ase_mask}
		\hbX\in\argmin_{\bbX\in \reals^{N\times d}}\Fro{\bbA-\bbX\bbX^\top}^2.
	\end{gather}
In other words, $\hbP=\hbX\hbX^\top$ is the best rank-$d$ PSD approximation to the adjacency matrix $\bbA$, in the Frobenius-norm sense. It is straighforward to see that $\hbX = \hbU\hbD^{1/2}$ verifies \eqref{eq:ase_mask}, where $\bbA=\bbU_\bbA\bbD_\bbA\bbU_\bbA^\top$ is the eigendecomposition of $\bbA$, $\hbD\in\reals^{d\times d}$ is a diagonal matrix with the $d$ largest-magnitude eigenvalues of $\bbA$, and $\hbU\in\reals^{N\times d}$ are the associated eigenvectors from $\bbU_\bbA$.\footnote{Under mild assumptions on the latent positions distribution $F$ it is possible to show that the top $d$ eigenvalues of $\bbA$ are nonnegative with probability tending to 1 as $N\to \infty$, so $\hbX$ is well defined, see \cite{priebe2018survey} for details.} Consistency and asymptotic ($N\to \infty$) normality results for ASE are available; see e.g.,~\cite{priebe2018survey}.

In this work we extend the RDPG model to account for \emph{weighted} graphs, endowed with a weight map $ w: E \mapsto \mathbb{R}_+ $ that assigns a nonnegative value to each edge. The adjacency matrix entries for the weighted graph are $ W_{ij} = W_{ji} = w(i,j) $ for all $ (i,j) \in E $, while the absence of an edge is represented as $ W_{ij} = W_{ji} = 0 $. This formulation naturally encompasses unweighted graphs as a special case, where edge weights are constrained to binary values (i.e., $ w \equiv 1 $).  

Several extensions of the standard (or \textit{vanilla}) RDPG model have been developed to incorporate edge weights. A common approach is to introduce a parametric distribution $F_{\bbtheta}$ with parameter $ \bbtheta \in \reals^L $ to model weighted adjacency entries~\cite{deford2016random,tang2017robust}. For instance, in the case of Poisson-distributed weights, one might set $ L=1 $ and assume $ W_{ij} \sim \text{Poisson}(\theta) $. In this formulation, each node $ i \in V $ is assigned a collection of latent vectors $ \bbx_i[l] \in \reals^{d} $ for $ l = 1, \dots, L $, such that the weight of an edge between nodes $ i $ and $ j $ follows the distribution  
\begin{align*}
W_{ij} \sim F_{(\bbx_i^\top[1] \bbx_j[1], \dots, \bbx_i^\top[L] \bbx_j[L])},
\end{align*}
independently across edges. To model sparse graphs, this distribution may include a point mass at zero.
The classic RDPG model is recovered by setting $ F_{\bbtheta} $ as a Bernoulli distribution with success probability $ \theta \in [0,1]$. Despite its flexibility, this approach has some noteworthy limitations. A key drawback is that all edges must share the same parametric family of weight distributions, differing only in their parameters. While mixture models can introduce some heterogeneity, they still require an explicit assumption about the underlying distribution families. This requirement significantly restricts the model's applicability, particularly if edges follow heterogeneous, unknown, or multimodal weight distributions.  

To address this shortcoming, recent work by Gallagher et al.~\cite{gallagher2023spectral} proposes a nonparametric alternative. Therein each node has a single associated latent position $\bbz_i\in\ccalZ$, which is endowed with a probability distribution $F$.  It is postulated that, given a family $\{H(\bbz_1,\bbz_2): \bbz_1,\bbz_2 \in \ccalZ \}$ of symmetric real-valued distributions, there exists a map $\bbphi:\ccalZ \mapsto \reals^d$ such that if $W_{ij}\sim H(\bbz_i,\bbz_j)$, then $\E{W_{ij}} = \bbphi^\top(\bbz_i) \bbI_{p,q}\bbphi(\bbz_j)$, where $\bbI_{p,q}$ is a diagonal matrix of $p$ ones and $q$ minus ones, such that $p+q=d$. This diagonal matrix facilitates modeling heterophilic (or disassortative) behavior, as in \cite{rubin2022statistical}. Interestingly, one can consistently recover $\bbx_i = \bbphi(\bbz_i)$ via the ASE of $\bbW$, and the estimated $\hbx_i$'s are asymptotically normal. While this method improves flexibility and avoids restrictive parametric assumptions, it can only recover $\bbphi(\bbz_i)$, i.e., the latent positions for the mean adjacency matrix $\E{\bbW}$. Consequently, the model is unable to differentiate between pairs of edges that stem from distinct distributions sharing an identical mean. These challenges highlight the ongoing need for more expressive, discriminative, and robust latent space models for weighted graphs.  

\subsection{Proposed approach and contributions}
Instead, we propose that the sequence of nodal vectors be related to the weight distribution's moment-generating function (MGF). Assume each node has a sequence of latent positions $\{\bbx_i[k]\}_{k \geq 0}$ such that the inner products $\{\bbx_i^\top[k] \bbx_j[k]\}_{k \geq 0}$ form an admissible moment sequence. Our weighted (W)RDPG model specifies the $k$-th order moments of the random entries in the adjacency matrix $\bbW \in \reals^{N\times N}$ are given by $\E{W_{ij}^k}= \bbx_i^\top[k] \bbx_j[k]$, for all $k\geq 0$ (see Section \ref{ssec:model_def}).

The WRDPG model offers several key advantages: (i) it is nonparametric; (ii) it considers higher-order moments beyond the mean; while providing (iii) statistical guarantees for the associated estimator; and (iv) a generative mechanism that can reproduce the structure and the weights' distribution of real networks. Next, we elaborate on these attractive features to better position our technical contributions in context. Unlike~\cite{deford2016random,tang2017robust}, the nonparametric WRDPG graph modeling framework  does not require prior assumptions about a specific weight distribution. Nodal vectors can be used to describe high-order moments of said distribution, thus enhancing the model's representation and discriminative  power. For example,  different from~\cite{gallagher2023spectral} the WRDPG model can distinguish between distributions with the same mean but differing standard deviations; see also Section \ref{sec:discriminative_power} and Figure \ref{fig:embedding_norm_poisson} for an illustrative example. 

\begin{figure}[t]
    \centering
    \includegraphics[width=\textwidth]{./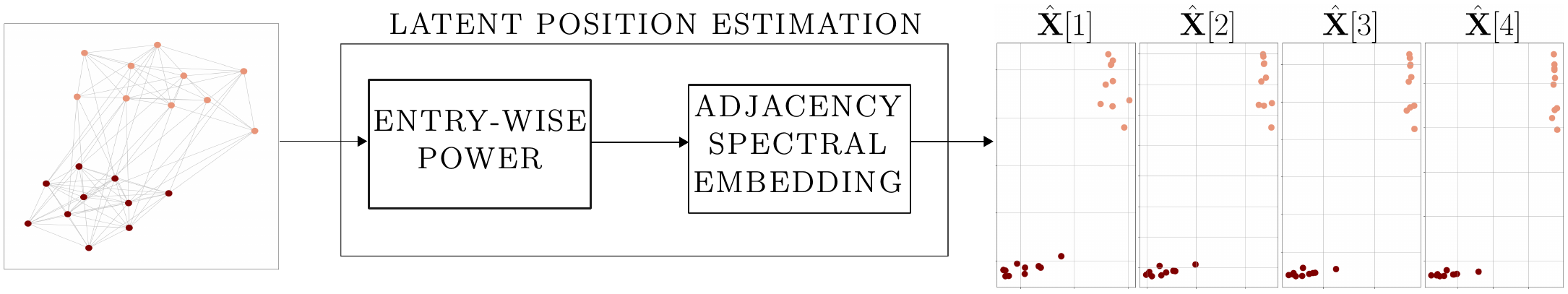}
    \caption{Latent position estimation. Given an adjacency matrix $\bbA$ we compute its $k$-th entry-wise power $\bbA^{(k)}$. The ASE of $\bbA^{(k)}$ yields the estimates $\hbX[k]$; see also Section \ref{ssec:estimation}.}
    \label{fig:wrdpg_estimation_blocks}
\end{figure}

Furthermore, framing our model in terms of (symmetric) adjacency matrices allows us to define latent position estimators based on their spectral decomposition; see Section \ref{ssec:estimation}. Statistical guarantees can be derived using tools to control the matrix spectrum under perturbations. As shown in \cite{marenco2021tsipn}, for each fixed moment index $k$, the latent position matrix $\bbX[k]$ can be consistently recovered (up to an orthogonal transformation) via the ASE of $\bbW^{(k)}$, the matrix of entry-wise $k$-th powers of $\bbW$; see Figure \ref{fig:wrdpg_estimation_blocks} for a schematic diagram of the latent position estimation procedure. In Section \ref{sec:asymptotic_results} we generalize the result in~\cite{marenco2021tsipn} by proving that consistency holds for unbounded sub-Weibull edege weights under a stronger convergence metric that ensures uniform convergence for all $\hbx_i$'s. We also establish that the estimator is asymptotically normal  as $N\to\infty$, a result that is novel in our WRDPG setup.

In Section \ref{sec:generative} we show how one can generate graphs adhering to the proposed WRDPG model, which is non-trivial in a nonparameteric setting and was not considered in~\cite{gallagher2023spectral}. We develop a methodology to sample random weighted graphs whose edge weight distribution is defined by a sequence of moments given by inner products of connected nodes' latent positions. Depending on the characteristics of the weight distribution--whether discrete, continuous, or a mixture--we can solve for the latent position sequence to ensure the inner products match the prescribed moments. For real-valued (continuous) weights, we rely on the maximum entropy principle subject to moment constraints. We introduce a primal-dual approach to find a probability density function that maximizes entropy, offering improved numerical stability relative to previous algorithms~\cite{saad2019pymaxent}. Results for mixture distributions allow us to generate graphs that simultaneously replicate the sparsity pattern of the network and the edge weights. In Figure \ref{fig:wrdpg_generation_blocks} we present a schematic diagram for the graph generation pipeline.

\begin{figure}[t]
    \centering
    \includegraphics[width=\textwidth]{./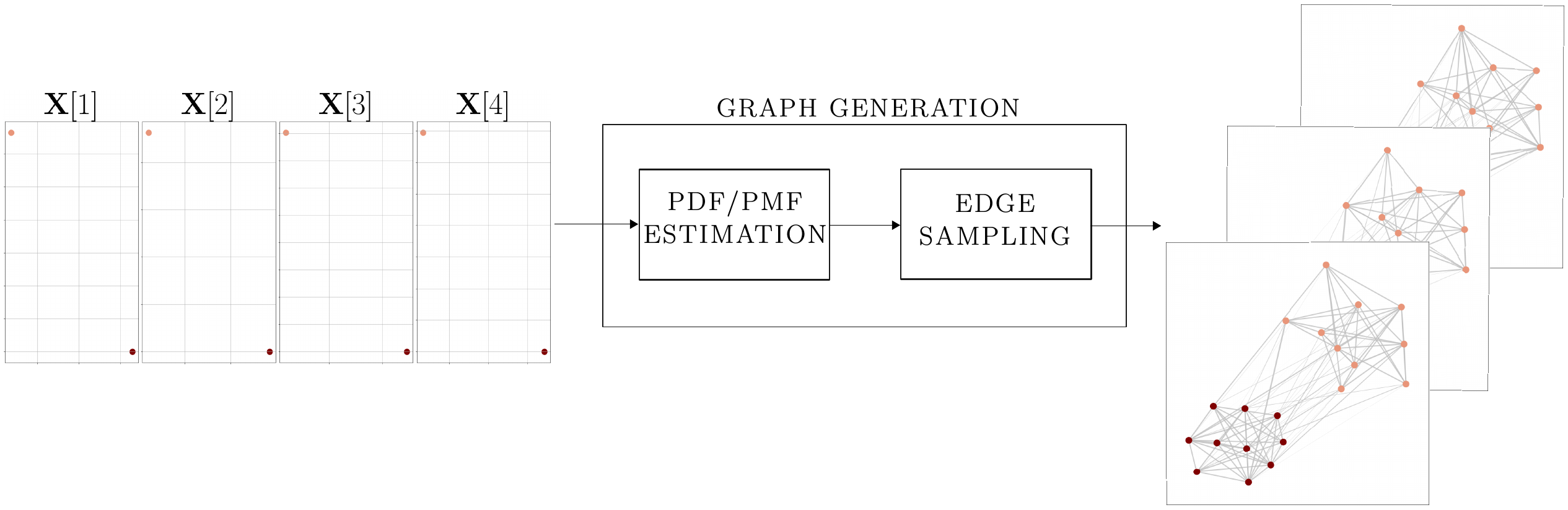}
    \caption{Graph generation. Given the latent positions of each vertex $\{\bbX[k]\}_{k \geq 0}$, we estimate a weight distribution whose sequence of moments is given by the corresponding dot products. Edge weights are then sampled from this estimated distribution; see also Section \ref{sec:generative}.}
    \label{fig:wrdpg_generation_blocks}
\end{figure}

Moreover, given an observed weighted network (instead of the ground-truth latent position sequence as in Figure \ref{fig:wrdpg_generation_blocks}), we show how we can use this generative procedure to sample graphs that are similar to the original one, in a well-defined statistical sense. This can be useful for several statistical inference tasks involving network data. If, for instance, one would like to assess the significance of some observed graph characteristic, this procedure allows to generate several `comparable' graphs and construct judicious reference distributions~ \cite[Section 6.2]{kolaczyk_2009}. In a related use case, one could perform hypothesis testing to determine if a given graph adheres to the WRDPG model.

The rest of the paper is organized as follows. In Section \ref{sec:model} we formally define the WRDPG model, describe the embedding method, and present several illustrative examples. The estimation problem is addressed in Section \ref{sec:asymptotic_results}, and the proofs of the asymptotic statistical guarantees are presented in detail. Finally, in Section \ref{sec:generative}, we present our methodology for generating WRDPG graphs.  We include several reproducible examples to demonstrate the ability of the generative framework to produce weighted graphs with desired characteristics. 

\subsection*{Notational conventions}
Throughout the paper we use the following notation. Real numbers are denoted by plain symbols (either upper or lowercase), such as $i,j,N\in \reals$. Vectors in a Euclidean space $\reals^d$ are denoted in bold, lowercase, e.g., $\bbx \in \reals^{d}$, and are assumed to be column vectors. We reserve bold uppercase letters for matrices, such as $\bbA \in \reals^{n\times m}$, with $\bbI$ denoting the identity matrix. The symbol $\bbA^{(k)}$ stands for the $k$-fold Hadamard (i.e., entrywise) product of matrix $\bbA$ with itself, that is:
\begin{align*}
   \bbA^{(k)} := \underbrace{\bbA \circ \bbA \circ \dots \circ \bbA}_{k \text{ times}} .
\end{align*}
The group of orthogonal matrices of dimension $d$ is denoted by $O(d)$, i.e., $\bbQ \in O(d)$ iff $\bbQ \in \reals^{d\times d}$ and $\bbQ\bbQ^\top = \bbI$, where $(\cdot)^\top$ stands for transposition.

The dot product between $\bbx, \bby \in \reals^d$ is denoted by $\bbx^\top \bby$, whereas $\twonorm{\bbx}$ and $\infnorm{\bbx}$ denote the Euclidean (i.e., $L_2$) and maximum (i.e., $L_\infty$) norm of vector $\bbx$. When dealing with matrices, $\Fro{\bbA}$, $\maxnorm{\bbA}$, $\spectral{\bbA}$ and $\twoToInf{\bbA}$ denote the Frobenius, maximum (i.e., largest entry in magnitude), spectral (i.e., largest singular value) and $2\to \infty$ norm of matrix $\bbA\in \reals^{n\times m}$, respectively. The latter is defined as
\begin{displaymath}
    \twoToInf{\bbA} := \sup_{\twonorm{\bbx}=1} \infnorm{\bbA\bbx},
\end{displaymath}
see \cite{cape2019twoToInfinity}. The $2\to\infty$ norm is thus the operator norm of $\bbA$ acting as $\bbA(\cdot):\reals^n \mapsto \reals^m$, when $\reals^n$ is endowed with the Euclidean norm and $\reals^m$ with the $\infty$ norm.

We repeatedly make use of the following relations between the $2\to\infty$, spectral, and Frobenius norms of any matrix $\bbA \in \reals^{n\times m}$:
\begin{align*}
        \twoToInf{\bbA} \leq &\, \spectral{\bbA} \leq \Fro{\bbA}, \\
         \spectral{\bbA} \leq &\min \left\lbrace \sqrt{n}\twoToInf{\bbA}\,, \sqrt{m} \twoToInf{\bbA^\top} \right\rbrace .
\end{align*}
The $2\to\infty$ norm is not submultiplicative; however, the following relations hold for any matrices $\bbA,\bbB, \bbC$ of suitable dimensions:
\begin{align*}
    \twoToInf{\bbA\bbB} \leq& \twoToInf{\bbA}\spectral{\bbB} \leq \spectral{\bbA} \spectral{\bbB},\\
    \twoToInf{\bbC\bbA} \leq& \infnorm{\bbC}\twoToInf{\bbA}\leq \sqrt{n} \spectral{\bbC}\spectral{\bbA},
\end{align*}
where $\infnorm{\bbA} := \max_i \sum_j |A_{ij}|$ is the maximum absolute row sum of $\bbA$.

For a function $f:\naturals \mapsto \reals^+$, we say that a random variable $Y\in\reals$ is $\OP{f(N)}$ if for each $\alpha > 0$ there exists $N_0\in \naturals$ and $C>0$ such that for all $N\geq N_0$ it holds that $\P{|Y|<Cf(N)} \geq 1-N^{-\alpha}$. In that case we write $Y=\OP{f(N)}$. Furthermore, we say that $Y$ is $\EquivP{f(N)}$ if for each $\alpha > 0$ there exists $N_0\in \naturals$ and $c,C>0$ such that for all $N\geq N_0$ it holds that $\P{cf(N) <Y<Cf(N)} \geq 1-N^{-\alpha}$. In that case we write $Y=\EquivP{f(N)}$.

\section{Weighted RDPG model}\label{sec:model}

Here we define the WRDPG model and the ASE-based latent position estimator. Illustrative examples are presented to ground these concepts. Furthermore, we show the discriminative power for community detection inherited from considering higher-order moments beyond the mean. Finally, we discuss the impact of the number of nodes on the accuracy of the moment sequence reconstruction.

\subsection{Model specification}\label{ssec:model_def}
In this section, we formally define our WRDPG model. We follow the rationale of the vanilla RDPG and define the model in terms of random latent position sequences per node. As previewed in Section \ref{sec:intro}, the inner product of the latent positions will be related to the MGF of the edge weights' distribution, thus requiring some preliminary definitions.

\begin{definition}[Admissible moment sequence]\label{def:moment_sequence}
A sequence $\{m[k]\}_{k \geq 0}$ of real numbers is an admissible moment sequence if $m[0]=1$ and for all $p \geq 0$ the matrix
\begin{displaymath}
    \bbM = \left(
        \begin{array}{ccccc}
            m[0] & m[1] & m[2] & \dots & m[p] \\
            m[1] & m[2] & m[3] & \dots & m[p+1]\\
            m[2] & m[3] & m[4] & \dots & m[p+2]\\
            \vdots & \vdots & \vdots & \ddots & \vdots \\
            m[p] & m[p+1] & m[p+2] & \dots & m[2p]
        \end{array}
    \right) \in \reals^{(p+1)\times (p+1)}
\end{displaymath}
whose entries are $M_{ij} = m[i+j-2]$ is PSD.
\label{def:moments_sequence}
\end{definition}

\begin{remark}
Definition \ref{def:moments_sequence} is enough to guarantee that there exists a probability measure $\mu$ in $\reals$ such that $m[k]$ is the $k$-th moment of $\mu$; see e.g.,~\cite[Ch. 3]{widder1941laplace}.
\end{remark}

\begin{definition}[Weighted inner-product distribution]\label{def:inner_product_dist}
Let $F$ be a probability distribution with support $\text{supp} F = \mathcal{X} \subset (\reals^d)^\infty$. We say that $F$ is a weighted inner-product distribution if for all $\{\bbx[k]\}_{k\geq 0}, \{\bby[k]\}_{k \geq 0} \in \mathcal{X}$, the sequence $\{\bbx^\top[k] \bby[k]\}_{k \geq 0}$ is an admisible moment sequence.
\end{definition}

\begin{definition}[WRDPG]\label{def:wrdpg_model}
Let $F$ be a weighted inner-product distribution and let $\{\bbx_1[k]\}_{k \geq 0}$, $\{\bbx_2[k]\}_{k\geq 0},$ $\dots,\{\bbx_N[k]\}_{k\geq 0}$ be i.i.d. with distribution $F$. Given the sequence $\bbX_k:=\{\bbX[k]\}_k$, with $\bbX[k]=[\bbx_1[k],\ldots, \bbx_N[k]]^\top\in \reals^{N\times d}$, the weighted (W)RDPG model prescribes the $k$-th order moments of a random adjacency matrix $\bbW \in \reals^{N\times N}$ are given by $\E{W_{ij}^k}= \bbx_i^\top[k] \bbx_j[k]$, for all $k\geq 0$. Formally, the WRDPG model specifies the MGF of $\bbW \in \reals^{N\times N}$ as
\begin{equation}\label{eq:mgf}
	\E{e^{tW_{ij}}|\bbX_k} = \sum_{k=0}^\infty \frac{t^k\E{W_{ij}^k}}{k!} = \sum_{k=0}^\infty \frac{t^k\bbx_i^\top[k]\bbx_j[k]}{k!}
\end{equation}
and the entries $W_{ij}$ are independent, i.e., edge independent. In such a case, we write $(\bbW,\bbX_k) \sim \mathrm{WRDPG}(F)$.
\end{definition}

\begin{remark}[Nonidentifiability of latent positions] As is the case for the vanilla RDPG model, latent positions are invariant to orthogonal transformations, since for any $\bbQ \in O(d)$, $\bbX[k]\bbX^\top[k]=\bbX[k]\bbQ \left(\bbX[k]\bbQ\right)^\top$. This implies that given a latent position sequence $\bbX_k$, for each index $k$ one may choose a matrix $\bbQ_k\in O(d)$ (which may vary with $k$) and construct a sequence $\bbY_k:=\{\bbX[k] \bbQ_k \}_k$, which will result in the same distribution of graphs as that given by $\bbX_k$.
\end{remark}

\begin{remark}[Model extensions]
    For simplicity of exposition, the WRDPG model is defined assuming a common embedding dimension $d$ across all moment indices $k$. More generally, the framework can be extended to allow the latent positions $\bbX[k]$ to lie in moment-specific dimensions $d_k$, leading to representations of the form $\E{W_{ij}^k} = \bbx_i^\top[k]\bbx_j[k]$, with $\bbx_i[k] \in \mathbb{R}^{d_k}$. Such dimensions could be selected in a data-driven manner in practical applications, for instance by applying an elbow rule–based procedure to the eigenvalue scree plot of the corresponding empirical $k$-th order moment matrix.

    In addition, the WRDPG model can be generalized to allow for indefinite inner products by replacing $\bbx_i^\top[k]\bbx_j[k]$ with $\bbx_i^\top[k]\bbI_k^{p,q}\bbx_j[k]$, where
    \begin{displaymath}
        \bbI_k^{p,q} := \mathrm{diag}(\underbrace{1,\ldots,1}_{p_k},\underbrace{-1,\ldots,-1}_{q_k}), \qquad p_k + q_k = d_k.
    \end{displaymath}
    This extension accommodates heterophilous graph structures and aligns the WRDPG framework with the generalized RDPG model of~\cite{rubin2022statistical}. Since our methodology builds on the spectral techniques of~\cite{gallagher2023spectral}, which explicitly account for such indefinite inner product structures, analogous asymptotic guarantees to those developed in Section~\ref{sec:asymptotic_results} are expected to hold in this generalized setting.
\end{remark}

\subsection{Estimation of latent positions}\label{ssec:estimation}

The vectors \( \bbx_i[k] \) can be estimated via an ASE of the matrix \( \bbW^{(k)} \), the entrywise \( k \)-th power of an observed symmetric weighted adjacency matrix \( \bbW \). Indeed, for each index \( k \), we stack the latent positions of all nodes into the matrix \( \bbX[k] = [\bbx_1[k], \ldots, \bbx_N[k]]^\top \in \reals^{N \times d} \). If one had access to the moment matrix \( \bbM_k := \bbX[k]\bbX^\top[k] \), it would be straightforward to recover \( \bbX[k] \) (up to an orthogonal transformation) from the eigendecomposition of \( \bbM_k \). However, since \( \bbM_k \) is typically unobserved, we instead rely on the observed weighted adjacency matrix \( \bbW \). Because each entry of \( \bbW^{(k)} \) has expectation equal to the corresponding entry of \( \bbM_k \), we approximate \( \bbX[k] \) by solving [cf. \eqref{eq:ase_mask}]
\begin{align*}
	\hbX[k] \in \argmin_{\bbX \in \reals^{N \times d}} \Fro{\bbW^{(k)} - \bbX\bbX^\top}^2,
\end{align*}
for all \( k \geq 0 \). A solution is readily obtained by setting \( \hbX[k] = \hbU_k \hbD_k^{1/2} \), where \( \bbW^{(k)} = \bbU_{W_k} \bbD_{W_k} \bbU_{W_k}^\top \) is the eigendecomposition of \( \bbW^{(k)} \), \( \hbD_k \in \reals^{d \times d} \) is the diagonal matrix of the \( d \) largest-magnitude eigenvalues, and the columns of \( \hbU_k \in \reals^{N \times d} \) contain the corresponding eigenvectors. In Appendix~\ref{app:eigenvalues}, we show that, under mild assumptions on the weighted inner-product distribution \( F \) (Assumptions~\ref{assumption:full_Rank} and~\ref{assumption:bounded_rv} in Section~\ref{sec:asymptotic_results}), the top \( d \) eigenvalues of \( \bbW^{(k)} \) are nonnegative, ensuring that \( \hbX[k] \) is well-defined. We refer to this estimator as the ASE of \( \bbX[k] \).

\subsection{Examples}\label{sec:examples}

Next, we derive expressions for the latent position vectors in some simple, yet classical, models. We also simulate such WRDPG instances and show that latent position estimation via the ASE yields satisfactory results.  We will derive statistical guarantees for ASE in Section \ref{sec:asymptotic_results}, justifying our empirical findings.

\subsubsection{Erdös-Rényi graph with Gaussian weights}

Consider a graph with $N=1000$ nodes. We first sample the presence or absence of each edge independently as $\mathrm{Bernoulli}(p)$. Then, for each edge we independently sample its weight from a $\mathcal{N}(\mu,\sigma^2)$ distribution. This means that $W_{ij}$ is either 0 with probability $1-p$ or follows a $\mathcal{N}(\mu,\sigma^2)$ distribution with probability $p$. One can show that the latent position sequence for every node is $\bbx[0]=1$ and $\bbx[k]=\sqrt{pm_{\ccalN}[k]}$ for $k\geq 1$, where $m_{\ccalN}[k]$ is the $k$-th moment of the normal distribution with parameters $\mu$ and $\sigma^2$.

Figure \ref{fig:ER_normal_example} shows a histogram of the recovered latent positions up to order $k=6$, with $p=0.5$, $\mu=1$ and $\sigma=0.1$. The red dashed line indicates the true latent positions in each case. Apparently, for each $k$ the estimated positions follow a normal distribution centered around the ground-truth value $\bbx[k]$. This observation is supported by plotting the pdf of the limiting Gaussian distribution -- i.e., the asymptotic distribution of the estimated embeddings as $N\to\infty$, according to Theorem \ref{theorem:tcl}, which is proven in Section~\ref{sec:asymptotic_results}; see also Corollary \ref{corollary:tcl_sbm}.

\begin{figure}[t]
    \centering
    \includegraphics[width=0.8\textwidth]{./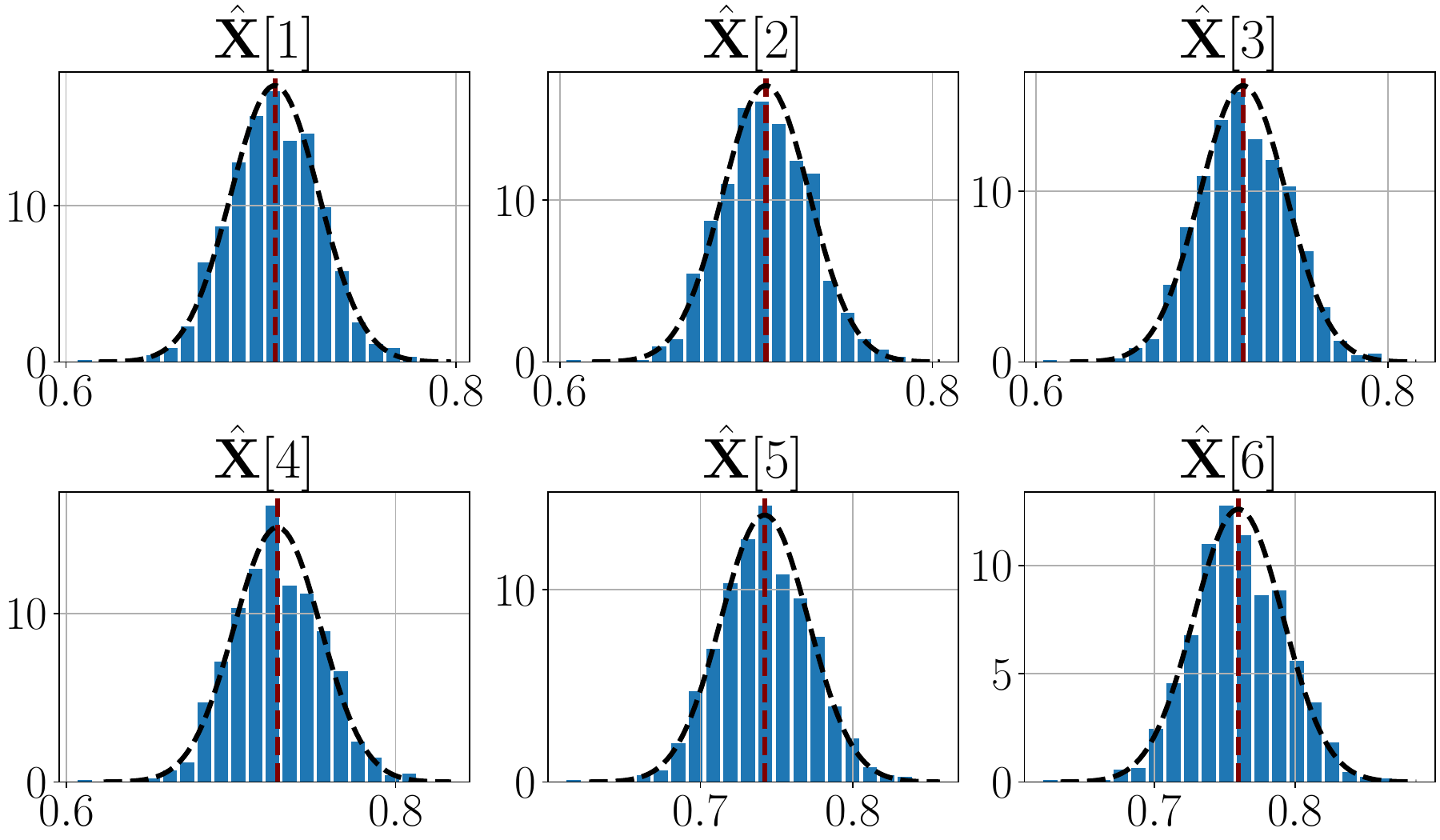}
    \caption{True (dashed vertical line) and estimated (histograms) latent positions for an Erdös-Rényi model with $\mathcal{N}(1,0.01)$ weights. Pdf for limiting Gaussians, as given by Corollary \ref{corollary:tcl_sbm}, are plotted with dashed lines in each panel. The empirical distributions closely follow their limiting counterparts derived from the asymptotic normality results established in Section~\ref{sec:asymptotic_results}.}
    \label{fig:ER_normal_example}
\end{figure}

\subsubsection{Two-block SBM with arbitrary weights' distribution}\label{ex:wsbm}
In this case, the setup is similar as before; only now is the presence or absence of an edge given by a 2-block SBM. That is, each node belongs to exactly one of two communities, and an edge is formed between two nodes with probability $p_1$ if both belong to community 1, with probability $p_2$ if both belong to community 2, and with probability $q$ if they belong to different communities. So, conditioned on individual node assignments to communities, the presence or absence of edges is given by the block probability matrix:
\begin{equation}\label{eq:2_block_sbm}
\bbB = \left(
\begin{array}{c c}
p_1 & q\\
q & p_2
\end{array}
\right).
\end{equation}
After sampling edges, all weights are independently sampled from the same arbitrary edge-weight distribution.

Let $\bbx_{C_1},\bbx_{C_2} \in (\reals^2)^\infty$ denote the latent position sequence for each community. To compute the analytical latent positions for this model, we begin by looking at the mean, i.e., $\bbx_{C_1}[1],\bbx_{C_2}[1] \in \reals^2$. If two nodes belong to community 1, then an edge is formed between them with probability $p_1$, and its weight follows the prescribed distribution. This implies that
\begin{displaymath}
\bbx_{C_1}^\top[1] \bbx_{C_1}[1] = p_1m_d[1] \Rightarrow \|\bbx_{C_1}[1]\| = \sqrt{p_1 m_d[1]},
\end{displaymath}
where $m_d[1]$ is the first moment (the mean) of the chosen weights' distribution. Due to the nonidentifiability of latent positions, we can arbitrarily place the latent positions for community 1 along the $x$-axis, and therefore choose:
\begin{equation}\label{eq:sbm_normal_mean_c1}
\bbx_{C_1}[1] =(\sqrt{p_1 m_d[1]},0)^\top.
\end{equation}
Similarly, we conclude that for community 2 we must have $\|\bbx_{C_2}[1]\| = \sqrt{p_2m_d[1]}$. Also, an edge between two nodes in different communities is present with probability $q$ and its weight follows the prescribed distribution, so
$$\bbx_{C_1}^\top[1] \bbx_{C_2}[1] = qm_d[1].$$
From these two constraints, using \eqref{eq:sbm_normal_mean_c1} we conclude that
\begin{displaymath}\label{eq:sbm_normal_mean_c2}
\bbx_{C_2}[1] = \left(q \sqrt{\frac{m_d[1]}{p_1}},\sqrt{m_d[1] \left(p_2-\frac{q^2}{p_1}\right)} \right)^{\top} .
\end{displaymath}

Following the analysis above, one can derive the higher-order terms in the per-community sequences. Indeed, if $m_d[k]$ is the $k$-th moment for the chosen weights' distribution, imposing that the inner products between latent positions equals the moments of inter- and intra-communities connections leads to:
\begin{align*}
\|\bbx_{C_1}[k]\|^2 &= p_1m_d[k]\\
\|\bbx_{C_2}[k]\|^2 &= p_2m_d[k]\\
\bbx_{C_1}^\top[k] \bbx_{C_2}[k] &= qm_d[k] .
\end{align*}
Again, arbitrarily placing the latent positions for community 1 along the $x$-axis we can find the latent positions for each community:
\begin{equation} \label{eq:sbm_normal_mean_latent_sequence}
\begin{split}
\bbx_{C_1}[k] &= \left( \sqrt{p_1m_d[k]}, 0 \right)^\top \\
\bbx_{C_2}[k] &= \left(q \sqrt{\frac{m_d[k]}{p_1}},\sqrt{m_d[k] \left(p_2-\frac{q^2}{p_1}\right)}\, \right)^\top .
\end{split}
\end{equation}

Note that \eqref{eq:sbm_normal_mean_latent_sequence} is valid for $k\geq 1$. For $k=0$ we can arbitrarily set $\bbx_{C_i}[0]=(1,0)^\top$ for $i=1,2$ in order to: \textit{a}) maintain the dimensionality of the embeddings; and \textit{b}) force the $0$-th moment of each edge to be equal to 1 [cf. Definition \ref{def:moment_sequence}].

We simulated the above setup for a network with $N=1000$ nodes ($700$ in community 1 and $300$ in community 2), block probability matrix
\begin{displaymath}
\bbB = \left(
\begin{array}{c c}
0.7 & 0.1\\
0.1 & 0.3
\end{array}
\right),
\end{displaymath}
and weights sampled from a $\ccalN(\mu,\sigma^2)$ distribution, with $\mu=1$ and $\sigma=0.1$. Results for the ASE of $\bbW^{(k)}$ up to order $k=6$ are shown in Figure \ref{fig:SBM_normal_example}. As expected, the estimated embeddings for each community are centered around the analytically derived ones in \eqref{eq:sbm_normal_mean_latent_sequence}, with an ellipse-like outline that suggests an approximately normal distribution. As with the previous ER example, this is corroborated by plotting the 95\% confidence level sets of the limiting Gaussians for each community, as given by the explicit formulae in  Corollary \ref{corollary:tcl_sbm}.

\begin{figure}[t]
    \centering
    \includegraphics[width=0.8\textwidth]{./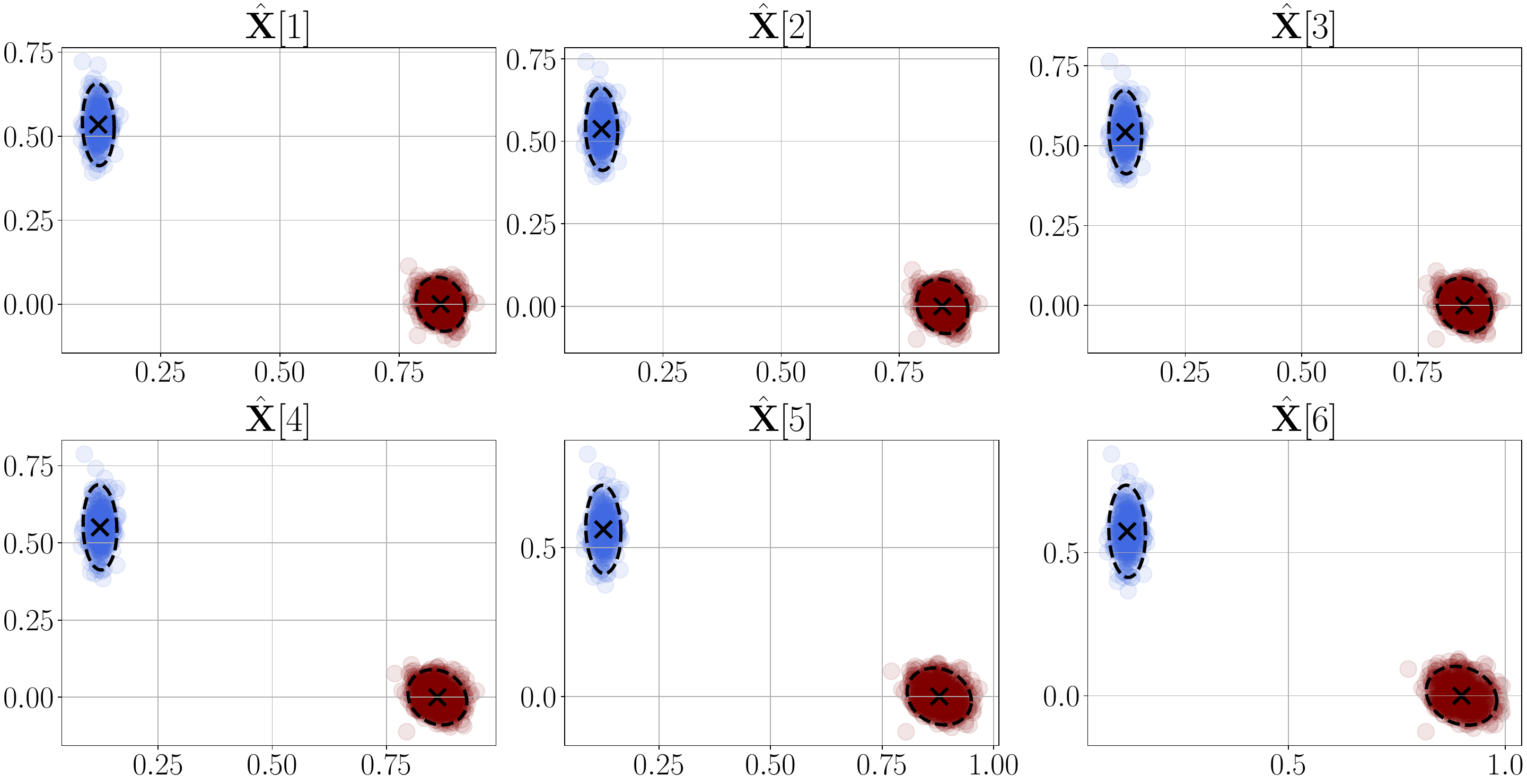}
    \caption{Estimated (blue and red circles) and true latent positions (black crosses) for a two-block SBM with $\mathcal{N}(1,0.01)$ weights. The 95\% confidence level sets for the limiting normal distributions, as given by Corollary \ref{corollary:tcl_sbm}, are shown as dashed lines. Once more, the accuracy of the theoretical predictions is apparent.}
    \label{fig:SBM_normal_example}
\end{figure}

\subsection{Discriminative power of higher-order spectral embeddings}
\label{sec:discriminative_power}
To demonstrate the ability of our model to distinguish between communities, we simulate a two-block weighted SBM consisting of \( N = 2000 \) nodes. Edges are established independently with probability \( p = 0.5 \). Edge weights follow a Gaussian distribution with mean \( \mu = 5 \) and standard deviation \( \sigma = 0.1 \), except for those between the second block of nodes, indexed \( i = 1001, \ldots, 2000 \), where the weights instead follow a Poisson distribution with rate parameter \( \lambda = 5.1 \).

For the same reasons as in the previous example, in this setting the matrix \( \bbX[k] \) has at most two distinct columns for each \( k \). Figure~\ref{fig:embedding_norm_poisson} displays the estimated embeddings \( \hbx_i[k] \) obtained via the ASE of $\bbW^{(k)}$ for \( k = 1, 2, 3 \) with embedding dimension \( d = 2 \), where nodes are color-coded by community membership. The 95\% confidence level sets for the limiting normal distributions, as given by Theorem \ref{theorem:tcl}, are shown in Figure \ref{fig:embedding_norm_poisson} as dashed lines. Notably, for each value of $k$, the simulated points closely follow the normal distribution predicted by the theorem in the large-$N$ limit, providing empirical support for the asymptotic result.

Observe that for \( k = 1 \), the node embeddings are nearly indistinguishable across the two communities. This is expected, as the vectors \( \hbx_i[1] \) cluster around the points \( (\sqrt{\mu p}, 0)^\top  \approx (1.58, 0)^\top \) for the gaussian community, and \( (\sqrt{\mu p}, \sqrt{p(\lambda-\mu)})^\top \approx (1.58, 0,22)^\top \) for the Poisson community, reflecting the almost identical expected weight of edges in both distributions. When \( k = 2 \), the embeddings begin to reveal community structure, as shown in the center panel of Figure~\ref{fig:embedding_norm_poisson}. In this simplified example, closed-form expressions for higher-order embeddings can be readily derived as well. Assuming, without loss of generality, that the embeddings \( \bbx_i[k] \) lie along the $x$-axis for \( i = 1, \ldots, 1000 \), we obtain:
\[
\mathbf{x}_i[k] = 
\begin{cases}
\left(\sqrt{p m_k^N}, 0\right)^\top, & i \leq 1000, \\
\left(\sqrt{p m_k^N}, \sqrt{p(m_k^P - m_k^N)}\right)^\top, & i > 1000,
\end{cases}
\]
where $m_k^N$ and $m_k^P$ are the $k$-th moments of the univariate $\ccalN(\mu,\sigma^2)$ and Poisson distributions, respectively.
For $k=2$, these correspond to the approximate coordinates \( (3.54, 0)^\top \) and \( (3.54, 1.75)^\top \) for the two groups. However, since the confidence sets of the limiting multivariate Gaussians still overlap, the separation between the communities is not yet clearly pronounced. At \( k = 3 \), the confidence sets no longer intersect, and the embeddings exhibit a clear separation between the two blocks, as shown in the right panel of Figure~\ref{fig:embedding_norm_poisson}.

\begin{figure}[t]
	\centering
	\includegraphics[width=\linewidth]{./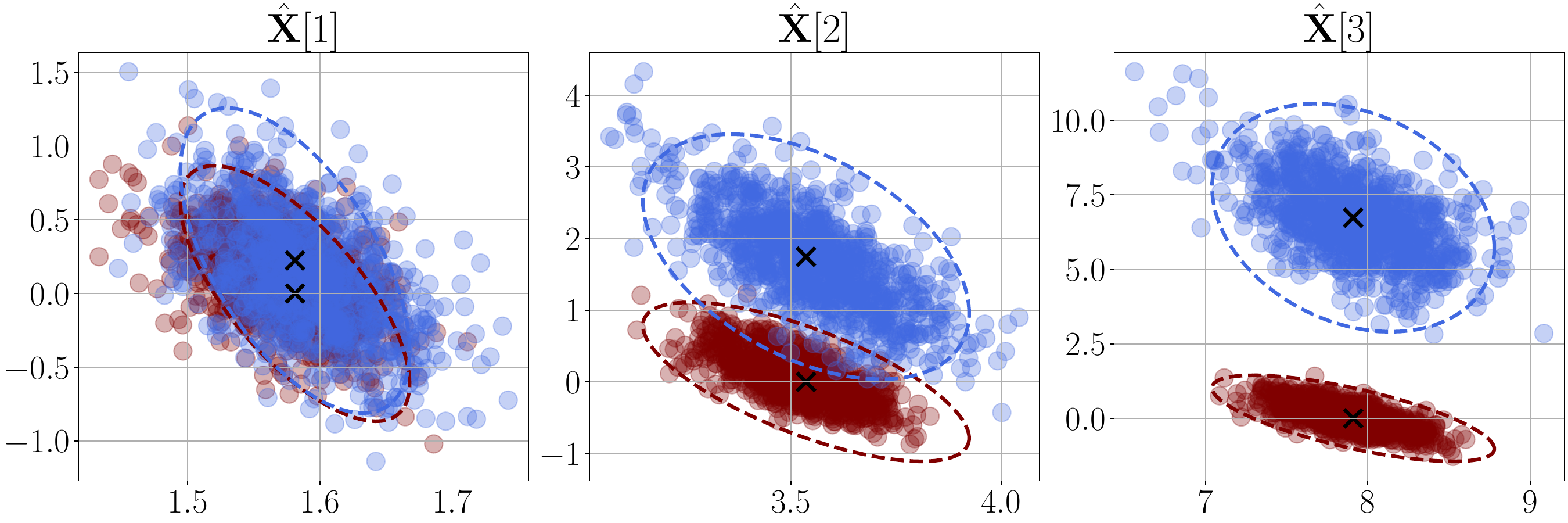}
	\caption{Theoretical latent positions (black crosses) and ASE embeddings of $\bbW^{(k)}$ for Gaussian ($\mu=5$ and $\sigma=0.1$; in red) and Poisson ($\lambda=5.1$; in blue) distributed weights for $d=2$ and $k=1$ (left), $k=2$ (center), and $k=3$ (right). Nodes with different weight distributions are clearly revealed for $k=3$, but they overlap for $k=1$. The 95\% confidence level sets for the limiting normal distributions, as given by Theorem \ref{theorem:tcl}, are shown as dashed lines.
    }
	\vspace{-0.3cm}
	\label{fig:embedding_norm_poisson}
\end{figure}

Recall that the model proposed in~\cite{gallagher2023spectral} is limited to embeddings derived solely from the mean of the weight distribution, specifically, the ASE of the matrix \( \bbW^{(1)} = \bbW \). In the weighted SBM described above, both blocks are constructed to have almost identical expected edge weights. As a consequence, the embeddings \( \hbx_i[1] \) obtained via an ASE of \( \bbW \) are concentrated around the same point in latent space regardless of the underlying block membership, and meaningful separation between the two blocks begins to emerge only when considering higher-order embeddings. Therefore, approaches restricted to first-order information, such as~\cite{gallagher2023spectral}, are inherently incapable of discriminating between communities whose structure is encoded in the higher-order moments of edge weights.

\subsection{Accuracy of moment recovery with varying number of nodes}

Accurate estimation of higher-order moments is known to be highly sensitive to the amount of data available. In particular, the variance of moment estimators tends to grow rapidly with the order of the moment, requiring markedly larger sample sizes to obtain stable estimates \cite{bourin1998efficiency}. This is because higher-order moments are dominated by extreme values in the data, making them especially prone to noise and outliers. Admittedly, this a challenge facing the proposed model.

To illustrate how this behavior affects the WRDPG model, we simulate a two-block weighted SBM with \( N = 2000 \) nodes, where 70\% of them are assigned to community 1. The interconnection probabilities are given by the matrix
\[
\bbB = \begin{pmatrix}
0.7 & 0.3\\
0.3 & 0.5
\end{pmatrix},
\]
and edge weights are sampled from a Gaussian distribution with mean $\mu=1$ and standard deviation $\sigma=0.5$.

Figure~\ref{fig:embeddings_sbm_N=2000} depicts the estimated latent position matrices \( \hbX[k] \) corresponding to moments \( k = 1 \) and \( k = 4 \). To assess the quality of the embeddings, we compare the entries of the empirical moment matrices \( \hbM[k] = \hbX[k] \hbX^\top[k] \) with the true moments. The results, shown in the first two columns of Figure \ref{fig:embeddings_sbm_N=2000}, indicate that the embeddings closely follow a mixture of multivariate Gaussian distributions and that, as expected, the accuracy of \( \hbM[k] \) degrades as the moment order increases.

\begin{figure}[t]
    \centering
        \includegraphics[width=0.48\linewidth]{./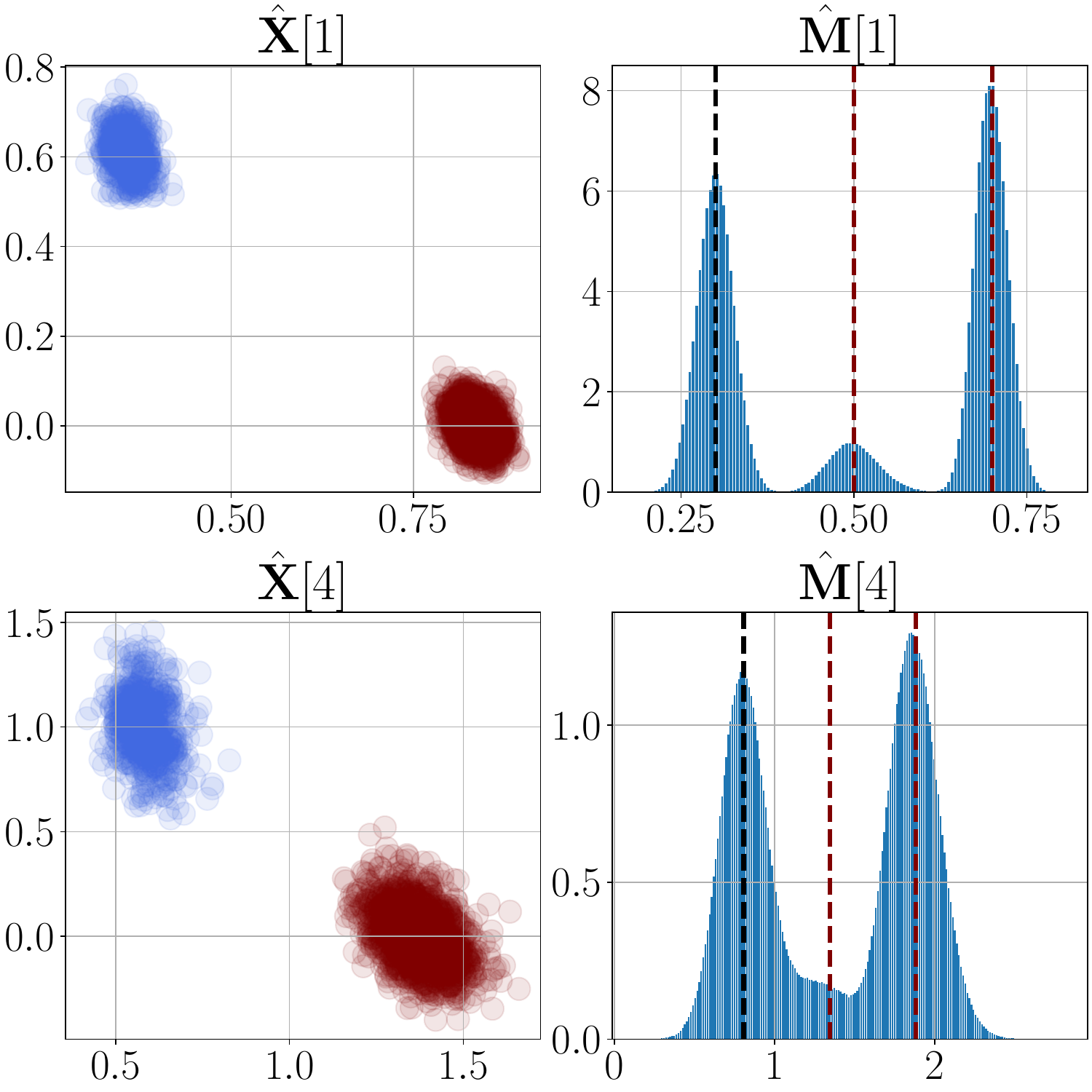}
        \includegraphics[width=0.48\linewidth]{./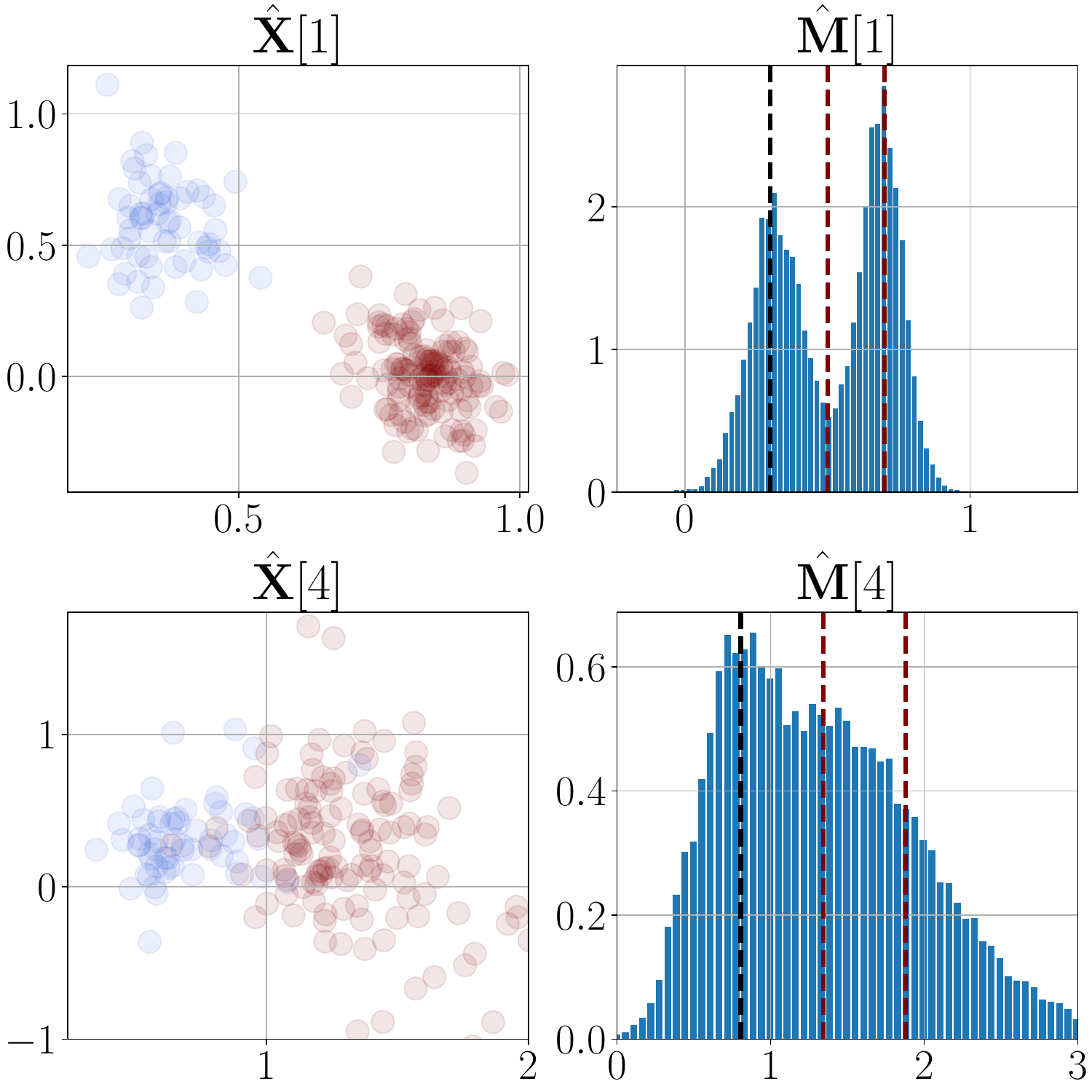}
    \caption{Inference results for a two-class SBM with Gaussian weights and $N=2000$ nodes (first and second columns) and $N=200$ nodes (third and fourth columns). The plots on the second and fourth columns show histograms of the estimated $\hbM[k]$ and the vertical lines indicate the true moments. For $N=2000$ embeddings and moments are accurately estimated up to order $k=4$, while accuracy degrades in the $N=200$ setting. Also, for fixed sample size performance degrades as the order increases from $k=1$ (top row) to $k=4$ (bottom row).}
    \label{fig:embeddings_sbm_N=2000}
\end{figure}

This observation is further reinforced in a second experiment presented in the third and fourth columns of Figure \ref{fig:embeddings_sbm_N=2000}, where the number of nodes (i.e., the sample size) is reduced to \( N = 200 \). Although the embeddings corresponding to \( k = 1 \) remain fairly accurate, the limited sample size noticeably impairs the estimation for \( k = 4 \), illustrating the practical limitations imposed by high-order moment estimation. These examples underscore the importance of accounting for finite-sample effects when working with moment-based estimators in network models such as WRDPG.
    
\section{Asymptotic results}
\label{sec:asymptotic_results}

Recall the ASE estimator introduced in Section \ref{ssec:estimation}.  Here we establish the asymptotic results that, for a fixed index $k$, characterize the behavior of the estimated latent positions $\hbX[k]$ when the number of nodes $N$ goes to infinity. Given the inherent rotational ambiguity in the WRDPG model, the latent position sequence is estimable up to an unknown orthogonal transformation. Thus, our main results (consistency and asymptotic normality) are stated in terms of a sequence of orthogonal matrices $\bbQ_k \in O(d)$.

For these results to hold, we make the following two assumptions. Here, $F$ is an inner-product distribution as in Definition \ref{def:inner_product_dist}, $\bbX_k := \{\bbX[k]\}_{k\geq 0}$ is a sequence of latent positions matrices, and $\bbW$ is the adjacency matrix of a WRDPG graph, i.e., $(\bbW,\bbX_k) \sim \mathrm{WRDPG}(F)$ as in Definition \ref{def:wrdpg_model}.

\begin{assumption}\label{assumption:full_Rank}
Let $\{\bbx[k]\}_{k\geq 0} \sim F$. Then for each $k$, the second moment matrix $\bbDelta_k = \E{\bbx[k]\bbx^\top[k]}$ has full rank $d$.
\end{assumption}

\begin{assumption}[Sub-Weibull weights]\label{assumption:bounded_rv}
There exists a constant $\theta > 0$ such that for each pair $1 \leq i<j \leq N$, there is a constant $C_{ij}>0$ satisfying:
\begin{displaymath}
\P{|W_{ij}| \geq t | \bbX_k} \leq 2 \exp\left(-\left(\frac{t}{C_{ij}}\right)^{1/\theta}\right) \text{ for all } t \geq 0.
\end{displaymath}
\end{assumption}

\begin{remark}\label{remark:weibull}
Since the class of sub-Weibull random variables (rvs) is closed under multiplication \cite[Proposition 2.3]{vladimirova2020sub}, it follows that $W_{ij}^k$ is also sub-Weibull, with parameter $k\theta$. This will allow us to establish our asymptotic results by invoking a concentration bound for sums of sub-Weibull variables, which is proven in Proposition~\ref{prop:concentration_sub_weibull}.

\end{remark}

\subsection{Consistency}
\label{sec:consistency}

We first establish the consistency of the estimated latent positions (up to an unknown orthogonal transformation). 

\begin{theorem}\label{theorem:consistency}
Let $F$ be an inner product distribution satisfying Assumption \ref{assumption:full_Rank} and consider $(\bbW,\bbX_k) \sim \mathrm{WRDPG}(F)$ satisfying Assumption \ref{assumption:bounded_rv}. Then, for each index $k$ there exists an orthogonal matrix $\bbQ_k \in O(d)$ such that

\begin{displaymath} 
    \twoToInf{\hbX[k] \bbQ_k - \bbX[k]} = \OP{N^{-1/2}\log^{k\theta}{N}} .
\end{displaymath}
\end{theorem}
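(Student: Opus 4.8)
The plan is to follow the spectral‑perturbation route for the adjacency spectral embedding, applied to the entrywise power matrix $\bbW^{(k)}$, upgraded to sub‑Weibull weights and to the stronger $2\to\infty$ metric. Fix $k$. Let $\bbM_k := \bbX[k]\bbX^\top[k]$ be the population rank‑$d$ PSD moment matrix, with eigendecomposition $\bbM_k = \bbU\bbS\bbU^\top$ ($\bbU\in\reals^{N\times d}$, $\bbS = \diag(\lambda_1,\dots,\lambda_d)$), so that (on the high‑probability event that $\bbM_k$ has rank $d$) $\bbX[k] = \bbU\bbS^{1/2}\bbR$ for some $\bbR\in O(d)$; and let $\bbW^{(k)} = \hbU\hbS\hbU^\top + \hbU_\perp\hbS_\perp\hbU_\perp^\top$ be the eigendecomposition of the observed matrix, with $\hbX[k] = \hbU\hbS^{1/2}$ as in Section~\ref{ssec:estimation}. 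The alignment will be $\bbQ_k := \bbW^\star\bbR$, where $\bbW^\star\in O(d)$ is the Procrustes rotation from the SVD of $\hbU^\top\bbU$; note the right factor $\bbR$ is immaterial since the $2\to\infty$ norm is invariant under right multiplication by an orthogonal matrix. A first step controls the population spectrum: by Assumption~\ref{assumption:full_Rank} and the strong law of large numbers $\tfrac1N\bbX[k]^\top\bbX[k]\to\bbDelta_k$, which is full rank, so the nonzero eigenvalues of $\bbM_k$ satisfy $\lambda_j = \EquivP{N}$ and there is an $\EquivP{N}$ gap separating the signal from the zero bulk; the same limit yields the delocalization estimate $\twoToInf{\bbU} = \OP{N^{-1/2}}$, needed below.

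The main probabilistic input is a bound on the perturbation $\bbE_k := \bbW^{(k)} - \bbM_k$. Split $\bbE_k = (\bbW^{(k)} - \E{\bbW^{(k)}\mid\bbX_k}) + (\E{\bbW^{(k)}\mid\bbX_k} - \bbM_k)$. The second summand is the diagonal matrix with entries $-\twonorm{\bbx_i[k]}^2$ (off‑diagonal conditional expectations match $\bbM_k$ by Definition~\ref{def:wrdpg_model}, while $W_{ii}\equiv 0$), hence has spectral norm $\OP{1}$. For the centered summand, Remark~\ref{remark:weibull} gives that each entry $W_{ij}^k$ is sub‑Weibull with parameter $k\theta$; combining the concentration bound for sums of sub‑Weibull variables of Proposition~\ref{prop:concentration_sub_weibull} with a truncation argument (threshold $\asymp\log^{k\theta}N$, so the union bound over the $O(N^2)$ entries controls the truncation error) and either an $\epsilon$‑net over the sphere or a matrix‑Bernstein inequality for the truncated matrix should yield
\begin{displaymath}
\spectral{\bbE_k} = \OP{N^{1/2}\log^{k\theta}N},
\end{displaymath}
the $\log^{k\theta}N$ factor being precisely the price of the sub‑Weibull tails. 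Weyl's inequality transfers the gap to $\bbW^{(k)}$, so $\hat\lambda_d = \EquivP{N}$ (in particular the top $d$ eigenvalues are nonnegative, cf.\ Appendix~\ref{app:eigenvalues}), and all inverse‑eigenvalue factors below are $\OP{N^{-1}}$.

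Finally I would run the $2\to\infty$ eigenvector‑perturbation analysis of~\cite{cape2019twoToInfinity} (or an equivalent leave‑one‑out scheme). The backbone identity
\begin{displaymath}
\hbU\bbW^\star - \bbU = (\bbI - \bbU\bbU^\top)\bbE_k\bbU\bbS^{-1} + \text{(remainder)}
\end{displaymath}
reduces matters to bounding $\twoToInf{(\bbI-\bbU\bbU^\top)\bbE_k\bbU\bbS^{-1}} \le \big(\twoToInf{\bbE_k\bbU} + \twoToInf{\bbU}\spectral{\bbU^\top\bbE_k\bbU}\big)\spectral{\bbS^{-1}}$, where $\twoToInf{\bbE_k\bbU}$ is handled by a \emph{row‑wise} application of Proposition~\ref{prop:concentration_sub_weibull} — each row of $\bbE_k$ being a sum of $N$ independent centered sub‑Weibull terms weighted by the delocalized columns of $\bbU$ — followed by a union bound over the $N$ rows, which is where $\log^{k\theta}N$ re‑enters the uniform‑in‑$i$ estimate, while the remainder is quadratic in $\spectral{\bbE_k}/\hat\lambda_d$. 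Writing $\hbX[k]\bbQ_k - \bbX[k] = \big[(\hbU\bbW^\star - \bbU)\bbS^{1/2} + \hbU(\hbS^{1/2}\bbW^\star - \bbW^\star\bbS^{1/2})\big]\bbR$, bounding the second bracketed term via $\spectral{\hbS-\bbS}\le\spectral{\bbE_k}$, the $\sin\Theta$‑type estimate $\spectral{\hbU^\top\bbU - \bbW^{\star\top}} = \OP{\spectral{\bbE_k}^2/N^2}$, and the $\OP{N^{-1/2}}$ Lipschitz constant of the square root on the relevant spectral range, and using $\spectral{\bbS^{1/2}} = \OP{N^{1/2}}$ together with $\twoToInf{\hbU} = \OP{N^{-1/2}}$, one assembles
\begin{displaymath}
\twoToInf{\hbX[k]\bbQ_k - \bbX[k]} = \OP{N^{-1/2}\log^{k\theta}N}.
\end{displaymath}

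The step I expect to require the most care — and the one going genuinely beyond~\cite{marenco2021tsipn}, which treats bounded weights — is making the $2\to\infty$/leave‑one‑out argument go through under only the sub‑Weibull tail control of Assumption~\ref{assumption:bounded_rv}: one must re‑establish the row‑wise concentration of $\bbE_k\bbU$ and of the leave‑one‑out perturbations from Proposition~\ref{prop:concentration_sub_weibull} alone, and track how its sub‑Weibull correction term propagates through the (several) iterations of the expansion so that the final exponent on $\log N$ is exactly $k\theta$ and not larger.
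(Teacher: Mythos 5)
Your proposal follows the same architecture as the paper's proof: the same backbone decomposition with dominant term $(\bbI-\bbU_k\bbU_k^\top)(\bbW^{(k)}-\bbM_k)\bbU_k\bbD_k^{-1/2}$ plus Procrustes-type remainders, the same row-wise application of the sub-Weibull concentration bound of Proposition~\ref{prop:concentration_sub_weibull} to obtain $\twoToInf{(\bbW^{(k)}-\bbM_k)\bbU_k}=\OP{\log^{k\theta}N}$, and the same spectral facts $\lambda_d(\bbM_k)=\EquivP{N}$ and $\twoToInf{\bbU_k}=\OP{N^{-1/2}}$. So this is not a different route; the issue is quantitative.

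The gap sits in the spectral-norm input you feed to the remainder analysis. You posit $\spectral{\bbW^{(k)}-\bbM_k}=\OP{N^{1/2}\log^{k\theta}N}$, which is indeed what a matrix-Bernstein or $\epsilon$-net argument delivers for a symmetric matrix with independent sub-Weibull entries of order-one variance. But with that bound the cross term --- the paper's $\bbR_{k_2}=(\bbI-\bbU_k\bbU_k^\top)(\bbW_k-\bbM_k)(\hbU_k-\bbU_k\bbS_k)\hbD_k^{-1/2}$ --- is only $\OP{N^{-1/2}\log^{2k\theta}N}$: Davis--Kahan gives $\spectral{\hbU_k-\bbU_k\bbS_k}=\OP{\spectral{\bbW_k-\bbM_k}/N}$, so the product of the two perturbation factors contributes $\log^{2k\theta}N$ at the scale $N^{-1/2}$, i.e., the \emph{same} order in $N$ as the leading term but with a doubled log exponent. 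Your sketch therefore proves at best $\OP{N^{-1/2}\log^{2k\theta}N}$, not the stated rate; you flag exactly this danger in your last paragraph but do not resolve it. The paper closes it by proving the much stronger, $\sqrt{N}$-free estimate $\spectral{\bbW^{(k)}-\bbM_k}=\OP{\log^{k\theta}N}$ (Lemma~\ref{lemma:A-P_spectral_norm}, via Tropp's master tail bound combined with the sub-Weibull matrix-MGF estimate of Lemma~\ref{lemma:mgf_weibull}), under which all four remainders of Proposition~\ref{prop:remainders} drop to $\OP{N^{-1}\log^{k\theta}N}$ or below. Note that this dimension-free bound is substantially stronger than the conventional $\Theta(\sqrt{N})$ scaling your instinct (correctly, in my view) anticipates for such a perturbation matrix, so you cannot simply cite it as folklore: to complete your argument at the advertised exponent you must either establish a comparably strong spectral bound or avoid the crude product bound on the cross term altogether, e.g., by a leave-one-out construction that decouples $\bbW^{(k)}-\bbM_k$ from $\hbU_k$ row by row. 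As written, the log exponent does not close.
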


Theorem~\ref{theorem:consistency} generalizes a previous consistency result for the WRDPG model~\cite[Theorem 1]{marenco2021tsipn} in two key aspects. First, it extends the result to accommodate unbounded edge weights. Second, it provides a bound on the difference between the estimated and actual latent positions (up to an unknown orthogonal transformation) in terms of the $2\to \infty$ norm rather than the Frobenius norm. As noted in \cite{cape2019twoToInfinity}, the $2\to \infty$ norm offers  tighter control over the estimation error. This can be seen from the identity
\begin{displaymath}
    \twoToInf{\bbA} = \max_{1\leq i \leq n} \twonorm{\bba_{i}^\top},
\end{displaymath}
where $\bba_{i}^\top$ denotes the $i$-th row of $\bbA\in \reals^{n\times m}$. This means that the $2\to \infty$ norm corresponds to the maximum Euclidean norm of the rows of $\bbA$, so Theorem \ref{theorem:consistency} bounds the maximum error between the estimated and true latent vectors for \emph{any} node in a WRDPG graph. In contrast, controlling the Frobenius norm does not guarantee such a uniform bound.

Our consistency result aligns with previous findings for the vanilla RDPG model \cite{priebe2018survey,lyzinski2017community}, in that we provide a bound for the $2\to \infty$ norm. It also generalizes the consistency result in \cite{gallagher2023spectral}, as their theorem applies only to $\hbX[1]$, meaning they establish consistency solely for the estimated latent positions associated with the mean. Nevertheless, our proof follows their methodology, adapting the necessary arguments to accommodate our more general setup. We note that their proof scheme closely follows that of the RDPG model, as first established in \cite[Theorem 15]{lyzinski2017community}. Additionally, it is worth mentioning that a more general result on matrix perturbations can be obtained using similar techniques; see \cite{cape2019twoToInfinity}. 

As in the unweighted case, the proof of Theorem \ref{theorem:consistency} relies on expressing the difference between $\hbX[k] \bbQ_k$ and $\bbX[k]$ as the sum of a dominant term and a series of remainder terms. The following proposition shows that these remainders are of a lower order than the dominant term. Its proof is deferred to Appendix \ref{app:technical_lemmas}.

\begin{proposition}\label{prop:remainders}
    Let $(\bbW,\bbX_k) \sim \mathrm{WRDPG}(F)$ as in Theorem \ref{theorem:consistency}. For each fixed integer $k\geq 0$, let $\bbW_k := \bbW^{(k)}$. Also let $\bbM_k := \bbX[k]\bbX^\top[k]$ and denote its spectral decomposition as $\bbM_k = \bbU_k\bbD_k\bbU_k^\top$, with $\bbD_k \in \reals^{d\times d}$ and $\bbU_k \in \reals^{N\times d}$. For $\bbS_k \in O(d)$,  define $\bbR_{k_1}$, $\bbR_{k_2}$, $\bbR_{k_3}$ and $\bbR_{k_4}$ as
    \begin{align*}
        \bbR_{k_1} &= \bbU_k\left(\bbU_k^\top \hbU_k\hbD_k^{1/2} -\bbD_k^{1/2} \bbS_k  \right),\\
        \bbR_{k_2} &=\left(\bbI- \bbU_k\bbU_k^\top \right)\left(\bbW_k- \bbM_k \right)\left(\hbU_k -\bbU_k\bbS_k  \right)\hbD_k^{-1/2},\\
        \bbR_{k_3} &=- \bbU_k\bbU_k^\top\left(\bbW_k- \bbM_k \right)\bbU_k\bbS_k  \hbD_k^{-1/2},\\
        \bbR_{k_4} &=\left(\bbW_k- \bbM_k \right)\bbU_k\left(\bbS_k\hbD_k^{-1/2} -\bbD_k^{-1/2}\bbS_k  \right).
    \end{align*}
    Then there exists a choice of $\bbS_k \in O(d)$ such that the following holds:
    \begin{align*}
        \twoToInf{\bbR_{k_1}} &= \OP{N^{-1} \log^{k\theta} N}, \\
        \twoToInf{\bbR_{k_2}} &= \OP{N^{-3/2} \log^{2k\theta}  N},\\
        \twoToInf{\bbR_{k_3}} &= \OP{N^{-1} \log^{k\theta} N},\\
        \twoToInf{\bbR_{k_4}} &= \OP{N^{-3/2} \log^{2k\theta} N}.
    \end{align*}
\end{proposition}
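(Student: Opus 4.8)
The plan is to control each of the four remainder matrices $\bbR_{k_1},\dots,\bbR_{k_4}$ by assembling a handful of quantitative spectral ingredients and then chaining them together with the submultiplicativity relations for the $2\to\infty$ norm recorded in the notational section. The key quantitative inputs I would establish first are: (a) an eigenvalue lower bound $\lambda_d(\bbM_k) = \EquivP{N}$ and the matching $\spectral{\bbM_k} = \EquivP{N}$, coming from Assumption~\ref{assumption:full_Rank} together with a law of large numbers for $\tfrac1N\bbX[k]^\top\bbX[k]\to\bbDelta_k$; (b) a perturbation bound on the \emph{error matrix} $\bbE_k := \bbW_k - \bbM_k$, namely $\spectral{\bbE_k} = \OP{N^{1/2}\log^{k\theta}N}$, which follows from a matrix-concentration/matrix-Bernstein argument applied to the independent, mean-zero, sub-Weibull (parameter $k\theta$, by Remark~\ref{remark:weibull}) entries of $\bbE_k$ — this is exactly where Proposition~\ref{prop:concentration_sub_weibull} enters; (c) a $2\to\infty$-type bound $\twoToInf{\bbE_k\bbU_k} = \OP{\log^{k\theta}N}$, obtained by controlling each row $\bbe_{k,i}^\top\bbU_k$ as a sum of $N$ independent sub-Weibull terms and taking a union bound over the $N$ rows (the polylog factor absorbs the union bound, as in the definition of $\OP{\cdot}$); and (d) the standard eigenvector bookkeeping, including a Davis–Kahan bound $\spectral{\hbU_k - \bbU_k\bbS_k} = \OP{N^{-1/2}\log^{k\theta}N}$ for a suitable $\bbS_k\in O(d)$, and $\twoToInf{\bbU_k} = \OP{N^{-1/2}}$ (since $\bbU_k = \bbX[k]\bbD_k^{-1/2}\bbR$ for some rotation, each row has norm $\OP{N^{-1/2}}$ given bounded latent positions and $\lambda_d(\bbD_k)=\EquivP N$).

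With these in hand the four bounds are routine chases. For $\bbR_{k_3} = -\bbU_k\bbU_k^\top\bbE_k\bbU_k\bbS_k\hbD_k^{-1/2}$ I would write $\twoToInf{\bbR_{k_3}} \le \twoToInf{\bbU_k}\,\spectral{\bbU_k^\top\bbE_k\bbU_k}\,\spectral{\hbD_k^{-1/2}} = \OP{N^{-1/2}}\cdot\OP{N^{1/2}\log^{k\theta}N}\cdot\OP{N^{-1/2}} = \OP{N^{-1}\log^{k\theta}N}$, using (c)/(b) for the middle factor and $\spectral{\hbD_k^{-1/2}}=\OP{N^{-1/2}}$ (Weyl plus the eigenvalue bound (a), which also guarantees $\hbD_k\succ 0$ so $\hbX[k]$ is well-defined, per Appendix~\ref{app:eigenvalues}). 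For $\bbR_{k_2} = (\bbI-\bbU_k\bbU_k^\top)\bbE_k(\hbU_k-\bbU_k\bbS_k)\hbD_k^{-1/2}$, I would bound $\twoToInf{\bbR_{k_2}} \le \twoToInf{(\bbI-\bbU_k\bbU_k^\top)\bbE_k}\,\spectral{\hbU_k-\bbU_k\bbS_k}\,\spectral{\hbD_k^{-1/2}}$; the first factor is $\le \twoToInf{\bbE_k} + \twoToInf{\bbU_k}\spectral{\bbU_k^\top\bbE_k} = \OP{\log^{k\theta}N}$ (the rowwise sub-Weibull bound plus a union bound again), and multiplying by the Davis–Kahan factor $\OP{N^{-1/2}\log^{k\theta}N}$ and by $\OP{N^{-1/2}}$ gives $\OP{N^{-3/2}\log^{2k\theta}N}$. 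For $\bbR_{k_4} = \bbE_k\bbU_k(\bbS_k\hbD_k^{-1/2}-\bbD_k^{-1/2}\bbS_k)$ the extra smallness comes from $\spectral{\bbS_k\hbD_k^{-1/2}-\bbD_k^{-1/2}\bbS_k} = \OP{N^{-3/2}\log^{k\theta}N}$ — this is the standard estimate $\|\hbD_k^{-1/2}-\bbS_k^\top\bbD_k^{-1/2}\bbS_k\| \lesssim \lambda_d(\bbM_k)^{-3/2}\|\hbD_k - \bbS_k^\top\bbD_k\bbS_k\|$ combined with $\|\hbD_k-\bbS_k^\top\bbD_k\bbS_k\|\le 2\|\bbE_k\|$ — so $\twoToInf{\bbR_{k_4}} \le \twoToInf{\bbE_k\bbU_k}\cdot\OP{N^{-3/2}\log^{k\theta}N} = \OP{N^{-3/2}\log^{2k\theta}N}$ via (c). For $\bbR_{k_1} = \bbU_k(\bbU_k^\top\hbU_k\hbD_k^{1/2}-\bbD_k^{1/2}\bbS_k)$, I would use $\twoToInf{\bbR_{k_1}}\le\twoToInf{\bbU_k}\,\spectral{\bbU_k^\top\hbU_k\hbD_k^{1/2}-\bbD_k^{1/2}\bbS_k}$ and reduce the spectral factor — after the standard manipulation relating $\bbU_k^\top\hbU_k\hbD_k^{1/2}$ to $\bbD_k^{1/2}\bbS_k$ through $\bbU_k^\top\bbE_k\hbU_k$ and the near-orthogonality $\|\bbU_k^\top\hbU_k - \bbS_k\|=\OP{N^{-1}\log^{2k\theta}N}$ — to something of size $\OP{N^{-1/2}\log^{k\theta}N}$, yielding $\OP{N^{-1}\log^{k\theta}N}$.

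The main obstacle is item (c) — obtaining the $\OP{\log^{k\theta}N}$ (rather than merely $\OP{\sqrt{\log N}\cdot\log^{k\theta}N}$ or worse) control on the rowwise quantities $\bbe_{k,i}^\top\bbU_k$ and on $(\bbI-\bbU_k\bbU_k^\top)\bbE_k$ uniformly over the $N$ rows. Each such row is a sum of $N$ independent, mean-zero, sub-Weibull$(k\theta)$ summands with coefficients of size $O(N^{-1/2})$ (the entries of $\bbU_k$), so by Proposition~\ref{prop:concentration_sub_weibull} it concentrates at scale $O(\log^{k\theta}N)$ with failure probability that is superpolynomially small in $N$; the union bound over $N$ rows is then harmless in the $\OP{\cdot}$ sense. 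Care is needed because the $C_{ij}$ in Assumption~\ref{assumption:bounded_rv} are node-dependent, so I would either assume (or note it follows from compact support of the relevant moment data via the admissibility of $\{\bbx_i^\top[m]\bbx_j[m]\}_m$) that $\sup_{ij}C_{ij}$ is bounded, and verify that the concentration proposition's hypotheses are met with the Orlicz-norm parameters scaling correctly in $k$; this is also the place where the exponent $k\theta$ in the final rate is generated. Everything else is deterministic linear algebra with the norm inequalities already displayed in the paper.
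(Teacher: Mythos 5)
Your overall architecture mirrors the paper's (eigenvalue growth $\EquivP{N}$, concentration of the error matrix, Procrustes/Davis--Kahan alignment, then a $2\to\infty$ norm chase), but there is a genuine quantitative gap in your ingredient (b): you take $\spectral{\bbW_k-\bbM_k}=\OP{N^{1/2}\log^{k\theta}N}$, whereas the paper's Lemma~\ref{lemma:A-P_spectral_norm} establishes, and crucially uses, the much stronger bound $\spectral{\bbW_k-\bbM_k}=\OP{\log^{k\theta}N}$ (via Tropp's master tail bound applied to the independent rank-two matrices $Y_{ij}(\bbe_i\bbe_j^\top+\bbe_j\bbe_i^\top)$, combined with the sub-Weibull MGF control of Lemma~\ref{lemma:mgf_weibull}). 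All four claimed rates depend on this. With your weaker bound, Davis--Kahan only gives $\spectral{\hbU_k-\bbU_k\bbS_k}=\OP{N^{-1/2}\log^{k\theta}N}$ rather than the $\OP{N^{-1}\log^{k\theta}N}$ the paper gets from Propositions~\ref{prop:procrustes} and~\ref{prop:vince_2}, and the commutator bounds of Proposition~\ref{prop:gallagher_1} each lose a factor of $N^{1/2}$: your own estimate $\lambda_d(\bbM_k)^{-3/2}\,\|\hbD_k-\bbS_k^\top\bbD_k\bbS_k\|\lesssim N^{-3/2}\spectral{\bbW_k-\bbM_k}$ then yields $\OP{N^{-1}\log^{k\theta}N}$, not the $\OP{N^{-3/2}\log^{k\theta}N}$ you quote, so your intermediate claims are mutually inconsistent. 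Chasing your numbers through, $\bbR_{k_2}$ and $\bbR_{k_4}$ come out $\OP{N^{-1}\log^{2k\theta}N}$ instead of $\OP{N^{-3/2}\log^{2k\theta}N}$, and the proposition is not reached.

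Two further local errors. First, $\twoToInf{\bbW_k-\bbM_k}$ is \emph{not} $\OP{\log^{k\theta}N}$: it is the maximum Euclidean norm of a row of an $N\times N$ matrix with $O(1)$-scale independent entries, hence of order $N^{1/2}$ up to polylogarithms; only the compressed quantity $\twoToInf{\left(\bbW_k-\bbM_k\right)\bbU_k}$, whose rows are $d$-dimensional sums with $O(N^{-1/2})$ coefficients, enjoys the polylog bound --- that is your (correct) ingredient (c) and matches the paper. Second, your $\bbR_{k_3}$ chain contains an arithmetic slip: $\OP{N^{-1/2}}\cdot\OP{N^{1/2}\log^{k\theta}N}\cdot\OP{N^{-1/2}}=\OP{N^{-1/2}\log^{k\theta}N}$, not $\OP{N^{-1}\log^{k\theta}N}$; to get the stated rate the middle factor must be $\OP{\log^{k\theta}N}$, which the paper obtains by concentrating the $d^2$ bilinear forms in $\bbU_k^\top\left(\bbW_k-\bbM_k\right)\bbU_k$ directly (Proposition~\ref{prop:vince_1}) rather than passing through $\spectral{\bbW_k-\bbM_k}$. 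Repairing the proposal therefore requires importing the sharp spectral concentration of Lemma~\ref{lemma:A-P_spectral_norm} (or an equivalent); once that is in place, the rest of your bookkeeping goes through essentially as in the paper.
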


Using this proposition, we can prove Theorem~\ref{theorem:consistency}.
\begin{proof}[\textbf{Proof of Theorem~\ref{theorem:consistency}}]
     First, assume $\bbX[k] = \bbU_k\bbD_k^{1/2}$. Fix $\bbS_k \in O(d)$ such that Proposition \ref{prop:remainders} holds. Since $\hbX[k] = \hbU_k\hbD_k^{1/2}$, we have that:
    \begin{align*}
        \hbX[k] - \bbX[k]\bbS_k =&\, \hbU_k\hbD_k^{1/2} - \bbU_k\bbD_k^{1/2}\bbS_k \\
            =& \hbU_k\hbD_k^{1/2} - \bbU_k\bbU_k^\top\hbU_k\hbD_k^{1/2} +\bbU_k\left(\bbU_k^\top\hbU_k\hbD_k^{1/2} - \bbD_k^{1/2}\bbS_k \right)\\
        =&\, \hbU_k\hbD_k^{1/2} - \bbU_k\bbU_k^\top\hbU_k\hbD_k^{1/2} + \bbR_{k_1}.
    \end{align*}
    Note that $\hbU_k\hbD_k=\bbW_k\hbU_k$, which implies that $\hbU_k\hbD_k^{1/2}=\bbW_k\hbU_k\hbD_k^{-1/2}$. So:
    \begin{align}
        \hbX[k] - \bbX[k]\bbS_k =&\, \bbW_k\hbU_k\hbD_k^{-1/2} - \bbU_k\bbU_k^\top\bbW_k\hbU_k\hbD_k^{-1/2} + \bbR_{k_1} \nonumber\\
            =&\,   \bbW_k\hbU_k\hbD_k^{-1/2} - \bbM_k\hbU_k\hbD_k^{-1/2}  +\bbM_k\hbU_k\hbD_k^{-1/2} - \bbU_k\bbU_k^\top\bbW_k\hbU_k\hbD_k^{-1/2} + \bbR_{k_1} \nonumber\\
            =&\,  \left( \bbW_k - \bbM_k\right) \hbU_k\hbD_k^{-1/2}  + \bbU_k\bbU_k^\top\bbM_k \hbU_k\hbD_k^{-1/2} - \bbU_k\bbU_k^\top\bbW_k\hbU_k\hbD_k^{-1/2} + \bbR_{k_1} \label{eq:masajeo1}\\
        =& \left(\bbI - \bbU_k\bbU_k^\top \right) \left(  \bbW_k - \bbM_k \right)\hbU_k\hbD_k^{-1/2} + \bbR_{k_1}, \label{eq:masajeo2}
    \end{align}
    where in~\eqref{eq:masajeo1} we have used the fact that $\bbM_k=\bbU_k\bbU_k^\top\bbM_k $. From \eqref{eq:masajeo2} we find that:
    \begin{align}
        \hbX[k] - \bbX[k]\bbS_k =&\, \left(\bbI - \bbU_k\bbU_k^\top \right) \left(  \bbW_k - \bbM_k \right)\left(\bbU_k\bbS_k - \bbU_k\bbS_k + \hbU_k \right)\hbD_k^{-1/2} + \bbR_{k_1} \nonumber\\
        =&\, \left(  \bbW_k - \bbM_k \right)\bbU_k\bbS_k\hbD_k^{-1/2} +\bbR_{k_1}+\bbR_{k_2}+\bbR_{k_3} \nonumber\\
        \begin{split}
        =&\, \left(  \bbW_k - \bbM_k \right)\bbU_k\left(\bbD_k^{-1/2}\bbS_k + \bbS_k\hbD_k^{-1/2} -\bbD_k^{-1/2}\bbS_k\right) +\\
        &\, \quad +\bbR_{k_1}+\bbR_{k_2}+\bbR_{k_3}
        \end{split} \nonumber\\
        =&\, \left(  \bbW_k - \bbM_k \right)\bbU_k\bbD_k^{-1/2}\bbS_k + \bbR_{k_1}+\bbR_{k_2}+\bbR_{k_3} + \bbR_{k_4}. \label{eq:masajeo_restos}
    \end{align}

    Using Proposition \ref{prop:remainders}, we conclude that
    \begin{align*}
        \twoToInf{\hbX[k] - \bbX[k]\bbS_k} \leq &\, \twoToInf{ \left(  \bbW_k - \bbM_k \right)\bbU_k\bbD_k^{-1/2}\bbS_k} +  \OP{N^{-1}\log^{k\theta}(N)}.
    \end{align*}
    Since $\twoToInf{\bbA\bbB} \leq \twoToInf{\bbA} \spectral{\bbB}$, we have:
    \begin{align}
        \twoToInf{\hbX[k] - \bbX[k]\bbS_k} \leq  &\,  \twoToInf{ \left(  \bbW_k - \bbM_k \right)\bbU_k} \spectral{\bbD_k^{-1/2}} \, + \OP{N^{-1}\log^{k\theta}(N)} \nonumber\\
        \leq &\, \twoToInf{ \left(  \bbW_k - \bbM_k \right)\bbU_k} \OP{N^{-1/2}} \, + \OP{N^{-1}\log^{k\theta}(N)}, \label{eq:masajeo3}
    \end{align}
    where for the second inequality we have used that $\twonorm{\bbD_k^{-1/2}}=\OP{N^{-1/2}}$ due to the fact that the top $d$ eigenvalues of $\bbM_k$ are $\EquivP{N}$; see Appendix \ref{app:eigenvalues}.
    
    To bound the term  $\twoToInf{ \left(  \bbW_k - \bbM_k \right)\bbU_k}$, we first note that each entry of that matrix is a linear combination of centered, sub-Weibull rvs. Indeed,
    \begin{align*}
        \left[\left(  \bbW_k - \bbM_k \right)\bbU_k\right]_{ij} = \sum_{l=1}^N \left(  W_{il} - M_{il} \right)U_{lj} = \sum_{l\neq i} \left(  W_{il} - M_{il} \right)U_{lj} - M_{ii}U_{ii},
    \end{align*}
    where we denote the $(i,j)$ element of $\bbW_k$, $\bbM_k$, and $\bbU_k$ by $W_{ij}$, $M_{ij}$, and $U_{ij}$, respectively. Since the sum in the rightmost term consists of centered, independent sub-Weibull rvs with parameter $k\theta$, we can apply a concentration result for this setting that we prove in Proposition~\ref{prop:concentration_sub_weibull} of Appendix~\ref{app:technical_lemmas}. Using this result, we find that for all sufficiently large $t$,
    \begin{align*}
        \P{\Big|\left[\left(  \bbW_k - \bbM_k \right)\bbU_k\right]_{ij} \Big| \geq t} \leq 2\exp\left(- \frac{ct^2}{\displaystyle L_k^2 \sum_{l\neq i} |U_{il}|^2} \right),
    \end{align*}
    for some constant $K_2>0$ depending on $k\theta$. By choosing $t=K_2\log^{k\theta}N$ we find that $\left[\left(  \bbW_k - \bbM_k \right)\bbU_k\right]_{ij} $ is $\OP{\log^{k\theta} N}$, and by summing over $j=1,\dots,d$ we have that each row of $\left(  \bbW_k - \bbM_k \right)\bbU_k$ is also $\OP{\log^{k\theta} N}$, so its $2\to\infty$ norm is of the same order.
    
    In Appendix \ref{app:eigenvalues} we show that Assumption \ref{assumption:full_Rank} implies that $\lambda_i \left( \bbM_k \right) = \EquivP{N}$ for all $i=1,\dots, d$. Equation \eqref{eq:masajeo3} then implies that
    \begin{align*}
        \twoToInf{\hbX[k] - \bbX[k]\bbS_k} &= \OP{N^{-1/2}\log^{k\theta} N} \, + \OP{N^{-1}\log^{k\theta}(N)}\\
        &= \OP{N^{-1/2}\log^{k\theta}(N)}.
    \end{align*}
    Choosing $\bbQ_k = \bbS_k^{\top}$ shows that $\twoToInf{\hbX[k] \bbQ_k- \bbX[k]} =\OP{N^{-1/2}\log^{k\theta}(N)}$, as desired.
    
    If $\bbX[k] \neq \bbU_k\bbD_k^{1/2}$, we have that $\bbX[k]\bbT_k = \bbU_k\bbD_k^{1/2}$ for some $\bbT_k \in O(d)$. Note that the above calculations imply that
    \begin{align*}
         \twoToInf{\hbX[k] -  \bbU_k\bbD_k^{1/2}\bbS_k} &= \OP{N^{-1/2}\log^{k\theta} N}.
    \end{align*}
   Therefore, in this case it is enough to choose $\bbQ_k = \bbS_k^\top\bbT_k^\top \in O(d)$, since
   \begin{align*}
       \twoToInf{\hbX[k]\bbQ_k -  \bbX[k]} &= \twoToInf{( \hbX[k] -  \bbX[k]\bbT_k\bbS_k) \bbQ_k} \leq \twoToInf{\hbX[k] -  \bbU_k\bbD_k^{1/2}\bbS_k} .
   \end{align*}
\end{proof}

\subsection{Asymptotic Normality}

Next, we show that latent positions behave, asymptotically as $N\to\infty$, as multivariate normal random variables. We also explicitly calculate the covariance matrix of such a normal in terms of the second-moment matrix $\bbDelta_k$ of the latent positions and the latent positions themselves.

\begin{theorem}\label{theorem:tcl}
Let  $(\bbW,\bbX_k) \sim \mathrm{WRDPG}(F)$ be as in Theorem \ref{theorem:consistency}. For each index $k$, define the variance function $v_k:\reals^d \times \reals^d \mapsto \reals$ as
\begin{align*}
    v_k(\bbx,\bby) := \var{W_{ij}^k \big| \bbx_i[k]=\bbx, \bbx_j[k] = \bby},
\end{align*}
where $W_{ij}$ is the $(i,j)$ entry of $\bbW$. Let $\bbSigma_k:\reals^d \mapsto \reals^{d\times d}$ be the covariance function
\begin{align*}
    \bbSigma_k(\bbx) = \bbDelta_k^{-1} \E{v_k(\bbx,\bby_k)\bby_k\bby_k^\top} \bbDelta_k^{-1},
\end{align*}
where $\{\bby_k\}_{k\geq 0} \sim F$ and $\bbDelta_k $ is the second-moment matrix $\bbDelta_k = \E{\bby_k\bby_k^\top}$.

Then for each $k$ there exists a sequence of orthogonal matrices $ \{\bbQ_{k_N}\}_{N \geq 0}$ such that for all $\bbz \in \reals^d$ and for any fixed row index $i$,
\begin{align*}
    \lim_{N\to \infty} \P{N^{1/2}\left( \hbX[k]\bbQ_{k_N} -  \bbX[k] \right)_i^\top \leq \bbz\, \Big| \, \bbx_i[k] = \bbx} = \bbPhi(\bbz;\bbSigma_k(\bbx)),
\end{align*}
where $\bbPhi(\cdot;\bbSigma)$ stands for the cumulative distribution function of a $\ccalN(\mathbf{0},\bbSigma)$ random vector.
\end{theorem}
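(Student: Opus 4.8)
The plan is to bootstrap off the algebraic identity \eqref{eq:masajeo_restos} derived in the proof of Theorem~\ref{theorem:consistency}. Write $\bbW_k := \bbW^{(k)}$ and $\bbM_k := \bbX[k]\bbX^\top[k]$, and let $\bbT_k \in O(d)$ be the nonidentifiability rotation with $\bbX[k]\bbT_k = \bbU_k\bbD_k^{1/2}$; a one-line computation gives $\bbU_k\bbD_k^{-1/2} = \bbX[k]\bigl(\bbX^\top[k]\bbX[k]\bigr)^{-1}\bbT_k$. Using this together with $\hbX[k] = \hbU_k\hbD_k^{1/2}$, the identity \eqref{eq:masajeo_restos} can be rewritten, after right-multiplication by $\bbQ_{k_N} := \bbS_k^\top\bbT_k^\top \in O(d)$ (which coincides with the orthogonal matrix supplied by Theorem~\ref{theorem:consistency}, now tracked as a sequence in $N$), as
\[
\hbX[k]\bbQ_{k_N} - \bbX[k] \;=\; \bigl(\bbW_k - \bbM_k\bigr)\,\bbX[k]\,\bigl(\bbX^\top[k]\bbX[k]\bigr)^{-1} \;+\; \Bigl(\textstyle\sum_{j=1}^4 \bbR_{k_j}\Bigr)\bbQ_{k_N}.
\]
Since $\twoToInf{\cdot}$ equals the largest Euclidean row norm and is invariant under right-multiplication by orthogonal matrices, Proposition~\ref{prop:remainders} gives that $\sqrt N$ times the $i$-th row of the remainder term has norm $\OP{N^{-1/2}\log^{k\theta}N} = \oP{1}$. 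Hence it suffices to find the limiting law of $\sqrt N$ times the $i$-th row of the first term.

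Fixing the row index $i$ and using that $\bbW_k - \bbM_k$ is symmetric, the $i$-th row of $\bigl(\bbW_k - \bbM_k\bigr)\bbX[k]\bigl(\bbX^\top[k]\bbX[k]\bigr)^{-1}$, viewed as a column vector, equals $\bigl(\bbX^\top[k]\bbX[k]\bigr)^{-1}\sum_{l=1}^{N}\bigl(W_{il}^k - M_{il}\bigr)\bbx_l[k]$. The $l=i$ summand is $-M_{ii}\bbx_i[k]$, which is $\OP{N^{-1}}$ once multiplied by $\bigl(\bbX^\top[k]\bbX[k]\bigr)^{-1}$ (whose spectral norm is $\OP{N^{-1}}$ by the eigenvalue bounds in Appendix~\ref{app:eigenvalues}), hence negligible at the $\sqrt N$ scale. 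By the strong law of large numbers $\tfrac1N\bbX^\top[k]\bbX[k] = \tfrac1N\sum_l \bbx_l[k]\bbx_l^\top[k] \to \bbDelta_k$ a.s., and $\bbDelta_k$ is invertible by Assumption~\ref{assumption:full_Rank}, so $N\bigl(\bbX^\top[k]\bbX[k]\bigr)^{-1}\to\bbDelta_k^{-1}$ a.s.; a Slutsky argument then reduces the claim to
\[
\frac{1}{\sqrt N}\sum_{l\neq i}\bigl(W_{il}^k - M_{il}\bigr)\,\bbDelta_k^{-1}\bbx_l[k] \;\Longrightarrow\; \ccalN\!\bigl(\mathbf 0,\,\bbSigma_k(\bbx)\bigr)\qquad \text{given } \bbx_i[k]=\bbx.
\]

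I would prove this conditional CLT by conditioning first on the entire array of latent-position sequences $\bbX_k$. The summands $\{(W_{il}^k - M_{il})\bbDelta_k^{-1}\bbx_l[k]\}_{l\neq i}$ are then independent and conditionally centered, since $\E{W_{il}^k \mid \bbX_k} = M_{il}$ by \eqref{eq:mgf}. Their conditional covariance is $\bbDelta_k^{-1}\bigl(\tfrac1N\sum_{l\neq i}\var{W_{il}^k\mid\bbX_k}\,\bbx_l[k]\bbx_l^\top[k]\bigr)\bbDelta_k^{-1}$; applying the conditional variance decomposition and noting that $\E{W_{il}^k\mid\bbX_k} = \bbx_i^\top[k]\bbx_l[k]$ is a function of $(\bbx_i[k],\bbx_l[k])$ only, one gets $\E{\var{W_{il}^k\mid\bbX_k}\,\big|\,\bbx_i[k],\bbx_l[k]} = v_k(\bbx_i[k],\bbx_l[k])$, so the strong law along the i.i.d.\ sequence $\{\bbx_l[k]\}_{l\neq i}$ yields a.s.\ convergence (given $\bbx_i[k]=\bbx$) of the conditional covariance to $\bbDelta_k^{-1}\E{v_k(\bbx,\bby_k)\bby_k\bby_k^\top}\bbDelta_k^{-1} = \bbSigma_k(\bbx)$. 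By Remark~\ref{remark:weibull} each $W_{il}^k$ is sub-Weibull with parameter $k\theta$, which (with uniformly controlled constants, as in the consistency analysis) gives uniformly bounded $(2+\delta)$-th moments and hence the Lindeberg condition. The multivariate Lindeberg--Feller theorem then yields the stated limit conditionally on $\bbX_k$, and a bounded-convergence argument (the limiting normal CDF being continuous and bounded) transfers it to conditioning on $\bbx_i[k]=\bbx$ alone, exactly as in the RDPG central limit theorem~\cite{priebe2018survey}.

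I expect the main obstacle to be the careful bookkeeping of the data-dependent, $N$-dependent orthogonal matrices: one must verify that the alignment $\bbS_k$ of Proposition~\ref{prop:remainders} and the nonidentifiability rotation $\bbT_k$ combine into a single sequence $\bbQ_{k_N}$ for which all residual orthogonal factors cancel and the limiting covariance emerges as $\bbSigma_k(\bbx)$ with \emph{no} surviving conjugation---which is precisely why the statement is phrased with a sequence $\{\bbQ_{k_N}\}$ rather than a fixed rotation. Secondary technical points are: extracting the Lindeberg condition uniformly in $l$ from the index-dependent sub-Weibull constants $C_{il}$ of Assumption~\ref{assumption:bounded_rv}; justifying $N\bigl(\bbX^\top[k]\bbX[k]\bigr)^{-1}\to\bbDelta_k^{-1}$ and the $\EquivP{N}$ control of the spectrum of $\bbM_k$ under conditioning on $\bbx_i[k]$; and handling possible multiplicities in the spectrum of $\bbDelta_k$, which is why it is cleaner to work with $\bigl(\bbX^\top[k]\bbX[k]\bigr)^{-1}$ directly rather than through its eigendecomposition.
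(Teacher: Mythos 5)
Your proposal follows the paper's proof essentially step for step: the same decomposition \eqref{eq:masajeo_restos}, the same use of Proposition~\ref{prop:remainders} to make the remainder terms vanish at the $\sqrt{N}$ scale, the same negligible diagonal ($M_{ii}$) term, and the same identification $N\left(\bbX^\top[k]\bbX[k]\right)^{-1} = N\bbT_k\bbD_k^{-1}\bbT_k^\top \to \bbDelta_k^{-1}$ combined with Slutsky and a CLT for $N^{-1/2}\sum_{j\neq i}\left(W_{ij}^k - M_{ij}\right)\bbx_j[k]$. The only difference is in how the CLT is invoked—the paper applies the multivariate CLT directly conditional on $\bbx_i[k]=\bbx$, whereas you condition on the full array $\bbX_k$ and use Lindeberg--Feller plus bounded convergence—which is a minor bookkeeping variation of the same argument (and note that in your route the conditional variance given $\bbX_k$ involves $\bbx_i[2k]$, so the averaging back to $v_k$ is exactly the simplification the paper itself makes and later refines in its corollaries).
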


\begin{proof}
    From the proof of Theorem \ref{theorem:consistency}, we have that for each fixed $k$ there exists $\bbS_k, \bbT_k \in O(d)$ such that for $\bbQ_k = \bbS_k^\top\bbT_k^\top $ it holds
    \begin{align*}
        \hbX[k]\bbQ_k - \bbX[k] =  \left(\hbX[k] - \bbU_k \bbD_k^{1/2} \bbS_k\right)\bbQ_k .
    \end{align*}
    Also from that proof [see \eqref{eq:masajeo_restos}], we have that
    \begin{align*}
        \hbX[k] - \bbU_k \bbD_k^{1/2} \bbS_k = \left(  \bbW_k - \bbM_k \right)\bbU_k\bbD_k^{-1/2}\bbS_k + \bbR_k,
    \end{align*}
    where we have defined $\bbR_k = \bbR_{k_1} + \bbR_{k_2} + \bbR_{k_3} + \bbR_{k_4}$. Therefore:
    \begin{align}
        N^{1/2} \left(\hbX[k]\bbQ_k - \bbX[k] \right)= N^{1/2}\left(  \bbW_k - \bbM_k \right)\bbU_k\bbD_k^{-1/2}\bbT_k^\top +N^{1/2}\bbR_k\bbQ_k. \label{eq:ase_masajeo_1}
    \end{align}
    Using Proposition \ref{prop:remainders}, we have that $\twoToInf{N^{1/2}\bbR_k\bbQ_k} \to 0$ as $N\to\infty$. Therefore, we can focus on the first summand in \eqref{eq:ase_masajeo_1}.
    
    Recall from the proof of Theorem \ref{theorem:consistency} that $\bbT_k$ is such that $\bbX[k]\bbT_k = \bbU_k\bbD_k^{1/2}$, which implies that $\bbU_k\bbD_k^{-1/2} = \bbX[k]\bbT_k\bbD_k^{-1}$. Thus:
    \begin{align*}
        N^{1/2}\left(  \bbW_k - \bbM_k \right)\bbU_k\bbD_k^{-1/2}\bbT_k^\top = N^{1/2 }\left(  \bbW_k - \bbM_k \right)\bbX[k]\bbT_k\bbD_k^{-1}\bbT_k^\top .
    \end{align*}
    Therefore, the $i$-th row of the first summand in \eqref{eq:ase_masajeo_1} equals:
    \begin{align}
        N^{1/2}\left[\left(  \bbW_k - \bbM_k \right)\bbU_k\bbD_k^{-1/2}\bbT_k^\top\right]_i^\top &= N^{1/2}\bbT_k\bbD_k^{-1}\bbT_k^\top  \left\lbrace\left(  \bbW_k - \bbM_k \right)\bbX[k]\right\rbrace_i^\top \nonumber\\
        &= N \bbT_k\bbD_k^{-1}\bbT_k^\top\left[N^{-1/2}\sum_{j=1}^N \left(  W_{ij}^k - M_{ij} \right)\bbx_j[k]\right] \nonumber\\
        \begin{split}
        &= N \bbT_k\bbD_k^{-1}\bbT_k^\top\left[N^{-1/2}\sum_{j\neq i} \left(  W_{ij}^k - M_{ij} \right)\bbx_j[k]\right] \\
         &\,\quad - N \bbT_k\bbD_k^{-1}\bbT_k \left(N^{-1/2} M_{ii} \bbx_i[k]\right),
        \end{split} \label{eq:ase_masajeo_2}
    \end{align}
    where as before we denote the $(i,j)$ element of $\bbW$ and $\bbM_k$ by $W_{ij}$ and $M_{ij}$, respectively. Conditioning on $\bbx_i[k] = \bbx$, we have that for the second term in \eqref{eq:ase_masajeo_2}
    \begin{align*}
        \twonorm{N \bbT_k\bbD_k^{-1}\bbT_k \left(N^{-1/2} M_{ii} \bbx_i[k]\right)} &\leq N^{1/2}|M_{ii}|\,\twoToInf{\bbT_k\bbD_k^{-1}\bbT_k} \spectral{\bbx}\\
        &\leq N^{1/2}L_k \spectral{\bbD_k^{-1}} \,\spectral{\bbx},
    \end{align*}
    where in the second inequality we have used that Assumption \ref{assumption:bounded_rv} implies that $|M_{ij}|<L_k$ for all $(i,j)$, and that $\twoToInf{\bbA}\leq \spectral{\bbA}$. Because $\spectral{\bbD_k^{-1}}  = \OP{N^{-1}}$, we conclude that the second term in \eqref{eq:ase_masajeo_2} is $\OP{N^{-1/2}}$.

    As for the first term in \eqref{eq:ase_masajeo_2}, conditional on $\bbx_i[k] = \bbx$ we have that $M_{ij} = \bbx^\top\bbx_j[k]$, so the terms in the sum
    \begin{align*}
        N^{-1/2}\sum_{j\neq i} \left(  W_{ij}^k - M_{ij} \right)\bbx_j[k]
    \end{align*}
    are centered, independent random variables, whose covariance matrix is
    \begin{align*}
        \tbSigma_k(\bbx) = \E{v_k(\bbx,\bby_k)\bby_k\bby_k^\top},
    \end{align*}
    where $\{\bby_k\}_{k\geq 0} \sim F$. Therefore, the multivariate central limit theorem implies that
    \begin{align*}
          N^{-1/2}\sum_{j\neq i} \left(  W_{ij} - M_{ij} \right)\bbx_j[k] \xrightarrow{\mathcal{L}} \ccalN (\bb0,\tbSigma_k(\bbx)).
    \end{align*}
    To conclude the proof, recall that $\bbT_k\in O(d)$ is such that $\bbX[k]\bbT_k = \bbU_k\bbD_k^{1/2}$, so $\bbX[k] = \bbU_k\bbD_k^{1/2}\bbT_k^\top$ and therefore $\bbX^\top[k]\bbX[k] = \bbT_k \bbD_k \bbT_k^\top$. This implies that $$\left(\bbX^\top[k]\bbX[k]\right)^{-1} = \bbT_k \bbD_k^{-1} \bbT_k^\top,$$ and therefore by the law of large numbers we have that almost surely it holds
    \begin{align*}
        N \bbT_k\bbD_k^{-1}\bbT_k^\top \to \bbDelta_k^{-1}.
    \end{align*}
    This implies that the first term in \eqref{eq:ase_masajeo_2} converges in distribution to a multivariate normal $\ccalN(\bb0,\bbSigma(\bbx))$, which completes the proof. 
    
\end{proof}

\begin{remark}
    The covariance function in Theorem \ref{theorem:tcl} depends on $v_k(\bbx,\bby)$, which is the conditional variance of $W_{ij}^k$ given $\bbx_i[k]=\bbx$ and $\bbx_j[k] = \bby$. Therefore, we can rewrite $v_k$ as
    \begin{align*}
        v_k(\bbx,\bby) &= \E{(W_{ij}^k)^2\big| \bbx_i[k]=\bbx, \bbx_j[k] = \bby} - \E{W_{ij}^k\big| \bbx_i[k]=\bbx, \bbx_j[k] = \bby}^2\\
        &=\E{W_{ij}^{2k}\big| \bbx_i[k]=\bbx, \bbx_j[k] = \bby} - (\bbx^\top\bby)^2,
    \end{align*}
    since the WRDPG model by definition imposes $\E{W_{ij}^k\big| \bbx_i[k]=\bbx, \bbx_j[k] = \bby} = \bbx^\top\bby$. The expectation of $W_{ij}^{2k}$ need not have a closed-form expression given $\bbx_i[k]=\bbx$ and $ \bbx_j[k] = \bby$. However, if we further condition on the events $\bbx_i[2k] = \bbx_2$ and  $\bby_i[2k] = \bby_2$, we have that $\E{W_{ij}^{2k}} = \bbx_2^\top \bby_2$. Therefore, by conditioning on $\bbx_i[k] = \bbx_1$ and $\bbx_i[2k] = \bbx_2$ we have the following Corollary of Theorem \ref{theorem:tcl}.
\end{remark}
\begin{corollary}
Let  $(\bbW,\bbX_k) \sim \mathrm{WRDPG}(F)$ be as in Theorem \ref{theorem:consistency}. Then for each $k$ there exists a sequence of orthogonal matrices $ \{\bbQ_{k_N}\}_{N \geq 0}$ such that for all $\bbz \in \reals^d$ and for any fixed row index $i$,
\begin{align*}
    \lim_{N\to \infty} \P{N^{1/2}\left( \hbX[k]\bbQ_{k_N} -  \bbX[k] \right)_i^\top \leq \bbz\, \Big| \, \bbx_i[k] = \bbx_1, \, \bbx_i[2k] = \bbx_2} = \bbPhi(\bbz,\bbSigma_k(\bbx_1,\bbx_2)),
\end{align*}
where $\bbSigma_k:\reals^d \times \reals^d \mapsto \reals^{d\times d}$ is the covariance function
\begin{align*}
    \bbSigma_k(\bbx_1,\bbx_2) = \bbDelta_k^{-1} \E{\left(\bbx_2^\top\bby_{2k} - (\bbx_1^\top\bby_k)^2\right)\bby_k\bby_k^\top} \bbDelta_k^{-1},
\end{align*}
and $\{\bby_k\}_{k\geq 0} \sim F$.

\end{corollary}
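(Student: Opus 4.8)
The plan is to mimic the proof of Theorem~\ref{theorem:tcl} \emph{verbatim}, changing only the conditioning event: instead of conditioning on $\{\bbx_i[k] = \bbx\}$ we condition on $\{\bbx_i[k] = \bbx_1\}\cap\{\bbx_i[2k] = \bbx_2\}$ for a fixed row index $i$. All the algebraic identities established there --- the decomposition in~\eqref{eq:ase_masajeo_1} and~\eqref{eq:ase_masajeo_2}, the choice of $\bbS_k,\bbT_k$ and $\bbQ_k=\bbS_k^\top\bbT_k^\top$, and the fact that $\twoToInf{N^{1/2}\bbR_k\bbQ_k}\to 0$ via Proposition~\ref{prop:remainders} --- are deterministic consequences of the spectral structure of $\bbW_k$ and $\bbM_k$ and do not involve the conditioning on row $i$, so they carry over unchanged. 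Likewise, the negligibility of the diagonal term $N\bbT_k\bbD_k^{-1}\bbT_k^\top(N^{-1/2}M_{ii}\bbx_i[k])$ uses only $|M_{ij}|\le L_k$ (which follows from Assumption~\ref{assumption:bounded_rv}) and $\spectral{\bbD_k^{-1}}=\OP{N^{-1}}$, hence remains $\OP{N^{-1/2}}$. Thus everything reduces, as before, to the limiting behaviour of the sum $N^{-1/2}\sum_{j\neq i}(W_{ij}^k - M_{ij})\bbx_j[k]$ under the new conditioning.

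Next, I would check that the extra conditioning does not disturb the centering of these summands. By the tower property applied within the WRDPG specification (Definition~\ref{def:wrdpg_model}), $\E{W_{ij}^k \mid \bbx_i[k]=\bbx_1,\,\bbx_i[2k]=\bbx_2,\,\bbx_j} = \bbx_i^\top[k]\bbx_j[k] = M_{ij}$, because the conditional mean of $W_{ij}^k$ given the entire latent sequence depends only on the $k$-th coordinates of the two endpoints. Hence the terms remain centered and independent across $j$, and the only quantity that changes is their common covariance matrix.

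The one computational step is to recompute that covariance. Conditioning further on $\bbx_j[k]=\bby_k$ and $\bbx_j[2k]=\bby_{2k}$ and expanding $(W_{ij}^k - M_{ij})^2$, the WRDPG moment identities give $\E{(W_{ij}^k)^2 \mid \cdots}=\E{W_{ij}^{2k} \mid \cdots}=\bbx_i^\top[2k]\bbx_j[2k]=\bbx_2^\top\bby_{2k}$ and $\E{W_{ij}^k \mid \cdots}=\bbx_1^\top\bby_k$, so the inner conditional expectation collapses to $\bbx_2^\top\bby_{2k} - (\bbx_1^\top\bby_k)^2$. Averaging over $\bbx_j\sim F$ and multiplying by $\bby_k\bby_k^\top$ yields the covariance $\tbSigma_k(\bbx_1,\bbx_2) = \E{(\bbx_2^\top\bby_{2k} - (\bbx_1^\top\bby_k)^2)\bby_k\bby_k^\top}$. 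The summands being i.i.d. with finite covariance (finite since Assumption~\ref{assumption:bounded_rv} makes $W_{ij}$ sub-Weibull, hence all its moments finite), the multivariate CLT gives $N^{-1/2}\sum_{j\neq i}(W_{ij}^k - M_{ij})\bbx_j[k]\xrightarrow{\ccalL}\ccalN(\bb0,\tbSigma_k(\bbx_1,\bbx_2))$. Finally, exactly as in Theorem~\ref{theorem:tcl}, the strong law of large numbers gives $N\bbT_k\bbD_k^{-1}\bbT_k^\top = (N^{-1}\bbX^\top[k]\bbX[k])^{-1}\to\bbDelta_k^{-1}$ almost surely, so Slutsky's theorem yields convergence of the $i$-th row of $N^{1/2}(\hbX[k]\bbQ_{k_N} - \bbX[k])$ to $\ccalN(\bb0,\bbDelta_k^{-1}\tbSigma_k(\bbx_1,\bbx_2)\bbDelta_k^{-1}) = \ccalN(\bb0,\bbSigma_k(\bbx_1,\bbx_2))$, which is the claim.

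I do not anticipate a genuine obstacle: the corollary is essentially Theorem~\ref{theorem:tcl} with the conditional variance $v_k(\bbx,\bby)$ replaced by its explicit form under additional conditioning on the $2k$-th latent coordinates. The only point requiring care is the bookkeeping of the nested conditional expectations in the covariance computation --- ensuring one conditions on enough coordinates ($\bbx_i[2k]$ and $\bbx_j[2k]$) for the WRDPG identity $\E{W_{ij}^{2k} \mid \cdots}=\bbx_i^\top[2k]\bbx_j[2k]$ to apply, while verifying that the centering of the summands only ever requires the $k$-th coordinates. \hfill$\square$
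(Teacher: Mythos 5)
Your proposal is correct and follows essentially the same route as the paper: the corollary is obtained from Theorem~\ref{theorem:tcl} by observing that under the additional conditioning on $\bbx_i[2k]$ (and, inside the expectation, on $\bbx_j[2k]$) the conditional variance $v_k$ collapses to the explicit form $\bbx_2^\top\bby_{2k}-(\bbx_1^\top\bby_k)^2$, after which the CLT and Slutsky argument of the theorem carries over unchanged. Your extra care in checking that the centering $\E{W_{ij}^k\mid\cdots}=M_{ij}$ survives the finer conditioning is exactly the bookkeeping the paper leaves implicit in its preceding remark.
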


As discussed in Section~\ref{ex:wsbm}, when the random graph follows a weighted SBM  with $C$ communities, for each moment index $k$ there are at most $C$ distinct latent positions that a node can assume. We denote these positions by $\bby_{m}[k]$, for $m = 1, \dots, C$. Assume that each node belongs to community $m$ with probability $\pi_{m}$, and define the vector $\bbpi = [\pi_1, \pi_2, \dots, \pi_C]^\top$. Let $\bbB \in \reals^{C \times C}$ be the matrix of edge-formation probabilities between communities -- i.e.,  entry $b_{lm}$ specifies the probability that an edge exists between a node in community $l$ and a node in community $m$. Let $d_{lm}$ denote the probability distribution governing the edge weights between nodes in communities $l$ and $m$, with $m_{lm}[k]$ denoting its $k$-th moment.

Under these conventions, each entry of the weighted adjacency matrix $\bbW$ is either zero with probability $1 - \bbpi^\top \bbB \bbpi$, or drawn from $d_{lm}$ with probability $\pi_l \pi_m b_{lm}$. In this setup, the following Corollary of Theorem \ref{theorem:tcl} holds.

\begin{corollary}\label{corollary:tcl_sbm}
    With the above notation, for each moment index $k$ there exists a sequence of orthogonal matrices $ \{\bbQ_{k_N}\}_{N \geq 0}$ such that for all $\bbz \in \reals^d$ and for any fixed row index $i$,
\begin{align*}
    \lim_{N\to \infty} \P{N^{1/2}\left( \hbX[k]\bbQ_{k_N} -  \bbX[k] \right)_i^\top \leq \bbz\, \Big| \, \text{node } i \text{ belongs to community } l } = \bbPhi(\bbz,\bbSigma_{kl}),
\end{align*}
where the covariance matrix $\bbSigma_{kl}$ is given by $\bbSigma_{kl} = \bbDelta_k^{-1}\tbSigma_{kl} \bbDelta_k^{-1}$, with
\begin{align*}
    \tbSigma_{kl} = \sum_{m=1}^C \pi_m  \left(b_{lm}m_{lm}[2k] - b_{lm}^2m_{lm}^2[k] \right) \bby_m[k]\bby_m^\top[k]
\end{align*}
and
\begin{align*}
    \bbDelta_{k} = \sum_{m=1}^C \pi_m \bby_m[k]\bby_m^\top[k].
\end{align*}
\end{corollary}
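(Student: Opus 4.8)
The plan is to specialize the asymptotic normality of Theorem~\ref{theorem:tcl} --- more precisely, the Corollary stated just above it that conditions jointly on $\bbx_i[k]$ and $\bbx_i[2k]$ --- to the weighted SBM, where the latent sequence of any node takes only one of $C$ values. First I would note that in the SBM the community label of node $i$ pins down its \emph{entire} latent sequence $\{\bby_l[k']\}_{k'\ge 0}$, so conditioning on ``node $i$ belongs to community $l$'' is a refinement of conditioning on $\bbx_i[k]=\bby_l[k]$ and $\bbx_i[2k]=\bby_l[2k]$; moreover it does not alter the joint law of the other $N-1$ nodes' latent sequences, nor, given those, the law of the weights incident to $i$, since in the SBM the conditional distribution of $W_{ij}$ depends on the endpoints only through their communities. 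Hence the argument in the proof of Theorem~\ref{theorem:tcl} goes through verbatim with $\bbx=\bby_l[k]$: the conditional mean of the summands is $M_{ij}=\bbx_i^\top[k]\bbx_j[k]$, which for $j$ in community $m$ equals $\bby_l^\top[k]\bby_m[k]=b_{lm}m_{lm}[k]$ (consistent with the WRDPG constraint); the centered terms $(W_{ij}^k-M_{ij})\bbx_j[k]$, $j\ne i$, are i.i.d.\ and the multivariate CLT applies; and $N\bbT_k\bbD_k^{-1}\bbT_k^\top\to\bbDelta_k^{-1}$ almost surely.

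Second, I would collapse the expectations appearing in Theorem~\ref{theorem:tcl} to finite sums. A generic node drawn from $F$ lies in community $m$ with probability $\pi_m$, with index-$k$ latent vector $\bby_m[k]$; therefore $\bbDelta_k=\E{\bby_k\bby_k^\top}=\sum_{m=1}^C\pi_m\,\bby_m[k]\bby_m^\top[k]$, which is the stated $\bbDelta_k$ and is invertible by Assumption~\ref{assumption:full_Rank}. Likewise, conditioning on node $i$ in community $l$, the covariance matrix of $N^{-1/2}\sum_{j\ne i}(W_{ij}^k-M_{ij})\bbx_j[k]$ is $\tbSigma_{kl}=\sum_{m=1}^C\pi_m\,v_k(\bby_l[k],\bby_m[k])\,\bby_m[k]\bby_m^\top[k]$, obtained by averaging over the community of $j$. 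Since $\bbDelta_k^{-1}$ is symmetric, the CLT term then converges in distribution to $\ccalN(\bb0,\bbDelta_k^{-1}\tbSigma_{kl}\bbDelta_k^{-1})$, matching the sandwich form of $\bbSigma_k(\bbx)$ in Theorem~\ref{theorem:tcl}.

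Third --- the only genuine computation --- I would evaluate $v_k(\bby_l[k],\bby_m[k])$ using the zero-inflated structure of the SBM weights. Given node $i$ in community $l$ and node $j$ in community $m$, $W_{ij}$ equals $0$ with probability $1-b_{lm}$ and is drawn from $d_{lm}$ otherwise, so $W_{ij}^k$ has the same mixture form; in particular its $k$-th conditional moment is $b_{lm}m_{lm}[k]$ and its $2k$-th conditional moment is $b_{lm}m_{lm}[2k]$. By the identity in the Remark following Theorem~\ref{theorem:tcl}, $v_k(\bby_l[k],\bby_m[k])=b_{lm}m_{lm}[2k]-b_{lm}^2m_{lm}^2[k]$. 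Substituting this into the expression for $\tbSigma_{kl}$ and forming $\bbSigma_{kl}=\bbDelta_k^{-1}\tbSigma_{kl}\bbDelta_k^{-1}$ yields exactly the claimed formulas.

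I do not anticipate a serious obstacle. The one point requiring care is the bookkeeping in the first step: verifying that conditioning on node $i$'s community --- an event coarser than fixing the realized latent sequences of all nodes, but finer than conditioning on $\bbx_i[k]$ alone --- leaves intact both the multivariate CLT for the incident centered weights and the law-of-large-numbers step $N\bbT_k\bbD_k^{-1}\bbT_k^\top\to\bbDelta_k^{-1}$ used in Theorem~\ref{theorem:tcl}. This holds because the SBM weights depend on the endpoints only through their community labels, and the label of node $i$ is independent of all quantities associated with the other $N-1$ nodes; one should also invoke Assumption~\ref{assumption:full_Rank} to ensure $\bbDelta_k$ is invertible, so that $\bbSigma_{kl}$ is well defined.
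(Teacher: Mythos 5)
Your proposal is correct and follows essentially the same route the paper intends: the corollary is a direct specialization of Theorem~\ref{theorem:tcl} (and the preceding corollary) in which the expectations over $F$ collapse to finite sums over the $C$ communities and the conditional variance $v_k$ is evaluated for the zero-inflated weight law, giving $v_k=b_{lm}m_{lm}[2k]-b_{lm}^2m_{lm}^2[k]$. Your added care about why conditioning on the community label (which pins down the full latent sequence) is compatible with the CLT and LLN steps is a correct and welcome clarification, not a deviation.
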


\begin{remark}(Sparse WRDPG)
    The WRDPG model may be extended to sparse weighted networks by introducing a sparsity factor $\rho_N^{(k)} \to 0$ for each moment index $k$, so that
    \begin{equation*}
         \E{W_{ij}^k} = \rho_N^{(k)}\bbx_i^\top[k] \bbx_j[k].
    \end{equation*}
    In this regime, the operator norm of the population moment matrix scales as $\spectral{\bbM_k} = \EquivP{N\rho_N^{(k)}}$.
    
    Under the sub-Weibull tail Assumption~\ref{assumption:bounded_rv}, the empirical $k$-th moment matrix satisfies
    \begin{equation*}
        \spectral{\bbW^{(k)} - \bbM_k} = \OP{\rho_N^{(k)} \log^{k\theta} N}.
    \end{equation*}
    Consequently, spectral consistency of the ASE for the $k$-th moment holds provided that the signal dominates the stochastic fluctuation, namely,
    \begin{equation*}
        N\rho_N^{(k)} \gg \log^{k\theta} N.
    \end{equation*}
    For asymptotic normality of the estimated latent positions, a stronger sparsity condition is required. In particular, normal fluctuations are expected to hold whenever
    \begin{equation*}
        N\rho_N^{(k)} \gg \log^{2k\theta} N,
    \end{equation*}
    ensuring that the central limit scaling dominates the residual spectral error. All in all, these sparsity requirements are analogous to those appearing in the sparse unweighted RDPG literature~\cite{tang2018connectome} and indicate that the WRDPG framework also admits sparse regimes with expected degrees growing at polylogarithmic rates.
\end{remark}

\section{Graph generation}
\label{sec:generative}

So far, we have formally defined the WRDPG model and demonstrated its versatility and discriminative power through examples. In addition, we have provided statistical guarantees for the proposed ASE-based estimation method. In this section, we investigate how to generate graphs from the WRDPG model with latent positions
\begin{displaymath}
\bbX[0], \bbX[1], \dots, \bbX[K]\in \reals^{N \times d},    
\end{displaymath}
where $\bbX[k] = [\bbx_1[k], \bbx_2[k],\dots, \bbx_N[k]]^\top$.
In other words, our aim is to sample the adjacency matrix $\bbW \in \reals^{N\times N}$, such that, for each pair $1\leq i < j \leq N$, $W_{ij}$ follows a distribution whose first $K+1$ moments are given by $\bbx_i^\top[k]\bbx_j[k]$, for all $k=0,1,\dots, K$. The latent positions could be defined so as to match a prescribed weight distribution, or, be estimated from real networks via the ASE. 

In the sequel, we first consider a discrete weight distribution with finite support. We show that the weights' probability mass function (pmf) can be obtained in closed form, by solving a linear system of equations with Vandermonde structure. As we will see, our method is capable of reproducing the original characteristics of the network. 
Next, we address the problem of generating samples from WRDPG graphs with continuous weight distributions. To this end, we rely on the maximum entropy principle to recover the probability density function (pdf) from its moments. We develop a new primal-dual method, providing better and more robust solutions than prior art~\cite{saad2019pymaxent}; see also the results in Appendix \ref{app:maximum_entropy}. Finally, we consider the case of a mixture distribution, simultaneously reproducing the connectivity structure of a real network and its weight distribution. Throughout, we provide supporting examples using synthetic and real data.

\subsection{Discrete weights distribution}
\label{sec:discrete_generation}

We begin by deriving a solution for the case where $W_{ij}$ has a discrete distribution. Suppose that the latent positions for each node are given, and $W_{ij}$ takes on $R+1$ (known) distinct values $v_0, v_1,\dots,v_R$ with (unknown) probabilities $p_0,p_1,\dots,p_R$.\footnote{Note that both $v_r$'s and $p_r$'s are dependent on $i,j$, but that dependence has been omitted to improve readability.} To generate a WRDPG-adhering graph, we need to estimate these probabilities from the latent position sequence. In this case, the moments for such a distribution can be computed as
\begin{align*}
    \E{W_{ij}^k} = \sum_{r=0}^R v_r^k p_r = v_0^k p_0+v_1^k p_1+\dots+v_R^k p_R = \bbx_i^\top[k] \bbx_j[k]:=m_{ij}[k],
\end{align*}
where the usual convention $0^0=1$ is used in case $v_r=0$, for some $r$.
We then obtain the following system of $K+1$ linear equations on the pmf $\bbp = [p_0,p_1,\dots,p_R]^\top \in [0,1]^{R+1}$:
\begin{align}
    \left\lbrace
    \begin{array}{c c c}
        \phantom{v_0}p_0 + \phantom{v_1} p_1+\dots+\phantom{v_R} p_R &=& m_{ij}[0]  \\
        v_0p_0 +v_1p_1+\dots+v_R p_R &=& m_{ij}[1]\\
        v_0^2p_0 + v_1^2p_1+\dots+v_R^2 p_R &=& m_{ij}[2]\\
        \vdots& &\vdots\\
        v_0^Kp_0 + v_1^Kp_1+\dots+v_R^K p_R &=& m_{ij}[K]\\
    \end{array}
    \right.
    \Leftrightarrow \bbV \bbp = \bbm,
\label{eq:vandermonde}
\end{align}
where $\bbm = [m_{ij}[0],m_{ij}[1],\dots,m_{ij}[R]]^\top \in \reals^{R+1}$ and $\bbV \in \reals^{(K+1)\times(R+1)}$ is the Vandermonde matrix
\begin{align*}
    \bbV = \left(
   \begin{array}{c c c c}
    1 & 1 & \ldots & 1\\
    v_0& v_1 & \ldots & v_R\\
    v_0^2& v_1^2 & \ldots & v_R^2\\
    \vdots & \vdots & \ddots & \vdots\\
    v_0^K& v_1^K & \ldots & v_R^K
   \end{array}
   \right) .
\end{align*}
Note that if $K=R$ then the system \eqref{eq:vandermonde} has a unique solution. So, in order to estimate the pmf associated with edge $(i,j)$ we need to prescribe as many moments as there are possible values for $W_{ij}$. Given the first $R$ moments of the distribution, we readily obtain the probabilities as ${\bbp} = \bbV^{-1}\bbm$.

While the linear system \eqref{eq:vandermonde} needs to be solved for every pair $(i,j)$, if the distributions associated with different edges share a common support $v_0, v_1, \dots, v_R$, then $\bbV$ (and thus $\bbV^{-1}$) are the same across those edges, making computation of $\bbp$ for the entire graph less demanding.

\begin{example}
To illustrate how the graph generative process works, we simulated a two-class weighted SBM network with \( N = 500 \) nodes (350 in community 1 and 150 in community 2), and block probability matrix  
\begin{align}
\bbB = \left(
\begin{array}{cc}
0.7 & 0.2 \\
0.2 & 0.5
\end{array}
\right),
\label{eq:sbm_block_probabilities_discrete_genetarion}
\end{align}  
where edge weights are sampled from a discrete distribution supported on the values \( 1, 2, \dots,\) \(10 \), with \( p_i = \frac{1}{18} \) for \( i \neq 5 \) and \( p_5 = \frac{1}{2} \). Using~\eqref{eq:sbm_normal_mean_latent_sequence}, we compute the analytical latent positions for each node, where \( m_d[k] \) denotes the \( k \)-th moment of the aforementioned discrete distribution.

For each edge, we compute the finite moment sequence \( m_{ij}[k] = \bbx_i^\top[k] \bbx_j[k] \), for \( k = 0, \dots, 10 \). In this setup, each edge weight can take on values in $ \{0, 1, 2, \dots,$ $ 10\} $, with probabilities \( p_0,\, (1-p_0)p_1,\, (1-p_0)p_2,\, \dots,\, (1-p_0)p_{10} \), where \( p_0 \) is the probability that nodes \( i \) and \( j \) are not connected. Specifically, \( p_0 = 1 - 0.7 = 0.3 \) if \( i \) and \( j \) both belong to community 1, \( p_0 = 1 - 0.5 = 0.5 \) if both are in community 2, and \( p_0 = 1 - 0.2 = 0.8 \) otherwise. Thus, edge weights are discrete random variables with $R+1=11$ possible values, which requires 11 moments to uniquely solve~\eqref{eq:vandermonde}.

Figure~\ref{fig:discrete_sbm} shows the distributions of various metrics (degree, betweenness centrality, geodesic distance) for 100 simulated networks generated using the above procedure, along with the same metrics computed on a single reference network drawn from the base model--namely, a two-block SBM with $\bbB$ as in~\eqref{eq:sbm_block_probabilities_discrete_genetarion}, and edge weights sampled from the above discrete distribution. Apparently, the metrics for the network generated from the base model align closely with the distribution of corresponding metrics computed from the sampled graphs.

\begin{figure}[t]
    \centering
    \includegraphics[width=\textwidth]{./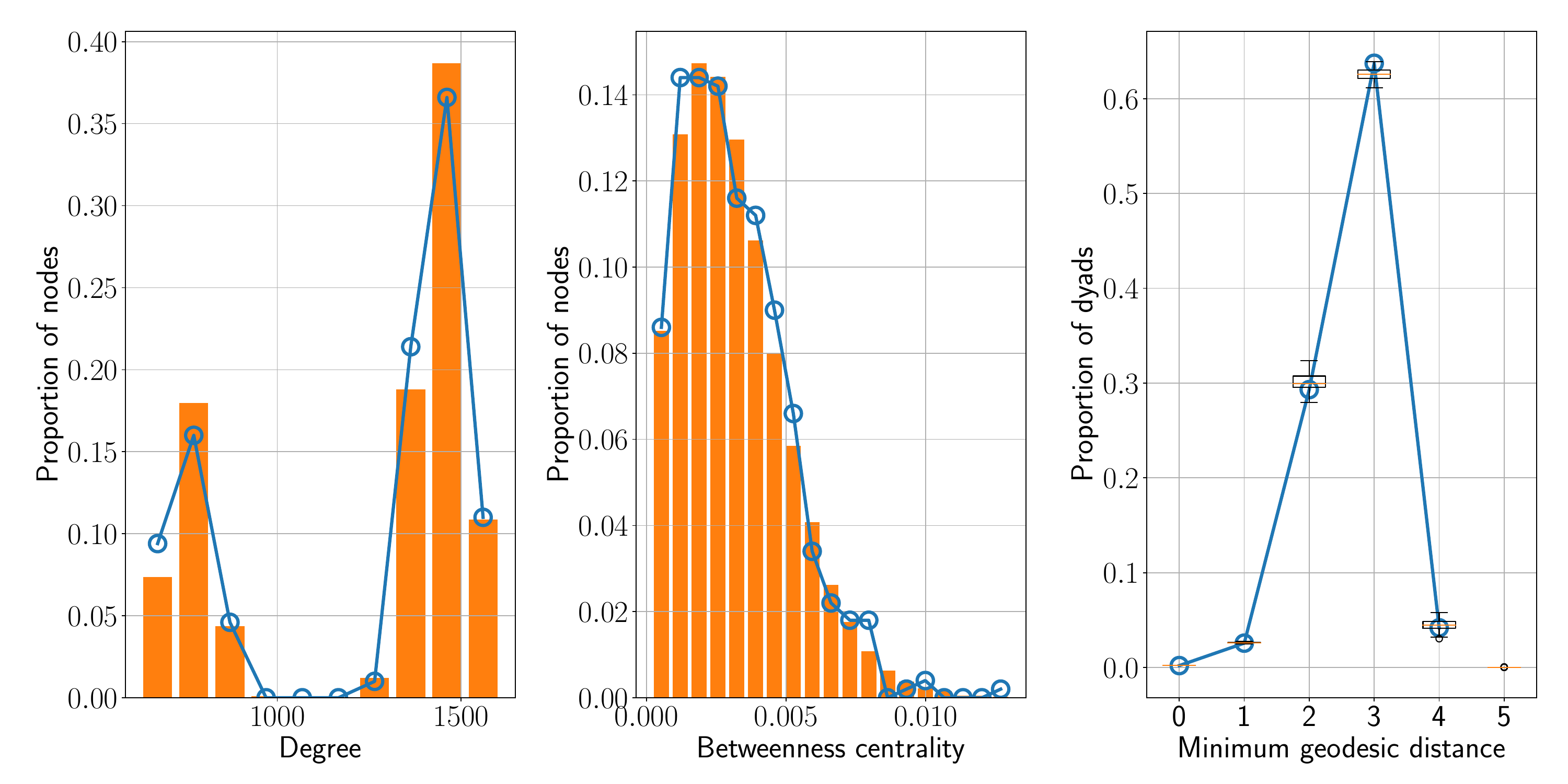}
    \caption{Comparisons between two-blocks SBMs generated from the base model (blue line) and from the discrete density estimated from latent positions (histograms and boxplots).}
    \label{fig:discrete_sbm}
\end{figure}
\end{example}

\begin{remark}
Vandermonde matrices--particularly those constructed from monomial bases--are prone to ill-conditioning, especially as the polynomial degree increases, or when the support of the distribution is large. This poses significant numerical challenges when recovering discrete distributions from moment sequences using direct matrix inversion methods, such as solving the system~\eqref{eq:vandermonde}. To mitigate this issue, we propose an alternative formulation that leverages Chebyshev polynomials of the first kind. These polynomials are orthogonal with respect to the weight function \( \frac{1}{\sqrt{1 - x^2}} \) on the interval \( [-1, 1] \), which endows them with superior numerical stability and conditioning within that domain \cite{boyd2007numerical}.

Our approach is to reformulate the system~\eqref{eq:vandermonde} on the basis of Chebyshev polynomials of the first kind. Each expression on the left-hand side of the system can be interpreted as a dot product between the unknown probability vector \( \bbp \) and a polynomial expressed on the monomial basis \( \{x^i\} \), evaluated at the support points \( \{v_r\} \). Since any polynomial of degree \( K \) can be expressed as a linear combination of the first \( K+1 \) Chebyshev polynomials, we propose to convert each monomial \( x^k \) in the system into its Chebyshev expansion. This leads to an equivalent system, with a better conditioned Vandermonde-like matrix constructed using evaluations of Chebyshev polynomials at \( \{v_r\} \).

This Chebyshev-based reformulation enhances numerical stability during reconstruction; an attractive feature when dealing with empirically estimated moments, which are inevitably affected by noise due to finite sample size. Additional methodological and implementation details are provided in Appendix~\ref{app:chebyshev_recovery}.

\end{remark}


\subsection{Continuous weights distribution}\label{sec:generation_cont}

We now move on to the case where the edge weights $W_{ij}$ are continuous random variables. That is, our goal is to determine a pdf $g_{ij}$ such that
\begin{align}
    \int_{\Omega} x^kg_{ij}(x)\, dx = \bbx_i^\top[k]\bbx_j[k],\,\, \text{ for all } k=0,1,\dots, K,
    \label{eq:moments_constraints_main}
\end{align}
where $\Omega \subset \reals$ denotes the (assumed known) support of the random variable $W_{ij}$.

To identify a unique pdf consistent with this partial information, we turn to the foundational work of Shore and Johnson  \cite{shore1980axiomatic}, who provided an axiomatic justification for using the principle of maximum entropy in such settings. Their key result shows that when one's knowledge about a probability distribution is limited to certain expectation values or constraints, the only consistent and unbiased method for selecting a distribution is to choose the one that maximizes entropy subject to those constraints. This follows from a set of axioms that any rational updating procedure should satisfy, such as consistency, uniqueness, invariance under coordinate transformations, and system independence.

In this context, entropy refers to the differential entropy of pdf $g_{ij}$, namely
\begin{align*}
    S(g_{ij}) = -\int_{\Omega}g_{ij}(x)\log g_{ij}(x)\, dx.
\end{align*}
Maximizing $ S(g_{ij})$ under the moment constraints \eqref{eq:moments_constraints_main} ensures that no additional structure or assumptions are imposed beyond what is strictly warranted by those constrains. Thus, following Shore and Johnson’s derivation, the maximum entropy principle is not simply a heuristic, but the unique method of inference consistent with the logical requirements of rational belief updating.

Therefore, we formulate the following primal optimization problem:
\begin{align}\label{eq:entropy_primal_prob}
    \max_{g \in \mathcal{F}} S(g) \,\, \text{ s. to } \int_{\Omega} x^kg(x)\, dx - m_k=0,\,\, \text{ for all } k=0,1,\dots, K,
\end{align}
where $m_k=\bbx_i^\top[k]\bbx_j[k]$ and $\mathcal{F}$ denotes the space of all probability distributions supported on $\Omega$ (as before, for clarity and to avoid notational clutter, we omit the dependence of $g$ and $m_k$ on indices $i$ and $j$). In Appendix \ref{app:maximum_entropy}, we show that the solution of \eqref{eq:entropy_primal_prob} can be obtained by solving the corresponding dual problem
\begin{align*}
    \min_{\lambda_0,\dots,\lambda_K \in \reals} \left[\sum_{k=0}^K \lambda_k m_k + \int_\Omega \exp \left( -\sum_{k=0}^K \lambda_k x^k \right) dx - m_0\right].
\end{align*}
Since this dual formulation is convex and unconstrained, it can be tackled using standard convex optimization algorithms. In practice, we solve it using the BFGS algorithm \cite{nocedal2006numerical} available through SciPy’s \texttt{minimize} function.

\begin{example}
To further illustrate the flexibility of the proposed graph generation method, we consider a two-block weighted SBM network with \( N = 500 \) nodes, comprising 350 nodes in community 1 and 150 nodes in community 2. The block probability matrix is given by  
\begin{align*}
\bbB = \left(
\begin{array}{cc}
1 & 1 \\
1 & 1
\end{array}
\right),
\end{align*}  
which implies that the graph is fully connected. This setup is chosen because assigning probabilities strictly less than one would introduce a point mass at zero in the edge-weight distribution, thereby resulting in a mixture distribution -- a setup which we address in the next section. Edge weights are sampled from distinct continuous distributions depending on the community membership of the connected nodes: for intra-community edges within community 1, weights follow a normal distribution \( \mathcal{N}(6,1) \); for intra-community edges within community 2, weights follow an exponential distribution with rate parameter \( \lambda = \frac{1}{3} \); and for inter-community edges, weights follow a normal distribution \( \mathcal{N}(1, 0.01) \). Using~\eqref{eq:sbm_normal_mean_latent_sequence}, we compute the analytical latent positions for each node, where \( m_d[k] \) now denotes the \( k \)-th moment of the corresponding distribution associated with each block. Edge weights thus arise from a mixture of three distributions, leading to a richer moment structure. 

\begin{figure}[t]
    \centering
    \includegraphics[width=\textwidth]{./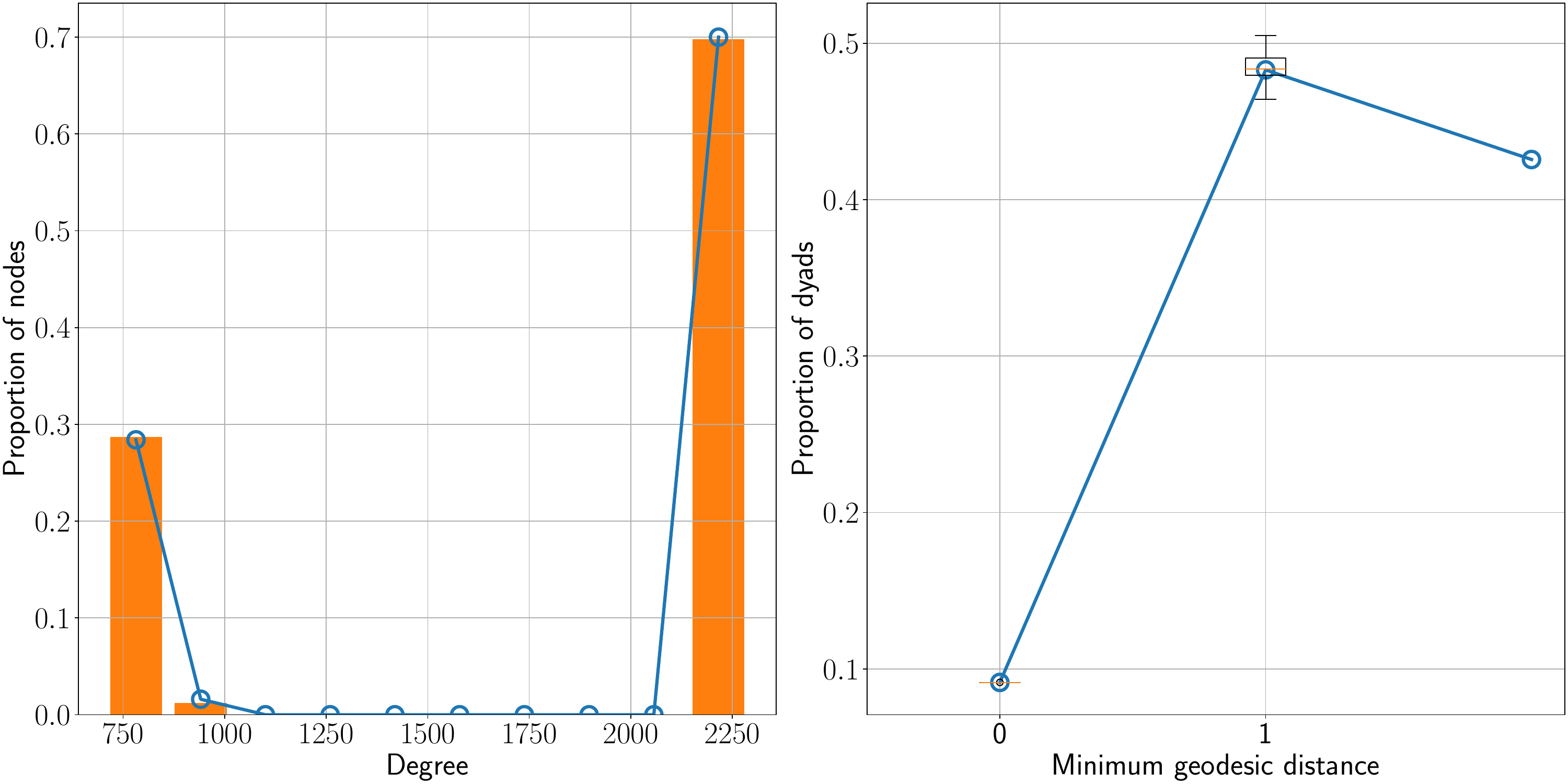}
    \caption{Comparisons between two-block SBMs generated from the base model (blue line) and from the pdf estimated with our method for solving the maximum entropy problem from latent positions (histogram and boxplot). Since in this setup graphs are fully connected, we do not report results for betweenness centrality.}
    \label{fig:continuous_mixture_sbm}
\end{figure}
\end{example}

As before, for each edge we compute the finite moment sequence \( m_{ij}[k] = \bbx_i^\top[k] \bbx_j[k] \), for \( k = 0, \dots, 5 \). We approximate the underlying edge-weight distributions using those first six moments, estimating them via the maximum entropy principle by solving the dual problem derived in Appendix~\ref{app:maximum_entropy}. Figure~\ref{fig:continuous_mixture_sbm} displays network-wide distributions of summary statistics (degree and geodesic distance) for 100 networks generated using this procedure, alongside the corresponding metrics of a single network sampled directly from the base model, i.e., a two-class, fully connected weighted SBM, and edge weights sampled according to the aforementioned mixture of continuous distributions. The metric distributions computed from the base model are in close agreement with those obtained from the generated graphs, thus validating the accuracy of the moment-based generation process in the presence of heterogeneous weight distributions.

\subsection{Mixed weights distribution}

We now focus on the case where the weights' distribution is a mixture of discrete and continuous components, with its pdf having the form
\begin{align}
    g_{ij}(x) = p_0^{ij}\delta(x) + (1-p^{ij}_0)h_{ij}(x)
    \label{eq:mixed_pdf}
\end{align}
where $\delta(x)$ is Dirac's delta, $h_{ij}(x)$ is an unknown pdf and $p_0^{ij} \in [0,1]$ is an unknown parameter that controls the edge-formation probability between nodes $i$ and $j$ . If $p_0^{ij}$ were known, we could estimate $h_{ij}$ from the prescribed moments $m_{ij}[k]=\bbx_i^\top[k]\bbx_j[k]$, since
\begin{align*}
    m_{ij}[0] &= p_0^{ij} + (1-p_0^{ij})\int_{\Omega} h_{ij}(x)\, dx, \\
    m_{ij}[k] &= (1-p_0^{ij})\int_\Omega x^k h_{ij}(x)\, dx, \quad \forall\; k\geq 1,
\end{align*}
where $\Omega$ is the support of $h_{ij}(x)$. This implies that we are looking for a pdf $h_{ij}$ with moments
\begin{align*}
    \int_{\Omega} h_{ij}(x)\, dx &= \frac{m_{ij}[0] - p_0^{ij}}{1-p_0^{ij}}, \\
    \int_\Omega x^k h_{ij}(x)\, dx &= \frac{m_{ij}[k]}{1-p_0^{ij}}, \quad \forall\; k\geq 1,
\end{align*}
so we can use the procedure in Section \ref{sec:generation_cont} to find $h_{ij}$ by maximizing its entropy.

In order to estimate $p_0^{ij}$, we propose to simultaneously estimate it for every edge via the ASE of the binary matrix $\bbA := \ind{ \bbW >\mathbf{0} }$, where $\ind{\cdot}$ denotes the entrywise matrix indicator function, i.e., $A_{ij} := \ind{W_{ij} >0}$. This estimator can be justified by noting that $\E{A_{ij}} = 1-p_0^{ij}$. Indeed,
\begin{align*}
    \E{A_{ij}} = \int_\Omega \ind{W_{ij} >0 } g(x) \, dx = \int_\Omega (1-p_0^{ij})h_{ij}(x)\, dx = 1-p_0^{ij},
\end{align*}
where we have assumed that $\Omega \subset \reals^+$. 
Let $\hbX_A$ denote the ASE of $\bbA$. Then, in light of our consistency result in Section \ref{sec:consistency}, matrix $\hbP := \hbX_A\hbX_A^\top$ converges to $\E{\bbA}$ in the $\infnorm{\cdot}$ sense, as $N\to\infty$.

\begin{example}[Reproducing real networks.] In this example the latent positions are not given and instead we estimate them from a real graph that is observed. As a result, now the input is a finite sequence of \emph{approximate} moments. We study a dataset that records football matches between national teams \cite{yang2022foot}. For a given time period, we construct a graph in which nodes represent countries, and an edge exists between two countries if they have played at least one match against each other during that period. The edge weight corresponds to the number of matches played between them. Next, we compute the latent positions for the edge weight distribution by performing the ASE of $\bbW^{(k)}$, where $\bbW^{(k)}$ denotes the $k$-th entry-wise power of the weight matrix $\bbW$, with $k$ ranging from 0 to a prespecified value $K$. Using these latent positions, we compute a finite moment sequence for each edge via the corresponding inner product and estimate its density using the procedure described earlier in this section.

By assuming a mixture-like density as in \eqref{eq:mixed_pdf}, we explicitly model the sparsity pattern of the football network. This approach enables us to generate synthetic networks that both conform to the observed sparsity pattern and exhibit edge weights that closely mimic those of the actual network. Figure \ref{fig:football_graph_generation} shows several metrics for the actual network of football matches during the period 2010--2016, alongside the corresponding metrics for 100 synthetically-generated networks using the aforementioned pdf estimation plus weighted graph sampling procedure. For this example, we used $K=2$, meaning that the zeroth, first, and second moments were employed for density estimation. 

\begin{figure}[t]
    \centering
    \includegraphics[width=\linewidth]{./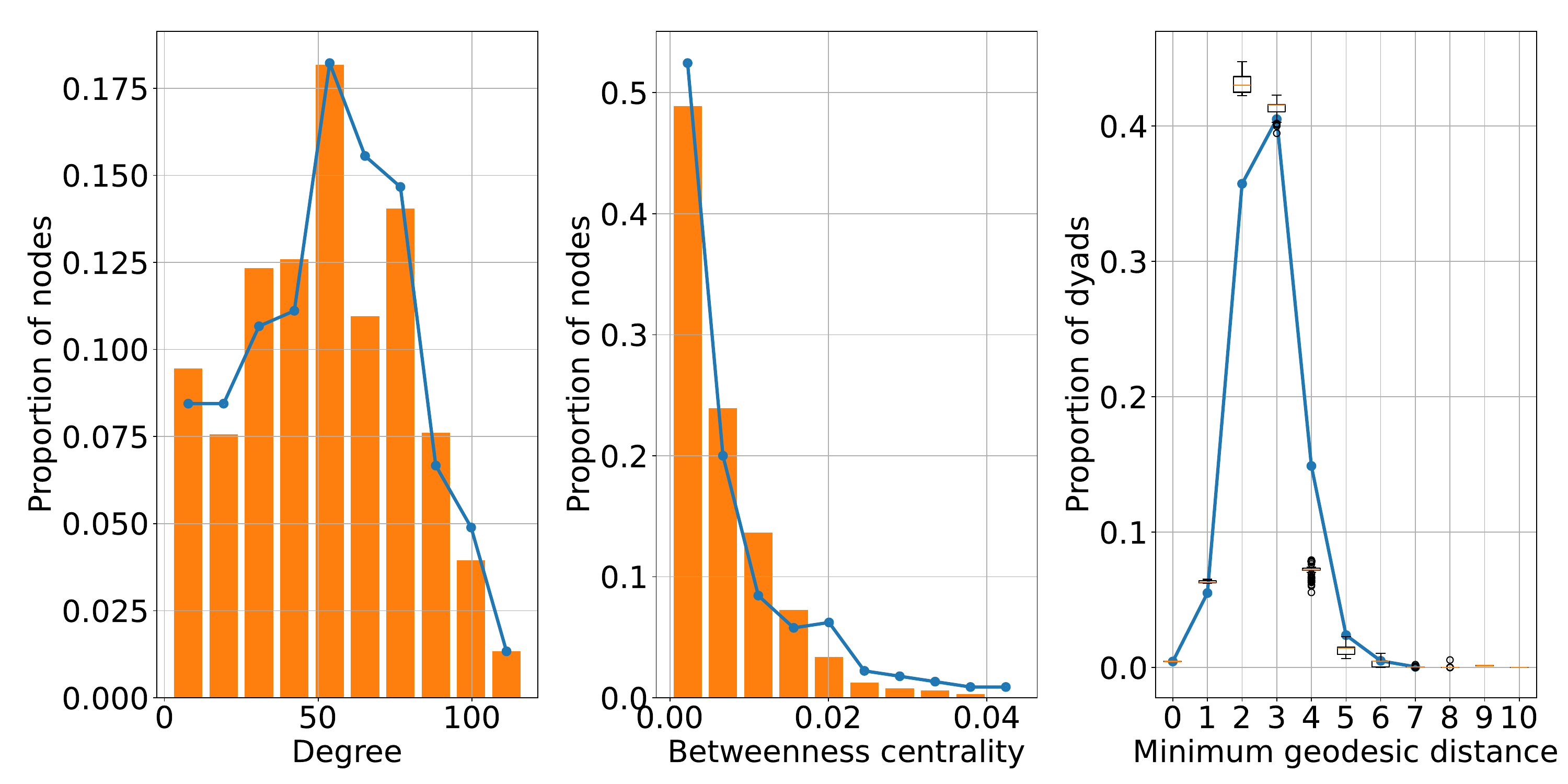}
    \caption{Graph generation metrics the football dataset \cite{yang2022foot}. Metrics for the true graph are shown with a blue solid line, while a histogram or a boxplot shows the results for the corresponding metric for 100 synthetic graphs generated using the estimated mixed densities.}
    \label{fig:football_graph_generation}
\end{figure}    

To further demonstrate the quality of the generated graphs, we conducted a community structure analysis. Since matches between countries belonging to the same football confederation are more frequent, one would expect to observe a clear community structure in the real network, with clusters roughly corresponding to confederations. This can be verified by looking at Figure \ref{fig:football_map_communities}, where we show the results of applying the classic Louvain clustering algorithm \cite{blondel2008fast} to the real graph. The algorithm detects as many clusters as there are confederations (six), with community membership reflecting, most of the times, the actual confederations teams belong to: for example, Australia is geographically in Oceania, but is a member of the Asian Football Confederation (AFC). The only major discrepancy is in North and Central America: while the countries in those subcontinents belong to the Confederation of North, Central America and Caribbean Association Football (CONCACAF), they are clustered together with southern countries, which belong to the South American Football Confederation (CONMEBOL). This might be because the former are often invited to participate in CONMEBOL's flagship tournament, the Copa Am\'erica, inflating the amount of games they play against CONMEBOL members. The fact that most Caribbean members of CONCACAF belong to a separate cluster further strenthengs this hypothesis, since they rarely play in the Copa Am\'erica.

\begin{figure}[t]
    \centering
    \includegraphics[width=\linewidth]{./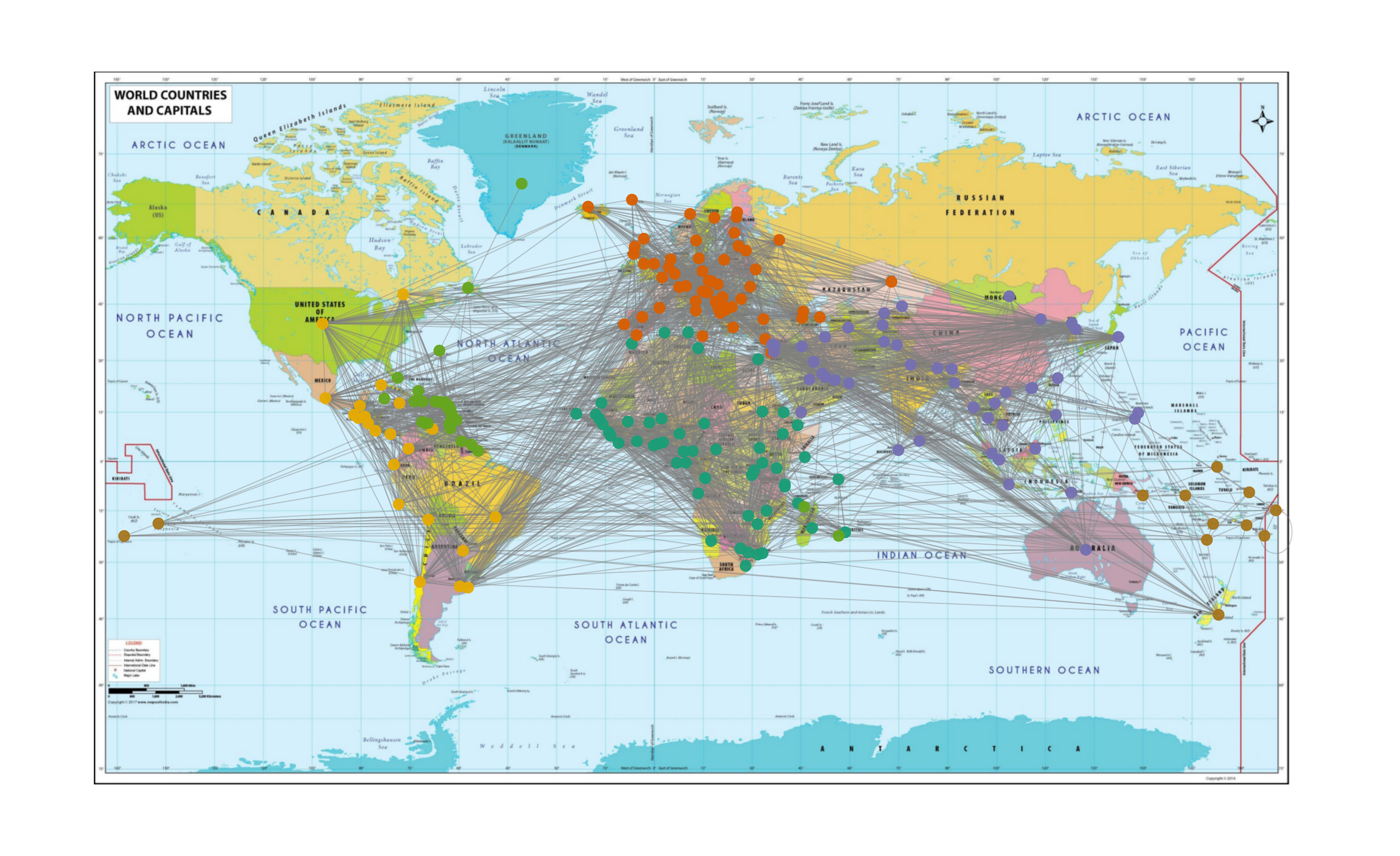}
    \caption{Result of applying the Louvain algorithm \cite{blondel2008fast} to the network of international football matches. Nodes with the same color belong to the same community.}
    \label{fig:football_map_communities}
\end{figure}  

We then ran the Louvain algorithm on the simulated networks and compared the results with those of the real network; see the results in Figure \ref{fig:football_clustering_results}. On the left panel we show a histogram of the amount of communities in the synthetic graph's largest connected component. As we can see, all graphs have either 5 or 6 communities in that component, which is consistent with the structure we discussed perviously. And while most graphs tend to have one community less than those in the real graph, we found that most of the times this is because countries from Oceania tend to be clustered with the AFC.

Besides simply looking at the amount of communities, we compared whether clusters in the simulated graphs matched those of the real network. To that end, we computed three different metrics of cluster agreement between each synthetic graph and the real one. Those metrics were V-measure \cite{rosenberg2007v}, the Adjusted Rand Index \cite{hubert1985comparing}, and the Adjusted Mutual Information \cite{vinh2009information}. On the right panel of Figure \ref{fig:football_clustering_results} we show a boxplot for each metric. Apparently, the overall community structure in the synthetic graphs follows closely that of the actual network.

\begin{figure}[t]
    \centering
    \includegraphics[width=0.85\linewidth]{./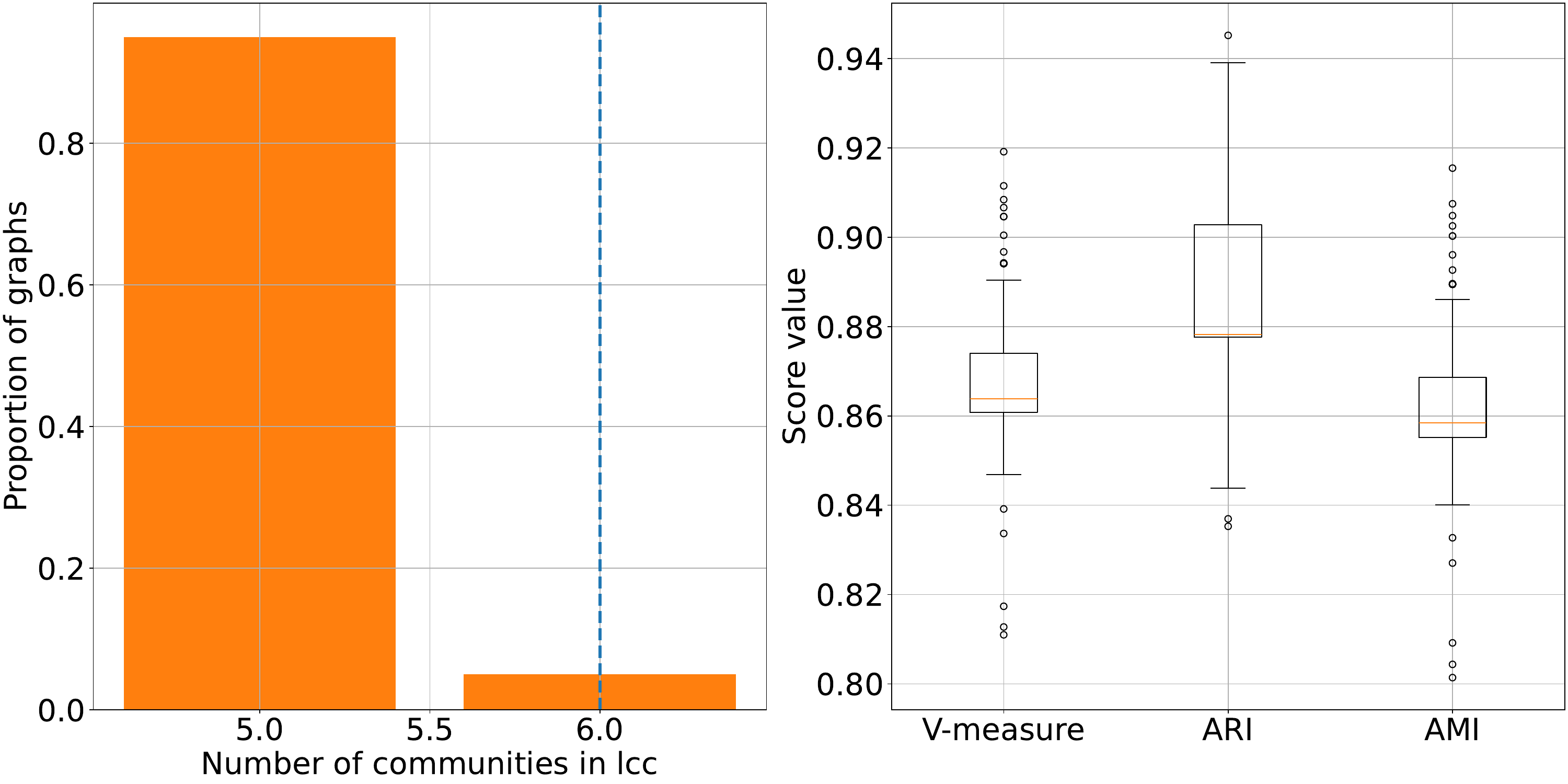}
    \caption{Comparisons between community structure of the real football matches network and its synthetic replicates. Left: histogram of number of communities in the largest connected component (lcc) for synthetic graphs. Right: Boxplot for three metrics of clustering agreement between real and synthetic networks: V-measure, Adjusted Rand Index (ARI) and Adjusted Mutual Information (AMI). }
    \label{fig:football_clustering_results}
\end{figure}  

\end{example}

\begin{remark}
Since the estimation of $p_0^{ij}$ is based on the consistency result from Section \ref{sec:consistency}, which guarantees convergence as the number of nodes $N\to\infty$, the finite size of the graph introduces a non-negligible finite-sample deviation. This deviation partially accounts for the (arguably quite minor) discrepancies observed in the metrics shown in Figure \ref{fig:football_graph_generation}, underscoring the method's robustness.
\end{remark}

\begin{remark}
The entropy-based graph regeneration procedure described in this section provides a way to reproduce weighted networks that match a finite set of estimated moments. This naturally suggests the possibility of using regenerated graphs to approximate the distribution of network summary statistics, such as subgraph counts, degree-based measures, or spectral functionals, in a bootstrap-like fashion. Related bootstrap methodologies have been developed recently for latent-space network models in the context of the vanilla (i.e., unweighted) RDPG model. In particular, the bootstrap framework put forth in~\cite{levin2025bootstrapping} establishes consistency results for U-statistic-type graph functionals and for resampling entire networks under an RDPG generative assumption. These rely on consistent estimation of the latent positions and on representations of many network statistics as functionals of these positions.
    
While the elegant ideas in~\cite{levin2025bootstrapping} provide a valuable conceptual foundation, extending such bootstrap guarantees to the present WRDPG framework is nontrivial. Here, regenerated networks are obtained via an entropy maximization principle using only finitely many moments, and the latent structure itself may vary across moments. A rigorous bootstrap theory would therefore need to account for both moment truncation effects and the distributional properties induced by the entropy-based graph regeneration scheme. We regard such developments as a meaningful direction for future research.
\end{remark}

\section{Conclusions}

We developed the Weighted Random Dot Product Graph (WRDPG) model, which extends the vanilla Random Dot Product Graph (RDPG) framework to effectively capture the intricacies of weighted graphs characterized by heterogeneous edge weight distributions. Our approach assigns latent positions to nodes, parameterizing edge weight distributions' moments using moment-generating functions. This innovative modeling strategy enhances the WRDPG's capability to distinguish between weight distributions that may have the same mean but differ in higher-order moments, thus providing a richer representation of network data than previous attempts to weighted RDPG modeling.
We derived statistical guarantees (consistency and asymptotic normality) for our estimator of latent positions, leveraging the well-established adjacency spectral embedding method. While for simplicity we assume a common embedding dimension across all moment matrices, the WRDPG framework can be extended to allow moment-specific dimensions, and we expect robustness to dimension overspecification similar to that observed in the vanilla RDPG~\cite{taing2026effect}.

Furthermore, we developed a generative framework that approximates desired weight distributions from a given (or estimated) latent position sequence and facilitates sampling synthetic graphs that closely mimic the structure and characteristics of real-world networks. Through a series of illustrative examples, we have demonstrated the effectiveness of the WRDPG model in practical applications. We show how our model can recover latent position structures and replicate key graph metrics, exhibiting a discriminative power essential for community detection tasks. We have also shown that the model accommodates various edge weight distributions, ranging from discrete to continuous, or even mixed random variables. To achieve this versatility, we developed an improved computational method to estimate a continuous density by maximizing the entropy subject to moment constraints. This result is of independent interest and could permeate benefits in other, broader contexts. In light of this, a natural direction for future research is the development of bootstrap-based inference methods for the WRDPG model. Establishing rigorous guarantees in this setting, however, would require controlling both moment truncation effects and the distributional features induced by the entropy maximization scheme, extending recent bootstrap results available for the vanilla RDPG framework~\cite{levin2025bootstrapping}. The findings of this study highlight the importance of incorporating higher-order moments in graph models, contributing to the methods of inference and generation in network data analysis. All techniques developed in this paper can be extended to heterophilous graphs, i.e., graphs where the moment matrices are not necessarily positive semidefinite, by bringing to bear the generalized RDPG model introduced in \cite{rubin2022statistical}. Asymptotic results remain tractable in this setting, since our methodology is based on that of \cite{gallagher2023spectral}, which explicitly accounts for heterophilic graphs in their model. Future work may explore further adaptations of the WRDPG model, including enhancements in (real-time) computational efficiency and applications to evolving or dynamic networks. An additional promising avenue concerns the development of clustering guarantees in settings where community structure is identifiable only through higher-order moments. Such results would require new identifiability and separation conditions across moments, as well as a careful analysis of the trade-offs inherent in leveraging higher-order information. A related open problem is the design of principled methods for aggregating information across moment matrices, such as joint embedding approaches, together with theoretical guarantees tailored to the WRDPG setting, where latent structures may vary across moments.

\begin{appendix}
\section{Proof of Proposition \ref{prop:remainders}}\label{app:technical_lemmas}
Here we state and prove some auxiliary results that will be necessary in order to prove Proposition \ref{prop:remainders}.
We begin with a lemma that establishes a bound on the MGF of a sub-Weibull random variable near the origin.

\begin{lemma}\label{lemma:mgf_weibull}
    Let \( Y \) be a sub-Weibull random variable with parameter \( \theta > 1 \); that is, there exists a constant \( C > 0 \) such that
    \begin{displaymath}
        \P{|Y| \geq t} \leq 2 \exp\left( - \left( \frac{t}{C} \right)^{1/\theta} \right) \quad \text{for all } t \geq 0.
    \end{displaymath}
   
    Then there exist $T>1$ and \( C_1> 0 \), depending only on \( \theta \) and \( C \), such that:
    \begin{displaymath}
        \E{e^{\lambda |Y|}} \leq \exp\left( - C_1 \lambda^{1/(1 - \theta)} \right) \text{ for all } \lambda \in [0,\lambda_0],
    \end{displaymath}
    where
    \begin{displaymath}
        \lambda_0 = \frac{1}{\theta C^{1/\theta}} T^{(1 - \theta)/\theta}.
    \end{displaymath}
\end{lemma}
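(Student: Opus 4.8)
The plan is to use the \emph{layer-cake} representation of the exponential moment together with the sub-Weibull tail hypothesis, reducing the whole statement to a one-dimensional integral estimate. Writing $e^{\lambda|Y|} = 1 + \lambda\int_0^{|Y|} e^{\lambda t}\, dt$ and taking expectations via Fubini gives
\begin{displaymath}
\E{e^{\lambda|Y|}} = 1 + \lambda\int_0^\infty e^{\lambda t}\,\P{|Y| > t}\, dt \le 1 + 2\lambda\int_0^\infty \exp\big(h_\lambda(t)\big)\, dt,
\end{displaymath}
where $h_\lambda(t) := \lambda t - (t/C)^{1/\theta}$; the probabilistic content is then exhausted in one line, and everything comes down to the elementary calculus of the exponent $h_\lambda$.

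First I would study $h_\lambda$ as a function of $t$. Its derivative $h_\lambda'(t) = \lambda - \tfrac{1}{\theta C^{1/\theta}}\, t^{1/\theta-1}$ has a unique zero $t^\star(\lambda)$, and solving $h_\lambda'(t^\star)=0$ gives $t^\star(\lambda) = (\lambda\theta C^{1/\theta})^{\theta/(1-\theta)}$. A direct computation then shows that the $\lambda_0$ appearing in the statement is exactly the value of $\lambda$ for which $t^\star(\lambda) = T$; moreover $\lambda \mapsto t^\star(\lambda)$ is monotone, so the constraint $\lambda \le \lambda_0$ pins the location of the dominant contribution of the integral on the correct side of $T$ uniformly in $\lambda$. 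Substituting $t^\star(\lambda)$ back into $h_\lambda$ and using $h_\lambda'(t^\star)=0$ to eliminate the fractional power, one obtains an expression of the form $(\text{a constant depending only on }\theta\text{ and }C)\cdot\lambda^{1/(1-\theta)}$; this is precisely where the exponent $1/(1-\theta)$ and the constant $C_1$ in the conclusion come from, and where the hypothesis $\lambda \le \lambda_0$ is used.

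Next I would assemble the integral bound: split $(0,\infty)$ into a neighbourhood of $t^\star(\lambda)$ and its complement, bound $\exp(h_\lambda(t))$ by its value at $t^\star(\lambda)$ on the neighbourhood, and on the complement bound $h_\lambda$ using the monotonicity of $h_\lambda'$ so that the integral there contributes only a factor polynomial in $\lambda^{-1}$ relative to the peak. Because $\lambda \le \lambda_0$, that polynomial factor, together with the prefactor $\lambda$ and the additive $1$, can be reabsorbed into the exponential via $1+x \le e^x$ after possibly enlarging $T$ (equivalently shrinking $\lambda_0$) and adjusting $C_1$; the case $\lambda = 0$ is trivial since both sides equal $1$. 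This yields $\E{e^{\lambda|Y|}} \le \exp\big(-C_1\lambda^{1/(1-\theta)}\big)$ on $[0,\lambda_0]$ as claimed. A convenient change of variables $u = (t/C)^{1/\theta}$ recasts $\int_0^\infty e^{h_\lambda(t)}\, dt$ as a Laplace-type integral $C\theta\int_0^\infty u^{\theta-1}\exp(\lambda C u^\theta - u)\, du$, on which the same critical-point analysis can be carried out somewhat more transparently.

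The main obstacle is this integral estimate: one must choose $T$ (and hence $\lambda_0$) so that the contribution of the peak genuinely dominates, uniformly over the whole interval $[0,\lambda_0]$, and one must track carefully that every constant introduced — in the width of the peak, in the tail remainder, and in the final reabsorption — depends only on $\theta$ and $C$. Verifying these uniformity and constant-dependence claims, and in particular checking the boundary $\lambda = \lambda_0$, is the part of the argument that requires genuine care; the rest is bookkeeping once the behaviour of $h_\lambda$ is understood.
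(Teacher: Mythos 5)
Your proposal follows the same route as the paper's own proof (Fubini/layer-cake, then a critical-point Laplace analysis of the exponent $h_\lambda(t)=\lambda t-(t/C)^{1/\theta}$), but the central step fails --- and it fails in a way that also affects the paper's argument. Since $\theta>1$ we have $1/\theta<1$, so $t\mapsto -(t/C)^{1/\theta}$ is convex and sublinear; hence $h_\lambda$ is convex on $(0,\infty)$, its unique critical point $t^\star(\lambda)$ is a global \emph{minimum} rather than a peak, and $h_\lambda(t)\to+\infty$ as $t\to\infty$ because the linear term dominates the fractional power. Consequently $\int_0^\infty e^{h_\lambda(t)}\,dt=+\infty$ for every $\lambda>0$: the complement of the neighbourhood of $t^\star$ does not contribute ``only a factor polynomial in $\lambda^{-1}$ relative to the peak,'' it contributes an infinite amount, and no choice of $T$ or $\lambda_0$ repairs this. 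Nor is this an artifact of working with an upper bound on the tail: a random variable whose tail actually behaves like $\exp(-(t/C)^{1/\theta})$ with $\theta>1$ has $\E{e^{\lambda |Y|}}=+\infty$ for every $\lambda>0$, so no finite MGF bound can hold uniformly over the sub-Weibull class in this regime.

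There is also a cheaper way to see that the conclusion itself is unprovable: $e^{\lambda|Y|}\geq 1$ gives $\E{e^{\lambda|Y|}}\geq 1$, while for $\lambda\in(0,\lambda_0]$ the exponent $1/(1-\theta)$ is negative, so $\lambda^{1/(1-\theta)}=\lambda^{-1/(\theta-1)}\geq \lambda_0^{-1/(\theta-1)}>0$ and the claimed bound $\exp(-C_1\lambda^{1/(1-\theta)})$ is strictly less than $1$ (and tends to $0$ as $\lambda\downarrow 0$, where the left-hand side tends to $1$). The inequality to be proved therefore contradicts $\E{e^{\lambda|Y|}}\geq 1$ before any estimation begins. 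A correct treatment of concentration for sub-Weibull variables with $\theta>1$ must avoid the MGF of $|Y|$ altogether --- for instance by truncating $Y$ and applying Bernstein's inequality to the bounded part, or by using Orlicz-norm (generalized Bernstein--Orlicz) techniques. Your plan, like the paper's, cannot be completed as written.
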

\begin{proof}
    By Fubini’s theorem, we can write:
    \begin{displaymath}
        \E{e^{\lambda |Y|}} = \int_0^\infty \lambda e^{\lambda t} \P{|Y| \geq t} dt.
    \end{displaymath}
    Using the sub-Weibull tail bound, we obtain:
    \begin{displaymath}
        \E{e^{\lambda |Y|}} \leq 2\lambda \int_0^\infty \exp\left( \lambda t - \left( \frac{t}{C} \right)^{1/\theta} \right) dt.
    \end{displaymath}
    Let \(\displaystyle \phi_\lambda(t) = \lambda t - \left( \frac{t}{C} \right)^{1/\theta} \), and define:
    \begin{displaymath}
        I(\lambda) = \int_0^\infty \exp\left( \phi_\lambda(t) \right) dt.
    \end{displaymath}
    The function \( \phi_\lambda(t) \) attains its unique maximum at
    \begin{displaymath}
        t_\lambda = \left( \lambda \theta C^{1/\theta} \right)^{\theta/(1 - \theta)}.
    \end{displaymath}
    To apply Laplace's method (see \cite[Chapter 4]{de2014asymptotic}), we require \( t_\lambda \geq T \) for some large enough \( T > 1 \), so that the integrand is sharply peaked near \( t_\lambda \). Solving \( t_\lambda \geq T \) yields:
    \begin{displaymath}
        \lambda \leq \lambda_0 = \frac{1}{\theta C^{1/\theta}} T^{(1 - \theta)/\theta}.
    \end{displaymath}
    Under this condition, Laplace’s method gives:
    \begin{displaymath}
        I(\lambda) \leq K \exp\left( \phi_\lambda(t_\lambda) \right),
    \end{displaymath}
    for some constant \( K > 0 \) depending only on \( \theta \) and \( C \). Evaluating \( \phi_\lambda(t_\lambda) \), we find:
    \begin{align*}
        \phi_\lambda(t_\lambda) = \left( \frac{1}{\theta} - 1 \right) (\theta C)^{1/(1 - \theta)} \lambda^{1/(1 - \theta)}.
    \end{align*}
    Since \( \theta > 1 \), we define
    \begin{displaymath}
        C_1 = -\left( \frac{1}{\theta} - 1 \right) (\theta C)^{1/(1 - \theta)} > 0,
    \end{displaymath}
    so that
    \begin{displaymath}
        \phi_\lambda(t_\lambda) = - C_1 \lambda^{1/(1 - \theta)}.
    \end{displaymath}
    Putting everything together,
    \begin{displaymath}
        \E{e^{\lambda |Y|}} \leq 2\lambda I(\lambda) \leq 2\lambda K \exp\left( - C_1 \lambda^{1/(1 - \theta)} \right).
    \end{displaymath}
    Since $\lambda_0 \to 0$ as $T\to \infty$, we can choose a $T$ large enough so that $2\lambda_0 K \leq 1$. Therefore, for all $\lambda\in[0,\lambda_0]$ we have:
    \begin{displaymath}
        \E{e^{\lambda |Y|}} \leq 2\lambda_0 K\exp\left( - C_1 \lambda^{1/(1 - \theta)} \right) \leq \exp\left( - C_1 \lambda^{1/(1 - \theta)} \right),
    \end{displaymath}
    as claimed.
\end{proof}

We next present a concentration result for the sum of independent, centered sub-Weibull rvs with a common parameter $\theta$. The proof follows standard arguments used in analogous concentration results for sub-Gaussian and sub-exponential rvs (see Theorems 2.6.3 and 2.8.1 in \cite{vershynin2018hdpbook}).

\begin{proposition}\label{prop:concentration_sub_weibull}
    Let $Y_1,\dots,Y_N$ be independent, mean-zero, sub-Weibull random variables with parameter $\theta > 1$. That is, for each $i = 1, \dots, N$, there exists a constant \( C_i > 0 \) such that
    \begin{displaymath}
        \P{|Y_i| \geq t} \leq 2 \exp\left( - \left( \frac{t}{C_i} \right)^{1/\theta} \right) \quad \text{for all } t \geq 0.
    \end{displaymath}
    For any $T > 0$, define $\lambda_{\min} = \frac{1}{\theta C_{\max}^{1/\theta}} T^{(1 - \theta)/\theta}$, where $\ds C_{\max} = \max_{i=1,\dots,N} C_i$.

    Then there exists a large enough $T$ such that for all $t \geq 0$, it holds that
    \begin{displaymath}
        \P{\left\lvert \sum_{i=1}^N Y_i \right\rvert \geq t} \leq 2 \expon{ - \min \left\lbrace \lambda_{\min} t + NK_1 \lambda_{\min}^{1/(1-\theta)} , \left( \frac{t}{K_2} \right)^{1/\theta} N^{(\theta - 1)/\theta} \right\rbrace },
    \end{displaymath}
    where $K_1, K_2 > 0$ are constants depending only on $\theta$.
\end{proposition}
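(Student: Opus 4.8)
The plan is to run the Chernoff (exponential-moment) method, following the template of the sub-exponential Bernstein inequality (\cite[Thms.~2.6.3 and~2.8.1]{vershynin2018hdpbook}) with the quadratic cumulant control replaced by the sub-Weibull exponential-moment bound of Lemma~\ref{lemma:mgf_weibull}. First I would fix $T$ large enough that Lemma~\ref{lemma:mgf_weibull} holds when its scale constant is taken to be $C_{\max}=\max_{i}C_i$ (the lemma only requires $T$ to exceed a threshold depending on $\theta$ and $C_{\max}$), and set $\lambda_{\min}=\tfrac{1}{\theta C_{\max}^{1/\theta}}T^{(1-\theta)/\theta}$. Since $C_i\le C_{\max}$ and the admissibility threshold $\lambda_0^{(i)}=\tfrac{1}{\theta C_i^{1/\theta}}T^{(1-\theta)/\theta}$ is nonincreasing in the scale, one has $\lambda_{\min}\le\lambda_0^{(i)}$ for every $i$; hence for every $\lambda\in[0,\lambda_{\min}]$ Lemma~\ref{lemma:mgf_weibull} applies simultaneously to all the $Y_i$, giving $\E{e^{\lambda|Y_i|}}\le\exp\!\big(-C_1^{(i)}\lambda^{1/(1-\theta)}\big)$ with $C_1^{(i)}\ge K_1:=\min_i C_1^{(i)}$, where $K_1>0$ depends only on $\theta$ and $C_{\max}$.

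Next I would establish the one-sided tail. For any $\lambda\in(0,\lambda_{\min}]$, Markov's inequality applied to $e^{\lambda\sum_i Y_i}$, together with independence and $e^{\lambda Y_i}\le e^{\lambda|Y_i|}$, gives
\begin{displaymath}
\P{\sum_{i=1}^N Y_i\ge t}\;\le\;e^{-\lambda t}\prod_{i=1}^N\E{e^{\lambda|Y_i|}}\;\le\;\exp\!\Big(-\lambda t-NK_1\lambda^{1/(1-\theta)}\Big).
\end{displaymath}
It then remains to optimize over $\lambda$. A one-line differentiation shows that $\lambda\mapsto\lambda t+NK_1\lambda^{1/(1-\theta)}$ is decreasing on $(0,\lambda^\star)$ and increasing on $(\lambda^\star,\infty)$, with unique interior minimizer $\lambda^\star=\big(NK_1/((\theta-1)t)\big)^{(\theta-1)/\theta}$ (hence decreasing in $t$), at which its value equals $\theta\lambda^\star t$; a further routine simplification rewrites this value as $(t/K_2)^{1/\theta}N^{(\theta-1)/\theta}$ for a constant $K_2>0$ depending only on $\theta$ and $C_{\max}$. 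If $\lambda^\star\le\lambda_{\min}$ — equivalently $t$ above a threshold of order $N$ — I take $\lambda=\lambda^\star$ and get the exponent $-(t/K_2)^{1/\theta}N^{(\theta-1)/\theta}$, the second term inside the minimum; otherwise $\lambda^\star>\lambda_{\min}$, the objective is decreasing on $(0,\lambda_{\min}]$, so I take $\lambda=\lambda_{\min}$ and get $-\big(\lambda_{\min}t+NK_1\lambda_{\min}^{1/(1-\theta)}\big)$, the first term. In both cases the exponent produced is at least $\min$ of the two expressions, so $\P{\sum_i Y_i\ge t}\le\exp\big(-\min\{\,\cdot\,,\,\cdot\,\}\big)$ for every $t\ge0$ (the inequality being trivially true on the range where the right-hand side exceeds $1$).

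Finally I would symmetrize: applying the one-sided bound to $(-Y_1,\dots,-Y_N)$ — again independent, mean-zero, and sub-Weibull with parameter $\theta$ and the same scales $C_i$ — controls $\P{\sum_i Y_i\le-t}$ by the same quantity, and a union bound over the two tails yields the factor $2$. I expect the main obstacle to be the optimization step: carrying the constrained minimization out cleanly and, in particular, verifying that substituting $\lambda^\star$ collapses to exactly the exponent $(t/K_2)^{1/\theta}N^{(\theta-1)/\theta}$ with the asserted scaling in both $t$ and $N$, all while bookkeeping constants so that $K_1,K_2$ depend only on $\theta$ (and $C_{\max}$); a secondary point is stitching the two regimes correctly at the crossover $\lambda^\star=\lambda_{\min}$. (Should one prefer not to route everything through Lemma~\ref{lemma:mgf_weibull}, the same two-regime bound can be obtained by a truncation argument: split each $Y_i$ at a level $M=M(t,N)$, apply Chernoff to the bounded parts and the union bound $\sum_i\P{|Y_i|>M}$ to the unbounded parts, then optimize over $M$.)
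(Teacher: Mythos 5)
Your proposal follows essentially the same route as the paper's own proof: a Chernoff/Markov bound with independence, the moment-generating-function control of Lemma~\ref{lemma:mgf_weibull} applied uniformly on $[0,\lambda_{\min}]$ (via $C_{\max}$), the constant $K_1=\min_i C_{1,i}$, the same two-regime choice of $\lambda$ at the crossover between the interior stationary point $\lambda^\star$ and $\lambda_{\min}$ (yielding exactly the two exponents inside the minimum, with the same $\lambda^\star$ and value $\theta\lambda^\star t$), and a symmetric argument for $-S$ to get the two-sided bound. The only differences are cosmetic: you make the final union bound over the two tails explicit, and you note that $K_1,K_2$ inherit a dependence on $C_{\max}$ through the lemma, a point the paper glosses over when asserting dependence on $\theta$ alone.
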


\begin{proof}
Let \( S := \sum_{i=1}^N Y_i \). By Markov’s inequality and independence:
\begin{displaymath}
    \P{S \geq t} = \P{e^{\lambda S} \geq e^{\lambda t}} \leq e^{-\lambda t} \prod_{i=1}^N \E{e^{\lambda Y_i}}.
\end{displaymath}

Now, using Lemma~\ref{lemma:mgf_weibull}, there exists a large enough $T > 1$ such that for every $i = 1,\dots,N$ there exists a constant $C_{1,i}> 0$ (depending only on $C_i$ and $\theta$) such that for all $\lambda \in [0, \lambda_{\min}]$:
\begin{displaymath}
    \E{e^{\lambda |Y_i|}} \leq \exp\left( - C_{1,i} \lambda^{1/(1 - \theta)} \right).
\end{displaymath}
Using the fact that $Y_i$ is mean-zero, we have:
\begin{displaymath}
    \E{e^{\lambda Y_i}} \leq \E{e^{\lambda |Y_i|}} \leq \exp\left( - C_{1,i} \lambda^{1/(1 - \theta)} \right).
\end{displaymath}
Hence,
\begin{displaymath}
    \P{S \geq t} \leq \exp\left( - \lambda t - \sum_{i=1}^N C_{1,i} \lambda^{1/(1 - \theta)}  \right).
\end{displaymath}

Let $K_{1} = \min_i C_{1,i}$. Then $K_1$ depends only on $ \theta $, since the $ C_{1,i} $'s depend only on $ \theta $. Therefore:
\begin{align}
    \P{S \geq t} \leq \expon{ - \lambda t - N K_1 \lambda^{1/(1 - \theta)} }.
    \label{eq:concentration_form}
\end{align}
Minimizing the exponent over $ \lambda \in [0, \lambda_{\min}] $, the optimal choice is:
\begin{displaymath}
    \lambda^* = \min\left\lbrace \lambda_{\min}, \left( \frac{t (\theta-1)}{ N K_1} \right)^{(1 - \theta)/\theta} \right\rbrace.
\end{displaymath}
Substituting into \eqref{eq:concentration_form}, we obtain:
\begin{displaymath}
    \P{S \geq t} \leq  \exp\left( - \min \left\lbrace \lambda_{\min} t + NK_1\lambda_{\min}^{1/(1 - \theta)}, \left( \frac{t}{K_2} \right)^{1/\theta} N^{(\theta - 1)/\theta} \right\rbrace \right),
\end{displaymath}
where $ K_2= \frac{\theta^\theta}{(\theta-1)^{\theta-1}}(K_1^{(\theta-1)/\theta}) $ depends only on $ \theta $.

A symmetric argument for $-S$ gives the same bound for $ \P{-S \geq t} $, and the result follows.
\end{proof}

We next state a lemma that establishes a bound on the difference between the matrix of entry-wise self-products, $\bbW^{(k)}$, and the true moments matrix, $\bbX[k]\bbX^\top[k]$. The proof follows the general scheme of \cite[Proposition 1]{gallagher2023spectral}, but differs in a key aspect: we rely on a different result for tail bounds of sums of independent matrices. This is due to the fact that the concentration result they invoke does not hold in our more general setup.

\begin{lemma}\label{lemma:A-P_spectral_norm}
Let $(\bbW,\bbX_k) \sim \mathrm{WRDPG}(F)$ satisfy the hypotheses of Theorem \ref{theorem:consistency}. For each fixed integer $k\geq 0$, let $\bbW_k := \bbW^{(k)}$ and $\bbM_k := \bbX[k]\bbX^\top[k]$. Then 
\begin{align*}
\spectral{\bbW_k - \bbM_k} = \OP{\log^{k\theta}N}.
\end{align*}
\end{lemma}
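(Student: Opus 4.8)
The plan is to realize $\bbW_k-\bbM_k$ as a sum of independent, centered, symmetric random matrices and to bound its spectral norm by a matrix concentration inequality, following the general scheme of \cite[Proposition 1]{gallagher2023spectral} but substituting their concentration step---valid only for bounded or sub-Gaussian entries---by one adapted to the sub-Weibull tails of Assumption~\ref{assumption:bounded_rv}. Since self-loops are excluded, I would write
\begin{displaymath}
\bbW_k-\bbM_k=\sum_{1\le i<j\le N}\xi_{ij}\left(\bbe_i\bbe_j^\top+\bbe_j\bbe_i^\top\right)-\sum_{i=1}^N M_{ii}\,\bbe_i\bbe_i^\top,
\end{displaymath}
with $\xi_{ij}:=W_{ij}^k-M_{ij}$ and $\bbe_i$ the $i$-th canonical basis vector. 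The deterministic diagonal term has spectral norm $\max_i|M_{ii}|\le L_k=O(1)$ and is negligible, while the first sum is a sum of independent, mean-zero matrices. By Remark~\ref{remark:weibull} each $\xi_{ij}$ is sub-Weibull with parameter $k\theta$, which is the only distributional input the argument needs.

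The first step is a truncation. A union bound over the $\binom{N}{2}$ pairs, using the sub-Weibull tail of the $\xi_{ij}$, shows $\max_{i<j}|\xi_{ij}|=\OP{\log^{k\theta}N}$; hence for a suitable constant the event $\ccalE:=\{\max_{i<j}|\xi_{ij}|\le R\}$ with $R\asymp\log^{k\theta}N$ has probability at least $1-N^{-\alpha}$. On $\ccalE$ the matrix $\bbW_k-\bbM_k$ agrees with the sum of the (recentered) truncated variables $\xi_{ij}\ind{|\xi_{ij}|\le R}$, whose summands have spectral norm at most $R$, so it suffices to control the spectral norm of this bounded sum. For that I would invoke the matrix Bernstein inequality (see \cite[Ch.~5]{vershynin2018hdpbook}): because $(\bbe_i\bbe_j^\top+\bbe_j\bbe_i^\top)^2=\bbe_i\bbe_i^\top+\bbe_j\bbe_j^\top$ for $i\ne j$, the matrix variance statistic is diagonal, equal to
\begin{displaymath}
v=\Big\|\sum_{i<j}\E{\xi_{ij}^2}\left(\bbe_i\bbe_i^\top+\bbe_j\bbe_j^\top\right)\Big\|_2=\max_i\sum_{j\ne i}\var{W_{ij}^k},
\end{displaymath}
and Bernstein gives a tail of order $2N\exp\!\big(-\tfrac{t^2/2}{v+Rt/3}\big)$, so that choosing $t$ to make this $O(N^{-\alpha})$ yields a bound of the form $t\asymp\sqrt{v\log N}+R\log N$. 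The truncation scale $R\asymp\log^{k\theta}N$ is exactly what injects the $\log^{k\theta}N$ factor into the estimate.

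The step I expect to be the main obstacle is controlling the variance contribution $\sqrt{v\log N}$ so that the final rate collapses to the claimed $\OP{\log^{k\theta}N}$. This is where the WRDPG structure, rather than generic matrix concentration, must be exploited: the crude bound $v\le N\max_{ij}\var{W_{ij}^k}$ is too lossy, and matching the stated polylogarithmic rate requires a sharp accounting of the aggregated row variances $\sum_{j\ne i}\var{W_{ij}^k}$ in conjunction with the spectral structure of $\bbM_k$ (whose top $d$ eigenvalues are $\EquivP{N}$, as established in Appendix~\ref{app:eigenvalues}). Once this variance control is secured, the two remaining ingredients---the union-bound truncation and the matrix Bernstein application---are routine. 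A variant that avoids truncation altogether would feed the MGF bound of Lemma~\ref{lemma:mgf_weibull} directly into a matrix Laplace-transform argument, but it routes the same variance-versus-tail tradeoff through the machinery already developed in Appendix~\ref{app:technical_lemmas}.
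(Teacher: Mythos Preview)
Your truncation-plus-Bernstein route cannot reach the stated polylogarithmic rate, and the obstacle you flag is not a missing refinement but a hard barrier. The variance proxy $v=\max_i\sum_{j\ne i}\var{W_{ij}^k}$ is genuinely of order $N$: each of the $N-1$ summands $\var{W_{ij}^k}=\bbx_i^\top[2k]\bbx_j[2k]-(\bbx_i^\top[k]\bbx_j[k])^2$ is, under the model's assumptions, generically bounded away from zero, and neither the low-rank structure of $\bbM_k$ nor the fact that its nonzero eigenvalues are $\EquivP{N}$ bears on this row-wise variance sum---those facts concern the \emph{mean} matrix, not the fluctuations of $\bbW_k-\bbM_k$. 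Matrix Bernstein therefore always produces a $\sqrt{N\log N}$ contribution, and no ``sharp accounting'' within that framework can remove it.

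The paper takes precisely the variant you mention last and then set aside: it feeds the sub-Weibull MGF control of Lemma~\ref{lemma:mgf_weibull} directly into Tropp's master tail bound \cite[Corollary~3.7]{tropp2012user}. After computing $e^{\lambda\bbY_{ij}}$ explicitly and bounding it via Gershgorin by $\exp\big(|\lambda Y_{ij}|(\bbe_i\bbe_i^\top+\bbe_j\bbe_j^\top)\big)$, one gets $\E{e^{\lambda\bbY_{ij}}}\preccurlyeq\exp\big(g(\lambda)(\bbe_i\bbe_i^\top+\bbe_j\bbe_j^\top)\big)$ with $g(\lambda)=-K_1\lambda^{1/(1-k\theta)}$, so that $\rho=\spectral{\sum_{i\le j}(\bbe_i\bbe_i^\top+\bbe_j\bbe_j^\top)}=N$, and the infimum over $\lambda\in[0,\lambda_{\min}]$ is then optimized exactly as in the scalar Proposition~\ref{prop:concentration_sub_weibull}. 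Your remark that this ``routes the same variance-versus-tail tradeoff'' undersells the difference: Bernstein is the instance of the Laplace-transform method obtained by replacing the MGF with its second-order Taylor bound, and it is exactly that truncation of the MGF that forces the $\sqrt{v\log N}$ term to appear. Keeping the full sub-Weibull MGF envelope is what the paper's argument hinges on, so that alternative is not a variant of your plan but the actual proof.
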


\begin{proof}
Note that the definition of our model implies that $\E{\bbW_k} = \bbM_k$, so we are trying to control the spectral norm of the centered matrix $\bbW_k-\bbM_k$. To that end, we will use a tail bound for sums of independent matrices \cite[Corollary 3.7]{tropp2012user}. That result can be stated as follows: assume we have a finite sequence $\{\bbY_l\}_{l=1}^R$ of independent, self-adjoint, random $N\times N$ matrices that satisfy
\begin{displaymath}
    \E{e^{\lambda \bbY_l}} \preccurlyeq e^{g(\lambda)\bbA_l} \,\,\,\text{ for all } \lambda \in [0,\lambda_0]
\end{displaymath}
for some function $g:[0,\lambda_0] \to \reals$ and a finite sequence $\{\bbA_l\}_{l=1}^{R}$ of fixed self-adjoint matrices. Here, $\succcurlyeq$ denotes the semidefinite ordering on Hermitian matrices, i.e., $\bbA \succcurlyeq \bbB$ if $\bbA-\bbB$ is positive semi-definite. Define the scale parameter 
\begin{displaymath}
    \rho = \spectral{ \sum_{l=1}^R \bbA_l } .
\end{displaymath}
Then for all $t\geq 0$ it holds:
\begin{displaymath}
    \P{\spectral{ \sum_{l=1}^R \bbY_l } \geq t} \leq N \inf_{\lambda \in [0,\lambda_0]} \expon{-\lambda t + g(\lambda)\rho}.
\end{displaymath}

In order to use this result, for each $1\leq j\leq i\leq N$ let $\bbY_{ij}$ be the $N\times N$ matrix whose $(i,j)$ and $(j,i)$ entries equal $W_{ij}^k - \bbx_{i}^\top[k]\bbx_j[k] := Y_{ij}$, with the rest being 0. Then, $\sum_{ij} \bbY_{ij} = \bbW_k-\bbM_k$ and $\E{\bbY_{ij}}=\mathbf{0}$ since $\E{Y_{ij}} = 0$. Also note that if $\bbe_i$ is the $i$-th vector of the canonical basis of $\reals^N$, we have:
\begin{align*}
    \bbY_{ij} = Y_{ij}(\bbe_i\bbe_j^\top+\bbe_j\bbe_i^\top) \Rightarrow \bbY_{ij}^p =
    \begin{cases}
        Y_{ij}^p(\bbe_i\bbe_j^\top+\bbe_j\bbe_i^\top) \text { if } p \text{ is odd}\\
        Y_{ij}^p(\bbe_i\bbe_i^\top+\bbe_j\bbe_j^\top) \text { if } p \text{ is even, } p\geq 2
    \end{cases}.
\end{align*}
Therefore:
\begin{align*}
     e^{\lambda \bbY_{ij}} &= \bbI + \sum_{p \text { odd}} \frac{(\lambda Y_{ij})^p}{p!} (\bbe_i\bbe_j^\top+\bbe_j\bbe_i^\top) + \sum_{p \text { even},\, p\geq 2} \frac{(\lambda Y_{ij})^p}{p!} (\bbe_i\bbe_i^\top+\bbe_j\bbe_j^\top)\\
     &= \bbI + \sinh(\lambda Y_{ij}) (\bbe_i\bbe_j^\top+\bbe_j\bbe_i^\top) + (\cosh(\lambda Y_{ij})-1) (\bbe_i\bbe_i^\top+\bbe_j\bbe_j^\top).
\end{align*}
This means that $e^{\lambda \bbY_{ij}}$ has ones along the diagonal, except at the $(i,i)$ and $(j,j)$ entries, which are equal to $\cosh(\lambda Y_{ij})$. Its off-diagonal entries are zero, except for the $(i,j)$ and $(j,i)$ entries, which are equal to $\sinh(\lambda Y_{ij})$. Using the identity $e^{|x|} - \cosh(x) = |\sinh(x)|$ for all $x \in \reals$, together with the Gershgorin circle theorem, we obtain:
\begin{displaymath}
    e^{\lambda \bbY_{ij}} \preccurlyeq \bbI + (e^{|\lambda Y_{ij}|}-1)(\bbe_i\bbe_i^\top+\bbe_j\bbe_j^\top) = \expon{|\lambda Y_{ij}|(\bbe_i\bbe_i^\top+\bbe_j\bbe_j^\top)},
\end{displaymath}
where the last equality follows from the fact that the matrix $\bbI + \left(e^{|\lambda Y_{ij}|} - 1\right)(\bbe_i\bbe_i^\top + \bbe_j\bbe_j^\top)$ is diagonal, with all entries equal to 1 except at the $(i,i)$ and $(j,j)$ positions, which are equal to $e^{|\lambda Y_{ij}|}$. 
Since the matrix expectation preserves the semidefinite order, this in turn implies
\begin{displaymath}
    \E{e^{\lambda \bbY_{ij}}} \preccurlyeq \E{\expon{|\lambda Y_{ij}|(\bbe_i\bbe_i^\top+\bbe_j\bbe_j^\top)}}.
\end{displaymath}
As discussed in Remark~\ref{remark:weibull}, Assumption~\ref{assumption:bounded_rv} implies that $Y_{ij}$ is a sub-Weibull random variable with parameter $k\theta$. Therefore, by Lemma~\ref{lemma:mgf_weibull}, there exists a constant $C_{1}^{i,j}$ and a threshold $\lambda_0^{i,j} > 0$ such that for all $\lambda \in [0, \lambda_0^{i,j}]$,
\begin{displaymath}
     \E{e^{\lambda \bbY_{ij}}} \preccurlyeq \E{\expon{-C_{1}^{i,j} \lambda^{1/(1-k\theta)} (\bbe_i\bbe_i^\top+\bbe_j\bbe_j^\top)}}.
\end{displaymath}
Letting $K_1 = \min_{i,j} C_{1}^{i,j}$ and $\lambda_{\min} = \min_{i,j} \lambda_0^{i,j}$, we obtain that for all $\lambda \in [0,\lambda_{\min}]$
\begin{displaymath}
     \E{e^{\lambda \bbY_{ij}}} \preccurlyeq \E{\expon{-K_{1} \lambda^{1/(1-k\theta)}(\bbe_i\bbe_i^\top+\bbe_j\bbe_j^\top)}}.
\end{displaymath}
Thus, we can apply the tail bound for sums of independent random matrices stated above, with $g(\lambda) = -K_1 \lambda^{1/(1 - k\theta)}$ and $\bbA_{ij} = \bbe_i\bbe_i^\top + \bbe_j\bbe_j^\top$. Therefore,
\begin{align}
    \P{\spectral{\bbW_k - \bbM_k} \geq t} \leq N \inf_{\lambda \in [0,\lambda_{\min}]} \expon{-\lambda t -K_1\lambda^{1/(1-k\theta)}\rho},
    \label{eq:inf_Wk-Mk}
\end{align}
where
\begin{displaymath}
    \rho = \spectral{\sum_{1\leq i \leq j\leq N} (\bbe_i\bbe_i^\top+\bbe_j\bbe_j^\top)} = \spectral{N\bbI} = N.
\end{displaymath}
Note that the function inside the infimum in~\eqref{eq:inf_Wk-Mk} is equivalent to that on the right-hand side of~\eqref{eq:concentration_form}. Therefore, an argument analogous to the one in the proof of Proposition~\ref{prop:concentration_sub_weibull} yields
\begin{align*}
    \P{\spectral{\bbW_k - \bbM_k} \geq t} \leq N \exp & \left( - \min \left\lbrace \lambda_{\min} t + NK_1\lambda_{\min}^{1/(1 - k\theta)}, \left( \frac{t}{K_2} \right)^{1/(k\theta)} N^{(k\theta - 1)/(k\theta)} \right\rbrace \right),
\end{align*}
where $K_2= \frac{(k\theta)^{k\theta}}{(k\theta-1)^{k\theta-1}}(K_1^{(k\theta-1)/(k\theta)})$. Since for large $t$ the term involving $t^{1/(k\theta)}$ dominates the minimum, we have that, for large enough $t$, 
\begin{align*}
    \P{\spectral{\bbW_k - \bbM_k} \geq t} \leq N \expon{ - \left( \frac{t}{K_2} \right)^{1/(k\theta)} N^{(k\theta - 1)/(k\theta)}}.
\end{align*}
Choosing $t = K_2\log^{k\theta} N$ then yields the desired result.
\end{proof}

The following propositions can be proven with the same arguments found in  \cite[Proposition 16, Lemma 17]{lyzinski2017community} and \cite[Propositions 3 and 5]{gallagher2023spectral}, respectively. While we omit the detailed proofs, we provide a brief outline of the key ideas underlying each result.

Proposition~\ref{prop:procrustes} follows from a Procrustes-alignment style argument, which leverages the relationship between the principal angles of the subspaces spanned by the columns of \(\bbU_k\) and \(\hbU_k\), and the eigenvalues of the difference \(\hbU_k\hbU_k^\top - \bbU_k\bbU_k^\top\), together with an application of the Davis–Kahan theorem \cite{yu2015useful}. Proposition~\ref{prop:vince_2} can be proved using similar techniques.

Proposition~\ref{prop:vince_1} is derived via a Hoeffding-type concentration argument analogous to that used in the proof of Theorem~\ref{theorem:consistency}. Lastly, Proposition~\ref{prop:gallagher_1} is a direct consequence of Propositions~\ref{prop:procrustes} and~\ref{prop:vince_2}.

\begin{proposition} \label{prop:procrustes}
For each $k$, let
\begin{align*}
    \bbU_k^\top\hbU_k =  \bbS_{k_1}\bbSigma_k\bbS_{k_2}^\top
\end{align*}
be the singular value decomposition of $\bbU_k^\top\hbU_k$. If $\bbS_k := \bbS_{k_1}\bbS_{k_2}^\top$, then it holds that
    \begin{align*}
        \Fro{\bbU_k^\top \hbU_k-\bbS_k}= \OP{N^{-1} \log^{k\theta}{N}}.
    \end{align*}
\end{proposition}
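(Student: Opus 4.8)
The plan is to reduce the statement to a $\sin\bbTheta_k$ bound between the $d$-dimensional subspaces $\mathrm{col}(\bbU_k)$ and $\mathrm{col}(\hbU_k)$, and then to apply a Davis–Kahan inequality. First I would record the standard identity that the singular values of $\bbU_k^\top\hbU_k$ are $\cos\theta_1,\dots,\cos\theta_d$, where $0\le\theta_d\le\dots\le\theta_1\le\pi/2$ are the principal angles between $\mathrm{col}(\bbU_k)$ and $\mathrm{col}(\hbU_k)$; hence in the decomposition $\bbU_k^\top\hbU_k=\bbS_{k_1}\bbSigma_k\bbS_{k_2}^\top$ we have $\bbSigma_k=\mathrm{diag}(\cos\theta_1,\dots,\cos\theta_d)$, and since $\bbS_{k_1},\bbS_{k_2}\in O(d)$,
\[
\Fro{\bbU_k^\top\hbU_k-\bbS_k}^2=\Fro{\bbS_{k_1}(\bbSigma_k-\bbI)\bbS_{k_2}^\top}^2=\sum_{i=1}^d(1-\cos\theta_i)^2 .
\]
Because $0\le\cos\theta_i\le 1$, one has $(1-\cos\theta_i)^2\le 1-\cos^2\theta_i=\sin^2\theta_i$, so
\[
\Fro{\bbU_k^\top\hbU_k-\bbS_k}\le\left(\sum_{i=1}^d\sin^2\theta_i\right)^{1/2}=:\Fro{\sin\bbTheta_k}.
\]
(The sharper inequality $(1-\cos\theta_i)^2\le\sin^4\theta_i$ would in fact give $\OP{N^{-2}\log^{2k\theta}N}$, but the stated rate is all that is needed in Proposition~\ref{prop:remainders}.)

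Next I would bound $\Fro{\sin\bbTheta_k}$ by the spectral perturbation $\spectral{\bbW_k-\bbM_k}$. The eigenvalues of the projector difference $\hbU_k\hbU_k^\top-\bbU_k\bbU_k^\top$ are $0$ (with multiplicity $N-2d$) and $\pm\sin\theta_i$, $i=1,\dots,d$, so $\Fro{\sin\bbTheta_k}^2=\tfrac12\Fro{\hbU_k\hbU_k^\top-\bbU_k\bbU_k^\top}^2$; this is exactly the quantity that the Davis–Kahan $\sin\bbTheta_k$ theorem controls. Since $\bbM_k=\bbX[k]\bbX^\top[k]$ has rank exactly $d$, its $(d+1)$-st eigenvalue is $0$, while Assumption~\ref{assumption:full_Rank} together with the eigenvalue estimate of Appendix~\ref{app:eigenvalues} gives $\lambda_d(\bbM_k)=\EquivP{N}$; thus the eigengap separating the top-$d$ eigenspace of $\bbM_k$ from the rest is $\EquivP{N}$. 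Working on the high-probability event on which $\spectral{\bbW_k-\bbM_k}$ obeys the bound of Lemma~\ref{lemma:A-P_spectral_norm} and $\lambda_d(\bbM_k)\ge cN$ — so that $\spectral{\bbW_k-\bbM_k}=\OP{\log^{k\theta}N}$ is eventually below half the gap and $\hbU_k$ genuinely spans the top-$d$ eigenspace of $\bbW_k$ (as needed for the ASE to be well defined, see Appendix~\ref{app:eigenvalues}) — the variant of Davis–Kahan in \cite[Theorem~2]{yu2015useful} yields
\[
\Fro{\sin\bbTheta_k}\le\frac{2\sqrt{d}\,\spectral{\bbW_k-\bbM_k}}{\lambda_d(\bbM_k)}=\OP{\frac{\log^{k\theta}N}{N}} .
\]
Combining this with the previous display gives $\Fro{\bbU_k^\top\hbU_k-\bbS_k}=\OP{N^{-1}\log^{k\theta}N}$, as claimed.

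I do not expect a substantive obstacle here: the argument is essentially a transcription of the corresponding steps in \cite[Proposition~16]{lyzinski2017community} and \cite[Proposition~3]{gallagher2023spectral}, with Lemma~\ref{lemma:A-P_spectral_norm} supplying the sub-Weibull spectral bound in place of the narrower concentration inputs used there. The only points requiring care are bookkeeping ones — intersecting the two high-probability events described above so that Davis–Kahan is applied with an effectively deterministic eigengap, and then re-expressing the resulting deviation bound in the paper's $\OP{\cdot}$ notation, which amounts to checking that the failure probabilities of those events are $O(N^{-\alpha})$ for every $\alpha>0$, as guaranteed by Lemma~\ref{lemma:A-P_spectral_norm} and the appendix eigenvalue estimate. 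If one preferred to avoid the projector-difference identity, one could instead run the argument directly through the $\sin\bbTheta_k$ form of Davis–Kahan applied to $\bbU_k$ and $\hbU_k$ regarded as the spectral projectors of $\bbM_k$ and $\bbW_k$, obtaining the identical bound.
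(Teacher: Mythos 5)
Your proposal is correct and follows essentially the same route the paper sketches for this result: identifying the singular values of $\bbU_k^\top\hbU_k$ with cosines of the principal angles, passing to $\Fro{\sin\bbTheta_k}$ via the eigenvalues of the projector difference $\hbU_k\hbU_k^\top-\bbU_k\bbU_k^\top$, and applying the Davis--Kahan theorem of \cite{yu2015useful} with the $\EquivP{N}$ eigengap from Appendix~\ref{app:eigenvalues} and the spectral bound of Lemma~\ref{lemma:A-P_spectral_norm}. The elementary inequality $(1-\cos\theta_i)^2\le\sin^2\theta_i$ suffices for the stated rate, and your bookkeeping of the high-probability events is consistent with the paper's $\OP{\cdot}$ convention.
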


\begin{proposition}\label{prop:vince_1}
    For each $k$, the following holds:
    \begin{align*}
        &\Fro{\bbU_k^\top(\bbW_k-\bbM_k)\bbU_k} = \OP{\log^{k\theta}{N}}.
    \end{align*}
\end{proposition}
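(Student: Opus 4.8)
The plan is to reduce the claim to a scalar concentration estimate, exactly as the entrywise bound on $[(\bbW_k-\bbM_k)\bbU_k]_{ij}$ is obtained inside the proof of Theorem~\ref{theorem:consistency}. Conditionally on the latent positions $\bbX[k]$, the matrix $\bbM_k=\bbX[k]\bbX^\top[k]$ and its eigenvector matrix $\bbU_k\in\reals^{N\times d}$ are fixed, while the entries of $\bbW_k-\bbM_k$ are independent and mean-zero off the diagonal. Since $\bbU_k^\top(\bbW_k-\bbM_k)\bbU_k$ is a $d\times d$ matrix with $d$ fixed, $\Fro{\bbU_k^\top(\bbW_k-\bbM_k)\bbU_k}\le d\max_{a,b}\big|(\bbU_k^\top(\bbW_k-\bbM_k)\bbU_k)_{ab}\big|$, so it suffices to control one generic entry. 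Write $\bbu,\bbv$ for two columns of $\bbU_k$, which are unit vectors since $\bbU_k$ has orthonormal columns; the entry is $\bbu^\top(\bbW_k-\bbM_k)\bbv=\sum_{i,j}u_iv_j(W_{ij}^k-M_{ij})$. First I would excise the diagonal: self-loops are excluded, so $(\bbW_k)_{ii}=0$ while $(\bbM_k)_{ii}=\twonorm{\bbx_i[k]}^2$, hence $\bbu^\top(\bbW_k-\bbM_k)\bbv=\sum_{i\ne j}u_iv_j(W_{ij}^k-M_{ij})-\sum_i u_iv_i\twonorm{\bbx_i[k]}^2$. By Assumption~\ref{assumption:bounded_rv} the moments $|M_{ij}|$ are uniformly bounded by some $L_k$, so the deterministic correction term is at most $L_k\sum_i|u_i||v_i|\le L_k\twonorm{\bbu}\twonorm{\bbv}=L_k$ by Cauchy--Schwarz, i.e.\ it is $\OP{1}$.

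For the random part I would symmetrize using $W_{ij}=W_{ji}$: $\sum_{i\ne j}u_iv_j(W_{ij}^k-M_{ij})=\sum_{i<j}(W_{ij}^k-M_{ij})\,c_{ij}$ with $c_{ij}:=u_iv_j+u_jv_i$, which by edge independence is a sum of independent, mean-zero summands. By Remark~\ref{remark:weibull}, each $W_{ij}^k-M_{ij}$ is sub-Weibull with parameter $k\theta$, so $(W_{ij}^k-M_{ij})c_{ij}$ is sub-Weibull with the same parameter and constant proportional to $|c_{ij}|$. Orthonormality of the columns of $\bbU_k$ supplies the two inputs needed for a Hoeffding-type bound: $\max_{i<j}|c_{ij}|\le 2\maxnorm{\bbU_k}^2\le 2$, and the variance proxy obeys $\sum_{i<j}c_{ij}^2\le 2\sum_{i\ne j}u_i^2v_j^2\le 2\twonorm{\bbu}^2\twonorm{\bbv}^2=2$. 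Applying the concentration inequality for sums of independent centered sub-Weibull variables (Proposition~\ref{prop:concentration_sub_weibull}, equivalently the Chernoff argument built on the MGF bound of Lemma~\ref{lemma:mgf_weibull}), in the same way as in the proofs of Theorem~\ref{theorem:consistency} and of Propositions~3 and~5 in \cite{gallagher2023spectral} and Proposition~16/Lemma~17 in \cite{lyzinski2017community}, yields for all sufficiently large $t$ a bound of the shape $\P{|\sum_{i<j}(W_{ij}^k-M_{ij})c_{ij}|\ge t}\le 2\exp\!\big(-c\min\{t^2,\ t^{1/(k\theta)}N^{(k\theta-1)/(k\theta)}\}\big)$, with $c>0$ depending only on $k$, $\theta$ and the uniform sub-Weibull constant. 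Choosing $t=C'\log^{k\theta}N$ pushes the right-hand side below $N^{-\alpha}$ for every $\alpha>0$ and all large $N$, so this term is $\OP{\log^{k\theta}N}$. Adding back the $\OP{1}$ diagonal correction and union-bounding over the $d^2$ entries gives $\Fro{\bbU_k^\top(\bbW_k-\bbM_k)\bbU_k}=\OP{\log^{k\theta}N}$.

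The main obstacle is the concentration step. Here one is summing $\binom{N}{2}$ independent sub-Weibull variables whose tails degrade with the moment order (the parameter is $k\theta$, not $\theta$), so the Chernoff optimization over the exponential parameter $\lambda\in[0,\lambda_{\min}]$ — with $\lambda_{\min}$ shrinking as the sub-Weibull constant grows — must be carried out carefully, precisely along the lines of Lemma~\ref{lemma:mgf_weibull} and Proposition~\ref{prop:concentration_sub_weibull}, to guarantee that the resulting tail is still sharp enough at scale $t\asymp\log^{k\theta}N$ to beat every polynomial rate; this is exactly where the exponent $k\theta$ on the logarithm enters. The remaining ingredients — removing the diagonal because self-loops are absent, using orthonormality of the columns of $\bbU_k$ to bound the coefficients and the variance proxy by absolute constants, and passing from entrywise control to the Frobenius norm since $d$ is fixed — are routine.
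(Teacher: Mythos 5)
Your proof is correct and follows essentially the same route the paper intends: the paper only sketches Proposition~\ref{prop:vince_1} as ``a Hoeffding-type concentration argument analogous to that used in the proof of Theorem~\ref{theorem:consistency}'', and your argument (entrywise reduction of the $d\times d$ quadratic form, separate treatment of the deterministic diagonal term, symmetrization into independent centered sub-Weibull summands with coefficients $c_{ij}=u_iv_j+u_jv_i$ whose squares sum to at most $2$, and Proposition~\ref{prop:concentration_sub_weibull} at scale $t\asymp\log^{k\theta}N$) is precisely that argument carried out in full. The only cosmetic imprecision is the exact shape of the tail bound you quote, but the branch you actually use matches the paper's own application of its concentration result.
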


\begin{proposition}\label{prop:vince_2}
    For each $k$, the following holds:
    \begin{align*}
        &\Fro{\hbU_k\hbU_k^\top - \bbU_k\bbU_k^\top} = \OP{N^{-1}\log^{k\theta}{N}}\\
        &\Fro{\hbU_k - \bbU_k\bbU_k^\top\hbU_k}=\OP{N^{-1}\log^{k\theta}{N}}
    \end{align*}
\end{proposition}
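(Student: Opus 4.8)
The plan is to deduce both bounds from a single application of the Davis--Kahan $\sin\Theta$ theorem, in the sharp variant of Yu, Wang and Samworth~\cite{yu2015useful}, fed with the spectral-norm estimate $\spectral{\bbW_k - \bbM_k} = \OP{\log^{k\theta} N}$ of Lemma~\ref{lemma:A-P_spectral_norm} and the eigenvalue estimates of Appendix~\ref{app:eigenvalues}. The first step is to record that the relevant eigengap is $\EquivP{N}$: under Assumption~\ref{assumption:full_Rank}, $\bbX[k]$ has full column rank $d$ with probability tending to one, so $\bbM_k = \bbX[k]\bbX^\top[k]$ has rank exactly $d$ with $\lambda_{d+1}(\bbM_k) = 0$, while its $d$ largest eigenvalues are $\EquivP{N}$ by Appendix~\ref{app:eigenvalues}; hence the population gap $\lambda_d(\bbM_k) - \lambda_{d+1}(\bbM_k)$ is $\EquivP{N}$. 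Appendix~\ref{app:eigenvalues} also ensures that, with probability at least $1 - N^{-\alpha}$ for every $\alpha > 0$, the $d$ largest-magnitude eigenvalues of $\bbW_k$ are positive, so that $\hbU_k$ is a genuine orthonormal basis for the corresponding top-$d$ eigenspace and the subspace comparison with $\bbU_k$ is the one to which Davis--Kahan applies.

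Let $\sin\Theta(\hbU_k,\bbU_k)$ denote the diagonal matrix of sines of the principal angles between the column spaces of $\hbU_k$ and $\bbU_k$. Invoking the Yu--Wang--Samworth bound together with the rank-$d$ structure of $\bbM_k$, and using that $d$ is a fixed constant, I would obtain
\begin{displaymath}
    \Fro{\sin\Theta(\hbU_k,\bbU_k)} \;\leq\; \frac{2\sqrt{d}\,\spectral{\bbW_k - \bbM_k}}{\lambda_d(\bbM_k) - \lambda_{d+1}(\bbM_k)} \;=\; \frac{\OP{\log^{k\theta} N}}{\EquivP{N}} \;=\; \OP{N^{-1}\log^{k\theta} N}.
\end{displaymath}
Since the denominator is itself a random quantity, this rate should be read off on the intersection of the high-probability events controlling $\spectral{\bbW_k - \bbM_k}$ from above and $\lambda_d(\bbM_k)$ from below; on that intersection the displayed quotient is genuinely $\OP{N^{-1}\log^{k\theta} N}$.

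It remains to translate the $\sin\Theta$ bound into the two asserted quantities via standard perturbation identities for orthonormal bases. Because $(\bbI - \bbU_k\bbU_k^\top)\hbU_k = \hbU_k - \bbU_k\bbU_k^\top\hbU_k$ has singular values equal exactly to the sines of the principal angles, $\Fro{\hbU_k - \bbU_k\bbU_k^\top\hbU_k} = \Fro{\sin\Theta(\hbU_k,\bbU_k)}$; and the nonzero singular values of the projector difference $\hbU_k\hbU_k^\top - \bbU_k\bbU_k^\top$ occur in equal pairs matching those sines, so $\Fro{\hbU_k\hbU_k^\top - \bbU_k\bbU_k^\top} = \sqrt{2}\,\Fro{\sin\Theta(\hbU_k,\bbU_k)}$. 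Both right-hand sides are therefore $\OP{N^{-1}\log^{k\theta} N}$, which is the claim. I expect the only genuine obstacle to be the bookkeeping around the random eigengap just described; modulo that, the argument is a direct instantiation of Davis--Kahan, parallel to the analogous perturbation estimates in~\cite{lyzinski2017community,gallagher2023spectral}.
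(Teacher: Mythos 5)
Your proposal is correct and matches the route the paper intends: the paper only sketches Proposition \ref{prop:vince_2}, stating that it follows from the Davis--Kahan theorem of \cite{yu2015useful} combined with the relation between principal angles and the projector difference $\hbU_k\hbU_k^\top - \bbU_k\bbU_k^\top$, fed by Lemma \ref{lemma:A-P_spectral_norm} and the eigenvalue bounds of Appendix \ref{app:eigenvalues} -- which is precisely your argument, carried out in more detail (including the correct handling of the $\EquivP{N}$ eigengap and the $\sqrt{2}$ and equality identities linking the two Frobenius norms to $\Fro{\sin\Theta}$).
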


\begin{proposition}\label{prop:gallagher_1}
For each $k$, let $\bbS_k$ be as in Proposition \ref{prop:procrustes}. Then the following holds:
    \begin{align*}
       \Fro{\bbS_k\hbD_k-\bbD_k\bbS_k} &= \OP{\log^{k\theta}{N}},\\
        \Fro{\bbS_k\hbD_k^{1/2}-\bbD_k^{1/2}\bbS_k} &= \OP{N^{-1/2}\log^{k\theta}{N}}\\
        \Fro{\bbS_k\hbD_k^{-1/2}-\bbD_k^{-1/2}\bbS_k} &= \OP{N^{-3/2}\log^{k\theta}{N}}.
    \end{align*}
\end{proposition}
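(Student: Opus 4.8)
The plan is to first derive the commutator bound for $\hbD_k$ itself, and then transfer it to the $\pm 1/2$ powers via a Sylvester equation. Starting from the eigenrelations $\bbW_k\hbU_k=\hbU_k\hbD_k$ and $\bbM_k=\bbU_k\bbD_k\bbU_k^\top$, left-multiplying the first by $\bbU_k^\top$ and using $\bbU_k^\top\bbM_k=\bbD_k\bbU_k^\top$ gives
\begin{displaymath}
    \bbU_k^\top\hbU_k\,\hbD_k-\bbD_k\,\bbU_k^\top\hbU_k=\bbU_k^\top(\bbW_k-\bbM_k)\hbU_k .
\end{displaymath}
Substituting $\bbU_k^\top\hbU_k=\bbS_k+(\bbU_k^\top\hbU_k-\bbS_k)$ on the left and invoking Proposition~\ref{prop:procrustes} turns the left-hand side into $\bbS_k\hbD_k-\bbD_k\bbS_k$ up to the two terms $(\bbU_k^\top\hbU_k-\bbS_k)\hbD_k$ and $\bbD_k(\bbU_k^\top\hbU_k-\bbS_k)$, each of Frobenius norm $\OP{N^{-1}\log^{k\theta}N}\cdot\EquivP{N}=\OP{\log^{k\theta}N}$, using $\spectral{\bbD_k}=\EquivP{N}$ and $\spectral{\hbD_k}=\EquivP{N}$ (the latter follows from Weyl's inequality, the bound $\spectral{\bbW_k-\bbM_k}=\OP{\log^{k\theta}N}$ of Lemma~\ref{lemma:A-P_spectral_norm}, and the eigenvalue estimates of Appendix~\ref{app:eigenvalues}). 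For the right-hand side, I would write $\hbU_k=\bbU_k(\bbU_k^\top\hbU_k)+(\hbU_k-\bbU_k\bbU_k^\top\hbU_k)$: the first piece contributes $\Fro{\bbU_k^\top(\bbW_k-\bbM_k)\bbU_k}\,\spectral{\bbU_k^\top\hbU_k}=\OP{\log^{k\theta}N}$ by Proposition~\ref{prop:vince_1} (with $\spectral{\bbU_k^\top\hbU_k}\le 1$), while the second is at most $\spectral{\bbW_k-\bbM_k}\,\Fro{\hbU_k-\bbU_k\bbU_k^\top\hbU_k}=\OP{N^{-1}\log^{2k\theta}N}$ by Proposition~\ref{prop:vince_2}, hence negligible. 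Collecting terms yields the first claim.

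For the square-root versions, set $\bbE_k:=\bbS_k\hbD_k^{1/2}-\bbD_k^{1/2}\bbS_k$ and observe the identity
\begin{displaymath}
    \bbD_k^{1/2}\bbE_k+\bbE_k\hbD_k^{1/2}=\bbS_k\hbD_k-\bbD_k\bbS_k .
\end{displaymath}
Since $\bbD_k^{1/2}$ and $\hbD_k^{1/2}$ are diagonal and positive definite with all entries $\EquivP{N^{1/2}}$, this Sylvester equation is solved entrywise, $(\bbE_k)_{ij}=(\bbS_k\hbD_k-\bbD_k\bbS_k)_{ij}/((\bbD_k^{1/2})_{ii}+(\hbD_k^{1/2})_{jj})$, so $\Fro{\bbE_k}\le (cN^{1/2})^{-1}\Fro{\bbS_k\hbD_k-\bbD_k\bbS_k}=\OP{N^{-1/2}\log^{k\theta}N}$ for a suitable $c>0$. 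The third bound then follows from $\bbS_k\hbD_k^{-1/2}-\bbD_k^{-1/2}\bbS_k=-\bbD_k^{-1/2}\bbE_k\hbD_k^{-1/2}$ together with $\spectral{\bbD_k^{-1/2}}=\spectral{\hbD_k^{-1/2}}=\OP{N^{-1/2}}$.

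I expect the bookkeeping around the spectrum of $\hbD_k$ to be the main (if mild) obstacle. Two points need care: first, identifying $\hbD_k$ with the $d$ largest eigenvalues of $\bbW_k$ and showing these, and their square roots, are $\EquivP{N}$ and $\EquivP{N^{1/2}}$ respectively -- this uses Weyl's inequality against the $N$-scale spectral gap of $\bbM_k$ (Appendix~\ref{app:eigenvalues}) and $\spectral{\bbW_k-\bbM_k}=\OP{\log^{k\theta}N}$, which also controls $\spectral{\hbD_k^{-1/2}}$; second, tracking which factors are measured in spectral versus Frobenius norm when distributing products, via $\Fro{\bbA\bbB}\le\Fro{\bbA}\spectral{\bbB}$, so that every error term lands at exactly the advertised rate. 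Everything else is routine given Propositions~\ref{prop:procrustes}, \ref{prop:vince_1}, \ref{prop:vince_2} and Lemma~\ref{lemma:A-P_spectral_norm}, which is why the detailed computation can safely be omitted.
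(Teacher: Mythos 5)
Your proof is correct and fills in exactly the argument the paper omits: the intertwining relation $\bbU_k^\top\hbU_k\hbD_k-\bbD_k\bbU_k^\top\hbU_k=\bbU_k^\top(\bbW_k-\bbM_k)\hbU_k$ combined with Proposition~\ref{prop:procrustes} to get the commutator bound for $\hbD_k$, followed by the entrywise Sylvester-equation division to transfer it to the $\pm1/2$ powers, is precisely the scheme of the cited results in \cite{gallagher2023spectral} and \cite{lyzinski2017community}. Your bookkeeping of the spectral scales $\spectral{\hbD_k}=\EquivP{N}$, $\spectral{\hbD_k^{-1/2}}=\OP{N^{-1/2}}$ via Lemma~\ref{lemma:A-P_spectral_norm} and Appendix~\ref{app:eigenvalues} is also the right way to justify the denominators, so nothing essential is missing.
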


We are now ready to prove Proposition \ref{prop:remainders}. We remind the reader its statement:
\begin{restatedproposition}{prop:remainders}
For each fixed integer $k\geq 0$, let $\bbW_k := \bbW^{(k)}$. Also let $\bbM_k := \bbX[k]\bbX^\top[k]$ and denote its spectral decomposition as $\bbM_k = \bbU_k\bbD_k\bbU_k^\top$. Let $\bbS_k \in O(d)$ be as in Proposition \ref{prop:gallagher_1} and  define $\bbR_{k_1}$, $\bbR_{k_2}$, $\bbR_{k_3}$ and $\bbR_{k_4}$ as
    \begin{align*}
        \bbR_{k_1} &= \bbU_k\left(\bbU_k^\top \hbU_k\hbD_k^{1/2} -\bbD_k^{1/2} \bbS_k  \right),\\
        \bbR_{k_2} &=\left(\bbI- \bbU_k\bbU_k^\top \right)\left(\bbW_k- \bbM_k \right)\left(\hbU_k -\bbU_k\bbS_k  \right)\hbD_k^{-1/2},\\
        \bbR_{k_3} &=- \bbU_k\bbU_k^\top\left(\bbW_k- \bbM_k \right)\bbU_k\bbS_k  \hbD_k^{-1/2},\\
        \bbR_{k_4} &=\left(\bbW_k- \bbM_k \right)\bbU_k\left(\bbS_k\hbD_k^{-1/2} -\bbD_k^{-1/2}\bbS_k  \right).
    \end{align*}
     Then the following holds:
     \begin{align*}
        \twoToInf{\bbR_{k_1}} &= \OP{N^{-1} \log^{k\theta} N}, \\
        \twoToInf{\bbR_{k_2}} &= \OP{N^{-3/2} \log^{2k\theta}  N},\\
        \twoToInf{\bbR_{k_3}} &= \OP{N^{-1} \log^{k\theta} N},\\
        \twoToInf{\bbR_{k_4}} &= \OP{N^{-3/2} \log^{2k\theta} N}.
    \end{align*}
\end{restatedproposition}

\begin{proof}
    In order to bound $\bbR_{k_1}=\bbU_k\left(\bbU_k^\top \hbU_k\hbD_k^{1/2} -\bbD_k^{1/2} \bbS_k  \right)$, note that:
    \begin{align*}
        \twoToInf{\bbR_{k_1}} &\leq \twoToInf{\bbU_k}  \spectral{\bbU_k^\top \hbU_k\hbD_k^{1/2} -\bbD_k^{1/2} \bbS_k  }\\
         & \leq  \twoToInf{\bbU_k}  \left(\Fro{\left(\bbU_k^\top \hbU_k-\bbS_k\right)\hbD_k^{1/2}}+\Fro{ \bbS_k\hbD_k^{1/2} - \bbD_k^{1/2} \bbS_k}\right).
    \end{align*}
    Since $\bbU_k\bbD_k^{1/2} = \bbX[k]\bbQ_k$ with $\bbQ_k \in O(d)$, we have that
    \begin{align*}
    \twoToInf{\bbU_k} \leq \twoToInf{\bbX[k]}\spectral{\bbD_k^{-1/2}}.
    \end{align*}
    Since the inner product between the rows of $\bbX[k]$ equals the $k$-th moment of some random variable--which, by the definition of our model, we assume to be finite for all $k$--this implies that all rows of $\bbX[k]$ (which lie in $\reals^d$) must have bounded norm. Therefore, $\twoToInf{\bbX[k]}$ is finite and does not depend on $N$. In Lemma~\ref{lemma:sussman_Mk} of Appendix~\ref{app:eigenvalues}, we show that Assumption~\ref{assumption:full_Rank} implies all nonzero eigenvalues of $\bbM_k$ are $\EquivP{N}$, so $\spectral{\bbD_k^{-1/2}} = \OP{N^{-1/2}}$. Consequently, $\twoToInf{\bbU_k} = \OP{N^{-1/2}}$.%
    Combining this with Propositions \ref{prop:procrustes} and \ref{prop:gallagher_1} yields the desired bound for $\bbR_{k_1}$.

    In a similar vein, for $\bbR_{k_3} = - \bbU_k\bbU_k^\top\left(\bbW_k- \bbM_k \right)\bbU_k\bbS_k  \hbD_k^{-1/2}$ it holds:
    \begin{align*}
        \twoToInf{\bbR_{k_3}} &\leq \twoToInf{\bbU_k} \spectral{\bbU_k^\top\left(\bbW_k- \bbM_k \right)\bbU_k\bbS_k  \hbD_k^{-1/2}}\\
         &\leq \twoToInf{\bbU_k} \Fro{\bbU_k^\top\left(\bbW_k- \bbM_k \right)\bbU_k} \spectral{\hbD_k^{-1/2}}.
    \end{align*}

    Using Proposition \ref{prop:vince_1} and the fact that both $\twoToInf{\bbU_k}$ and $\spectral{\hbD_k}$ are $\OP{N^{-1/2}}$(the latter being a consequence of Assumptions~\ref{assumption:full_Rank} and \ref{assumption:bounded_rv}, as shown in Lemma~\ref{lemma:sussman_Wk} of Appendix~\ref{app:eigenvalues}) then implies that $\twoToInf{\bbR_{k_3}}$ is $\OP{N^{-1}\log^{k\theta} N}$, as desired.

    Regarding $\bbR_{k_4} = \left(\bbW_k- \bbM_k \right)\bbU_k\left(\bbS_k\hbD_k^{-1/2} -\bbD_k^{-1/2}\bbS_k  \right)$ we have that:
    \begin{align*}
        \twoToInf{\bbR_{k_4}} &\leq \infnorm{\bbW_k-\bbM_k}\twoToInf{\bbU_k\left(\bbS_k\hbD_k^{-1/2} -\bbD_k^{-1/2}\bbS_k\right)}\\
        &\leq \sqrt{N}\spectral{\bbW_k-\bbM_k}\twoToInf{\bbU_k}\spectral{\bbS_k\hbD_k^{-1/2} -\bbD_k^{-1/2}\bbS_k},
    \end{align*}
    where in the second inequality we have used that $\infnorm{\bbA}\leq \sqrt{n}\spectral{\bbA}$ for all $\bbA \in \reals^{n\times m}$. Using Lemma \ref{lemma:A-P_spectral_norm} and Proposition \ref{prop:gallagher_1} together with the fact that $\twoToInf{\bbU_k}$ is $\OP{N^{-1/2}}$ then shows the desired bound for $\bbR_{k_4}$.

    This leaves us with the task of bounding the $2\to\infty$ norm of $\bbR_{k_2}$, which we remind the reader is defined as:
    \begin{align*}
    \bbR_{k_2} = \left(\bbI- \bbU_k\bbU_k^\top \right)\left(\bbW_k- \bbM_k \right)\left(\hbU_k -\bbU_k\bbS_k  \right)\hbD_k^{-1/2} .
    \end{align*}
    Therefore:
    \begin{align}
        \twoToInf{\bbR_{k_2}} \leq \twoToInf{\bbI - \bbU_k\bbU_k^\top } \twonorm{\bbW_k- \bbM_k} \twonorm{\hbU_k -\bbU_k\bbS_k } \twonorm{\hbD_k^{-1/2}}.
        \label{eq:R_2_split}
    \end{align}
    For the first term we have that:
    \begin{align*}
        \twoToInf{\bbI - \bbU_k\bbU_k^\top} \leq \twoToInf{\bbI} + \twoToInf{\bbU_k\bbU_k^\top} \leq 1 + \twoToInf{\bbU_k}\spectral{\bbU_k^\top}.
    \end{align*}
    Since $\spectral{\bbU_k^\top}=\spectral{\bbU_k} \leq \sqrt{N}\twoToInf{\bbU_k}$ we have that $\spectral{\bbU_k^\top} = \OP{1}$ because $\twoToInf{\bbU_k} = \OP{N^{-1/2}}$. Therefore
    \begin{align*}
        \twoToInf{\bbI - \bbU_k\bbU_k^\top} \leq 1+\OP{N^{-1/2}} = \OP{1}.
    \end{align*}
    Also note that:
    \begin{align*}
       \spectral{\hbU_k-\bbU_k\bbS_k}& \leq \, \spectral{\hbU_k-\bbU_k\bbU_k^\top\hbU_k} + \spectral{\bbU_k\left(\bbU_k^\top\hbU_k-\bbS_k\right)}\\
       & \leq \, \spectral{\hbU_k-\bbU_k\bbU_k^\top\hbU_k} + \spectral{\bbU_k}\spectral{\bbU_k^\top\hbU_k-\bbS_k}.
    \end{align*}
    By Proposition \ref{prop:vince_2} the first term is $\OP{N^{-1}\log^{k\theta}N}$, whereas the second term is $\OP{ N^{-1}\log^{k\theta} N}$ because $\spectral{\bbU_k} = \OP{1}$ and $\spectral{\bbU_k^\top\hbU_k-\bbS_k}$ is $\OP{N^{-1}\log^{k\theta} N}$ by virtue of Proposition \ref{prop:procrustes}. Therefore:
    \begin{align*}
    \spectral{\hbU_k-\bbU_k\bbS_k} = \OP{N^{-1}\log^{k\theta} N}.
    \end{align*}
    Combining this with Lemma \ref{lemma:A-P_spectral_norm} and the fact that $\spectral{\hbD_k^{-1/2}}$ is $\OP{N^{-1/2}}$, from~\eqref{eq:R_2_split} we conclude $\twoToInf{\bbR_{k_2}}=\OP{N^{-3/2}\log^{2k\theta} N}$, which concludes the proof.
\end{proof}

\section{Consequences of Assumptions \ref{assumption:full_Rank} and \ref{assumption:bounded_rv} regarding the largest eigenvalues of $\bbM_k$ and $\bbW_k$}\label{app:eigenvalues}

In this section, we show that Assumptions \ref{assumption:full_Rank} and \ref{assumption:bounded_rv} imply that for all $k\geq 0$, the nonzero eigenvalues of $\bbM_k := \bbX[k]\bbX^\top[k]$ are $\EquivP{N}$, and that the same holds for the top $d$ eigenvalues of $\bbW_k := \bbW^{(k)}$. We begin by proving this for $\bbM_k$. The proof scheme follows that of \cite[Proposition 4.3]{sussman2014consistent}. In what follows, the notation $\lambda_i(\bbA)$ denotes the $i$-th largest-magnitude eigenvalue of matrix $\bbA$.

\begin{lemma}\label{lemma:sussman_Mk}
    Let $(\bbW,\bbX_k) \sim \mathrm{WRDPG}(F)$, where $F$ satisfies Assumption \ref{assumption:full_Rank} and $\bbW$ satisfies Assumption \ref{assumption:bounded_rv}. Then for all $k\geq 0$ it holds that $\lambda_i(\bbM_k)=\EquivP{N}$ for $i=1,\dots,d$, while $\lambda_i(\bbM_k)=0$ for $i=d+1,\dots,N$.
\end{lemma}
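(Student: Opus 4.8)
The plan is to reduce the spectral analysis of the $N\times N$ matrix $\bbM_k = \bbX[k]\bbX^\top[k]$ to that of the $d\times d$ Gram matrix $\bbG_k := \bbX^\top[k]\bbX[k]$, and then run a concentration argument for the i.i.d.\ sum $\bbG_k = \sum_{i=1}^N \bbx_i[k]\bbx_i^\top[k]$. Since $\bbX[k]\in\reals^{N\times d}$, the matrix $\bbM_k$ has rank at most $d$, so immediately $\lambda_i(\bbM_k)=0$ for $i=d+1,\dots,N$; moreover the nonzero eigenvalues of $\bbM_k=\bbB\bbB^\top$ (with $\bbB=\bbX[k]$) coincide, with multiplicities, with the nonzero eigenvalues of $\bbG_k=\bbB^\top\bbB$. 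Hence it suffices to show that all $d$ eigenvalues of $\bbG_k$ are $\EquivP{N}$; once this is established, $\bbM_k$ has exactly $d$ nonzero eigenvalues with high probability, so its $d$ largest-magnitude eigenvalues are precisely those of $\bbG_k$ and the claimed ordering holds.

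Next I would write $\bbG_k = N\,\bar\bbDelta_k$, where $\bar\bbDelta_k := \tfrac1N\sum_{i=1}^N \bbx_i[k]\bbx_i^\top[k]$ is an average of i.i.d., positive semidefinite, rank-one matrices with common mean $\E{\bbx[k]\bbx^\top[k]} = \bbDelta_k$. As noted in the proof of Proposition~\ref{prop:remainders}, the rows of $\bbX[k]$ have norm bounded by a constant $B_k$ independent of $N$ (their self inner products equal the finite $k$-th moments prescribed by the model), so the summands satisfy $\spectral{\bbx_i[k]\bbx_i^\top[k]} \le B_k^2$. A matrix Hoeffding/Bernstein inequality (e.g.\ \cite{tropp2012user}) then gives, for every fixed $t>0$,
\begin{displaymath}
    \P{\spectral{\bar\bbDelta_k - \bbDelta_k} \ge t} \le d\,\expon{-c\,N t^2 / B_k^4}
\end{displaymath}
for an absolute constant $c>0$. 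Choosing $t = \sigma_{\min}/2$, with $\sigma_{\min} := \lambda_d(\bbDelta_k)>0$ by Assumption~\ref{assumption:full_Rank}, makes the right-hand side smaller than $N^{-\alpha}$ for any prescribed $\alpha>0$ and all $N$ large.

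On the complementary high-probability event, Weyl's inequality yields $|\lambda_i(\bar\bbDelta_k)-\lambda_i(\bbDelta_k)|\le \sigma_{\min}/2$ for every $i$. Since $\lambda_1(\bbDelta_k)\le \tr(\bbDelta_k)=\E{\twonorm{\bbx[k]}^2}\le B_k^2$ and $\lambda_d(\bbDelta_k)=\sigma_{\min}$, every eigenvalue of $\bar\bbDelta_k$ lies in the fixed interval $[\sigma_{\min}/2,\; B_k^2+\sigma_{\min}/2]$; rescaling by $N$ gives $cN \le \lambda_i(\bbM_k)=N\lambda_i(\bar\bbDelta_k) \le CN$ for $i=1,\dots,d$ with probability at least $1-N^{-\alpha}$, i.e.\ $\lambda_i(\bbM_k)=\EquivP{N}$. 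The only delicate point is the passage from the almost-sure convergence $\bar\bbDelta_k\to\bbDelta_k$ to the polynomial-rate statement demanded by the $\EquivP{\cdot}$ notation; this is exactly where the boundedness of the latent positions matters, forcing us to use a matrix concentration bound rather than the plain strong law of large numbers. Everything else is routine bookkeeping with Weyl's inequality and the rank identity. Note that Assumption~\ref{assumption:bounded_rv} is not used here, as $\bbM_k$ depends only on $\bbX[k]$; it enters only in the companion bound for $\bbW_k$ in Lemma~\ref{lemma:sussman_Wk}.
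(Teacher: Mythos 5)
Your proposal is correct and follows essentially the same route as the paper's proof: reduce to the $d\times d$ Gram matrix $\bbX^\top[k]\bbX[k]$, show it concentrates around $N\bbDelta_k$ using the boundedness of the rows of $\bbX[k]$, and conclude via Weyl's inequality together with Assumption~\ref{assumption:full_Rank}. The only (immaterial) difference is that you invoke a matrix Hoeffding/Bernstein bound on the sum of rank-one matrices where the paper applies scalar Hoeffding entrywise and passes through the Frobenius norm via a union bound; your observation that Assumption~\ref{assumption:bounded_rv} is not actually needed for $\bbM_k$ is also accurate.
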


\begin{proof}
    That $\lambda_i(\bbM_k)=0$ for $i=d+1,\dots,N$ is immediate since $\bbM_k=\bbX[k]\bbX^\top[k]$, so $\bbM_k$ is a $N\times N$ matrix with rank at most $d$. For $i=1,\dots, d$, note that
    \begin{align*}
\lambda_i(\bbM_k)=\lambda_i(\bbX[k]\bbX^\top[k])=\lambda_i(\bbX^\top[k]\bbX[k]).
    \end{align*}
    Now, each entry of $\bbX^\top[k]\bbX[k]$ is:
    \begin{align*}
        (\bbX^\top[k]\bbX[k])_{ij} = \sum_{l=1}^N \left(\bbx_{l}[k]\right)_i\left(\bbx_{l}[k]\right)_j,
    \end{align*}
    where $\left(\bbx_{l}[k]\right)_i$ denotes the $i$-th entry of vector $\bbx_{l}[k]$. Then $(\bbX^\top[k]\bbX[k])_{ij}$ is a sum of independent random variables, each with expectation $\E{\left(\bbx_{l}[k]\right)_i\left(\bbx_{l}[k]\right)_j}=(\bbDelta_k)_{ij}$, where $(\bbDelta_k)_{ij}$ is the $(i,j)$ entry of the second moment matrix $\bbDelta_k$ from Assumption \ref{assumption:full_Rank}.
    Since, for each $k$, the inner product between the rows of $\bbX[k]$ equals the $k$-th moment of some random variable--which, by the definition of our model, we assume to be finite for all $k$--it follows that all rows of $\bbX[k]$ (which lie in $\mathbb{R}^d$) must have bounded norm, with a bound that does not depend on $N$. Therefore, each term in the above sum is bounded by some constant $L_k > 0$, so by Hoeffding's inequality \cite[Theorem 2.6.2]{vershynin2018hdpbook} we have that:
    \begin{align*}
        \Pc{\left\lvert (\bbX^\top[k]\bbX[k])_{ij} - N(\bbDelta_k)_{ij} \right\rvert \geq t} \leq 2 \exp\left( \frac{-2t^2}{N^2L_k^2}\right).
    \end{align*}
    Choosing $t=\frac{L_k}{\sqrt{2}}N^{1/2}\log^{1/2}N$ then shows that
    \begin{align*}
    \left\lvert (\bbX^\top[k]\bbX[k])_{ij} - N(\bbDelta_k)_{ij} \right\rvert = \OP{N^{1/2}\log^{1/2}N}.
    \end{align*}
    Taking a union bound then shows that $\Fro{\bbX^\top[k]\bbX[k]-N \bbDelta_k}$ is also $\OP{N^{1/2}\log^{1/2}N}$, and since the spectral norm is dominated by the Frobenius norm, then $\spectral{\bbX^\top[k]\bbX[k]-N \bbDelta_k} = \OP{N^{1/2}\log^{1/2}N}$. A corollary of Weyl's inequality \cite[Corollary 7.3.5]{horn2012matrix} then implies that:
    \begin{align*}
        \left\lvert\lambda_i\left(\bbX^\top[k]\bbX[k]\right) - N \lambda_i(\bbDelta_k)\right\rvert = \OP{N^{1/2}\log^{1/2}N}.
    \end{align*}
    Since $\lambda_i(\bbDelta_k) = \EquivP{1}$, this in turn implies that $\lambda_i\left(\bbX^\top[k]\bbX[k]\right) = \EquivP{N}$, which completes the proof.
\end{proof}

The next lemma shows that the top $d$ eigenvalues of $\bbW_k$ are $\EquivP{N}$, while the remaining ones are within $N^{1/2}\log^{1/2}N$ of zero with high probability.

\begin{lemma}\label{lemma:sussman_Wk}
    Under the same hypotheses as Lemma \ref{lemma:sussman_Mk}, for all $k\geq 0$ it holds that $\lambda_i(\bbW_k)=\EquivP{N}$ for $i=1,\dots,d$, while $\lambda_i(\bbW_k)=\OP{\log^{k\theta}N}$ for $i=d+1,\dots,N$.
\end{lemma}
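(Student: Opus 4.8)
The plan is to write $\bbW_k = \bbM_k + (\bbW_k - \bbM_k)$ and treat $\bbW_k - \bbM_k$ as a spectral perturbation of $\bbM_k$. Both matrices are real and symmetric (since $\bbW$ is symmetric, the entrywise power $\bbW_k = \bbW^{(k)}$ is symmetric as well), so I would order their eigenvalues algebraically, $\lambda_1^\downarrow(\cdot) \geq \dots \geq \lambda_N^\downarrow(\cdot)$, and apply Weyl's inequality \cite[Corollary 7.3.5]{horn2012matrix}: $|\lambda_i^\downarrow(\bbW_k) - \lambda_i^\downarrow(\bbM_k)| \leq \spectral{\bbW_k - \bbM_k}$ for every $i = 1, \dots, N$.

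The two ingredients I would plug in are Lemma~\ref{lemma:A-P_spectral_norm}, which gives $\spectral{\bbW_k - \bbM_k} = \OP{\log^{k\theta} N}$, and Lemma~\ref{lemma:sussman_Mk}, which gives that the algebraically largest $d$ eigenvalues of $\bbM_k$ are positive and $\EquivP{N}$, while $\lambda_{d+1}^\downarrow(\bbM_k) = \dots = \lambda_N^\downarrow(\bbM_k) = 0$ (recall $\bbM_k = \bbX[k]\bbX^\top[k]$ is PSD of rank at most $d$). Intersecting the corresponding high-probability events — i.e., working on the event that $\spectral{\bbW_k - \bbM_k} \leq C\log^{k\theta} N$ and $cN \leq \lambda_d^\downarrow(\bbM_k)$, which by the definitions of $\OP{\cdot}$ and $\EquivP{\cdot}$ has probability at least $1 - N^{-\alpha}$ for any prescribed $\alpha$ (after a harmless adjustment of constants) — Weyl's bound yields, for $N$ large enough, $\lambda_i^\downarrow(\bbW_k) \geq cN - C\log^{k\theta} N > 0$ with $\lambda_i^\downarrow(\bbW_k) = \EquivP{N}$ for $i \leq d$, and $|\lambda_i^\downarrow(\bbW_k)| \leq C\log^{k\theta} N$ for all $i \geq d+1$ simultaneously.

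The last step is to reconcile this with the magnitude ordering used in the statement, which is the only place requiring care because $\bbW_k$ need not be PSD. On the event above, the $d$ algebraically largest eigenvalues of $\bbW_k$ are positive and of order $N$, whereas every other eigenvalue has magnitude $O(\log^{k\theta} N) = o(N)$; hence for $N$ large the $d$ largest-in-magnitude eigenvalues coincide with $\lambda_1^\downarrow(\bbW_k), \dots, \lambda_d^\downarrow(\bbW_k)$. This delivers $\lambda_i(\bbW_k) = \EquivP{N}$ for $i = 1, \dots, d$ and $\lambda_i(\bbW_k) = \OP{\log^{k\theta} N}$ for $i = d+1, \dots, N$ (the latter uniformly in $i$, since $\spectral{\bbW_k - \bbM_k}$ controls all of these at once), as claimed. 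The main obstacle is genuinely minor: it is exactly this magnitude-versus-algebraic bookkeeping, since everything else is an immediate consequence of feeding the two auxiliary lemmas into Weyl's inequality.
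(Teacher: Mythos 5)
Your proposal is correct and follows essentially the same route as the paper: Weyl's inequality (the same Corollary 7.3.5 of Horn and Johnson) combined with Lemma~\ref{lemma:A-P_spectral_norm} for $\spectral{\bbW_k-\bbM_k}$ and Lemma~\ref{lemma:sussman_Mk} for the spectrum of $\bbM_k$. Your final paragraph reconciling the algebraic ordering with the largest-in-magnitude ordering is a detail the paper's two-line proof leaves implicit, and it is handled correctly.
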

\begin{proof}
    Again, by \cite[Corollary 7.3.5]{horn2012matrix} we have that:
    \begin{align*}
        \left\lvert\lambda_i\left(\bbW_k\right) -  \lambda_i\left(\bbM_k\right) \right\rvert \leq \spectral{\bbW_k - \bbM_k}.
    \end{align*}
    Therefore, using Lemma \ref{lemma:A-P_spectral_norm} we have that $ \left\lvert\lambda_i\left(\bbW_k\right) -  \lambda_i\left(\bbM_k\right) \right\rvert = \OP{\log^{k\theta}N}$. Applying Lemma \ref{lemma:sussman_Mk} implies the desired result.
\end{proof}

\section{Chebyshev Polynomial Reformulation}
\label{app:chebyshev_recovery}

We provide a detailed description of the Chebyshev-based reformulation of the moment-recovery system in \eqref{eq:vandermonde}. By expressing the system on the basis of Chebyshev polynomials of the first kind, we markedly improve the problem's conditioning.  

\subsection{Transformation of monomials into Chebyshev basis}

Any monomial $x^k$ of degree $k\le K$ can be uniquely expanded in the Chebyshev basis $\{T_j(x) : j=0,1,\dots,K\}$:
\begin{align*}
  x^k = \sum_{j=0}^K c_{k,j}\,T_j(x),
\end{align*}
where the coefficients $c_{k,j}$ are computed using recurrence relations or discrete orthogonality integrals. In practice, these coefficients can be precomputed up to degree $K$ using the recurrence:
\begin{align*}
  T_0(x) &= 1,\\
  T_1(x) &= x,\\
  T_{j+1}(x) &= 2x\,T_j(x) - T_{j-1}(x), \quad j\ge1,
\end{align*}
and the identity relating monomials to Chebyshev polynomials (e.g., via trigonometric definitions or forward recurrence).  

\subsection{Construction of the Chebyshev–Vandermonde matrix}

Let $\{v_0, v_1, \dots, v_R\}$ denote the support points of the discrete distribution. We define the Chebyshev--Vandermonde matrix $\bbV_{\text{C}} \in \reals^{(K+1) \times (R+1)}$ with entries
\begin{align*}
  (\bbV_{\text{C}})_{k,r} = T_k\bigl(v_r^*\bigr), \quad k = 0,\dots,K, \; r = 0,\dots,R,
\end{align*}
where $v_r^*$ are the support points mapped into the canonical interval $[-1,1]$ via affine scaling if necessary.  

The moment-recovery system then becomes an equivalent system in the Chebyshev basis. Let \(\bbC \in \mathbb{R}^{(K+1)\times(K+1)}\) be the matrix with entries \(C_{kj}=c_{k,j}\), so that the Chebyshev-moment vector  
\[
\bbm_{\text{C}} = \bbC\bbm
\]
is the projection of the original moment vector \(\bbm = [m_{ij}[0], \dots, m_{ij}[K]]^\top\) onto the Chebyshev basis. The system in the Chebyshev basis then reads  
\[
\bbV_{\text{C}}\,\bbp = \bbm_{\text{C}}.
\] 

\subsection{Numerical solution and stability}

Because Chebyshev polynomials constitute an orthogonal basis on $[-1,1]$ with respect to the weight $(1-x^2)^{-1/2}$, the condition number of $\bbV_{\text{C}}$ grows only polynomially in $K$, rather than exponentially as in the monomial basis. Therefore, solving for $\bbp$ via a standard least-squares or regularized inversion
\begin{align*}
  \hat{\bbp} = \argmin_{\bbp \ge 0} \|\bbV_{\text{C}} \,\bbp - \bbm\|_2^2
\end{align*}
offers improved numerical stability. Additional regularization or non-negativity constraints may be imposed to further enhance stability and ensure valid probability estimates.  

\section{Solution to the Moments Problem by Entropy Maximization}\label{app:maximum_entropy}

In Section~\ref{sec:generation_cont}, we formulated the problem of identifying a pdf that maximizes differential entropy subject to a set of moment constraints. Here we detail how this constrained optimization problem can be effectively tackled using a primal-dual approach. Specifically, we derive the corresponding dual formulation and present a gradient-based method for solving it, which is both computationally efficient and well-suited to our setting.

The problem is to find a pdf $g_{ij}$ that maximizes the differential entropy
\begin{align*}
    S(g_{ij}) = -\int_{\Omega}g_{ij}(x)\log g_{ij}(x)\, dx
\end{align*}
subject to the moments constraints
\begin{align}
    \int_{\Omega} x^kg_{ij}(x)\, dx = \bbx_i^\top[k]\bbx_j[k],\,\, \text{ for all } k=0,1,\dots, K,
    \label{eq:moments_constraints}
\end{align}
where $ \Omega \subset \reals $ denotes the known support of the distribution. 

From an optimization viewpoint, we wish to solve the primal problem:
\begin{align}
    \max_{g \in \mathcal{F}} S(g) \,\, \text{ s. to } \int_{\Omega} x^kg(x)\, dx - m_k=0,\,\, \text{ for all } k=0,1,\dots, K,
    \label{eq:maximum_entropy_primal}
\end{align}
where $m_k=\bbx_i^\top[k]\bbx_j[k]$ and $\mathcal{F}$ is the space of all probability distributions that are supported in $\Omega$ (as before, dependence of $g$ and $m_k$ on $i$ and $j$ is omitted for readability).

To solve \eqref{eq:maximum_entropy_primal}, we introduce Lagrange multipliers $\lambda_k$ and the Lagrangian functional\footnote{Following \cite{kapur1992entropy} we have used $\lambda_0-1$ as the first Lagrange multiplier for convenience.}
\begin{align}
    L(g,\bblam) = S(g)-(\lambda_0-1)\left( \int_{\Omega} g(x)\, dx - m_0\right) - \sum_{k=1}^K \lambda_k \left( \int_{\Omega} x^kg(x)\, dx - m_k\right),
    \label{eq:lagrangian}
\end{align}
where $\bblam=[\lambda_0, \lambda_1, \dots,\lambda_K]^\top\in \reals^{K+1}$. We will find a solution to \eqref{eq:maximum_entropy_primal} by solving the dual problem:
\begin{align}
    \min_{\bblam\in \reals^{K+1}} d(\bblam),
    \label{eq:maximum_entropy_dual}
\end{align}
where $d:\reals^{R+1} \mapsto \reals$ is the dual function
\begin{align*}
    d(\bblam) = \max_{g\in\mathcal{F}} L(g,\bblam) .
\end{align*}
Using the Euler-Lagrange equation for the calculus of variations we conclude that the function $g_{\bblam}$ that extremizes $L$ is the one that makes the functional derivative $\displaystyle \frac{\delta L(g,\bblam)}{\delta g(x)}$ equal to zero. By differentiation, we see that such a condition amounts to
\begin{align}
    \frac{\delta L(g,\bblam)}{\delta g(x)}\Big|_{g_{\bblam}(x)} = 0 \Leftrightarrow g_{\bblam}(x) = \exp\left(-\sum_{k=0}^K \lambda_k x^k\right).
    \label{eq:lagrangian_maximum}
\end{align}
Looking at the second variation of $L(g,\bblam)$ we conclude that this $g_{\bblam}$ maximizes the Lagrangian, so the dual function can be computed as
\begin{align*}
    d(\bblam) = L(g_{\bblam},\bblam) = \sum_{k=0}^K \lambda_k m_k + \int_\Omega \exp \left( -\sum_{k=0}^K \lambda_k x^k \right) dx - m_0.
\end{align*}

One can check that $d(\bblam)$ is a convex function of $\lambda_0,\lambda_1,\dots,\lambda_K$ (see, e.g., \cite[Example 2.3]{kapur1992entropy}). Since its partial derivatives are equal to
\begin{align*}
\frac{\partial d(\bblam)}{\partial \lambda_k} = m_k- \int_\Omega x^kg_{\bblam}(x)\, dx
\end{align*}
we have that whenever the gradient of $d(\bblam)$ is zero, then $g_{\bblam}$ satisfies the moments constraints \eqref{eq:moments_constraints}. In other words, if $\bblam^*$ is a solution to the dual problem \eqref{eq:maximum_entropy_dual}, then the function $g_{\bblam^*}$ given by \eqref{eq:lagrangian_maximum} both maximizes the Lagrangian \eqref{eq:lagrangian} and satisfies the moments constraints \eqref{eq:moments_constraints}, i.e., it is a solution to the primal problem \eqref{eq:maximum_entropy_primal}. This implies that strong duality holds for the maximum entropy problem. Therefore, by solving the dual problem, we can recover the maximum entropy pdf we are seeking. Since the dual objective is convex and unconstrained, it can be solved using any standard convex optimization algorithm. In practice, we solve it using the BFGS algorithm \cite{nocedal2006numerical} available through SciPy’s \texttt{minimize} function.

\begin{remark}
If a function $g_{\bblam}$ such as that in  \eqref{eq:lagrangian_maximum} satisfies the moments constraints \eqref{eq:moments_constraints}, then it has maximum entropy among all pdfs $g$ supported in $\Omega$ that satisfy such constraints. Indeed, we have that,  since \mbox{$\log g_{\bblam}(x) = -\sum_{k=0}^K \lambda_k x^k$}, 
    \begin{align*}
        S(g_{\bblam}) &=- \int_\Omega g_{\bblam}(x) \log g_{\bblam}(x) \, dx =  \sum_{k=0}^K \lambda_k \int_\Omega g_{\bblam}(x) x^k \, dx =\sum_{k=0}^K \lambda_k \int_\Omega g(x) x^k \, dx \\
        &= -\int_\Omega g(x) \log g_{\bblam}(x)\, dx,
    \end{align*}
where in the third equality we have used that both $g_{\bblam}$ and $g$ satisfy the moments constraints \eqref{eq:moments_constraints}. Then
    \begin{align*}
        S(g_{\bblam})-S(g) = \int_\Omega g(x) \log \left(  \frac{g(x)}{g_{\bblam}(x)}\right) dx = D_{KL}(g||g_{\bblam}) \geq 0
    \end{align*}
    where $D_{KL}$ is the Kullback-Leibler divergence.
\end{remark}

\begin{example}
We showcase our method by testing its ability to recover an exponential distribution from its first four moments. Since the pdf of an exponential random variable with parameter $\alpha$ is $\alpha e^{-\alpha x}$, it can be written in the form of \eqref{eq:lagrangian_maximum} by setting $\lambda_0 = -\log \alpha$, $\lambda_1 = \alpha$ and $\lambda_k=0$ for all $k\geq 2$, so we can check whether the Lagrange multipliers $\lambda_k$ obtained with our gradient descent solution are close to these values. The results for an exponential random variable with parameter $\alpha=2$ are depicted in Figure \ref{fig:max_ent_cont_ex}, where we show the $\lambda_k$'s upon convergence of our GD algorithm for 100 random initializations of $\bblam$. We also show the Lagrange multipliers obtained with the method proposed in \cite{saad2019pymaxent}. In that work, the authors estimate $\bblam$ by using Newton's method to approximate the solutions of the system of nonlinear equations that arises from imposing that the function $g_{\bblam}$ in \eqref{eq:lagrangian_maximum} satisfies the moments constraints \eqref{eq:moments_constraints}, i.e., the system
\begin{gather*}
\int_{\Omega} x^{k}\exp\left( -\lambda_0 - \lambda_1 x - \lambda_2 x^2 s- \dots -\lambda_R x^K \right) dx = m_k \quad \forall \, k=0, \dots, K.
\end{gather*}

As shown in Figure \ref{fig:max_ent_cont_ex} the Lagrange multipliers that we obtain using this approach are, most of the time, quite far from the actual values. This is because Newton's method is sensitive to initialization and heavily relies on having an initial guess that lies on the solution's basin of attraction. Our method, in contrast, always converges to the true solution. That is because our dual function is convex, so as long as the maximum entropy problem has a solution our method will converge to it no matter where it starts.

\begin{figure}[t]
    \centering
    \includegraphics[width=\linewidth]{./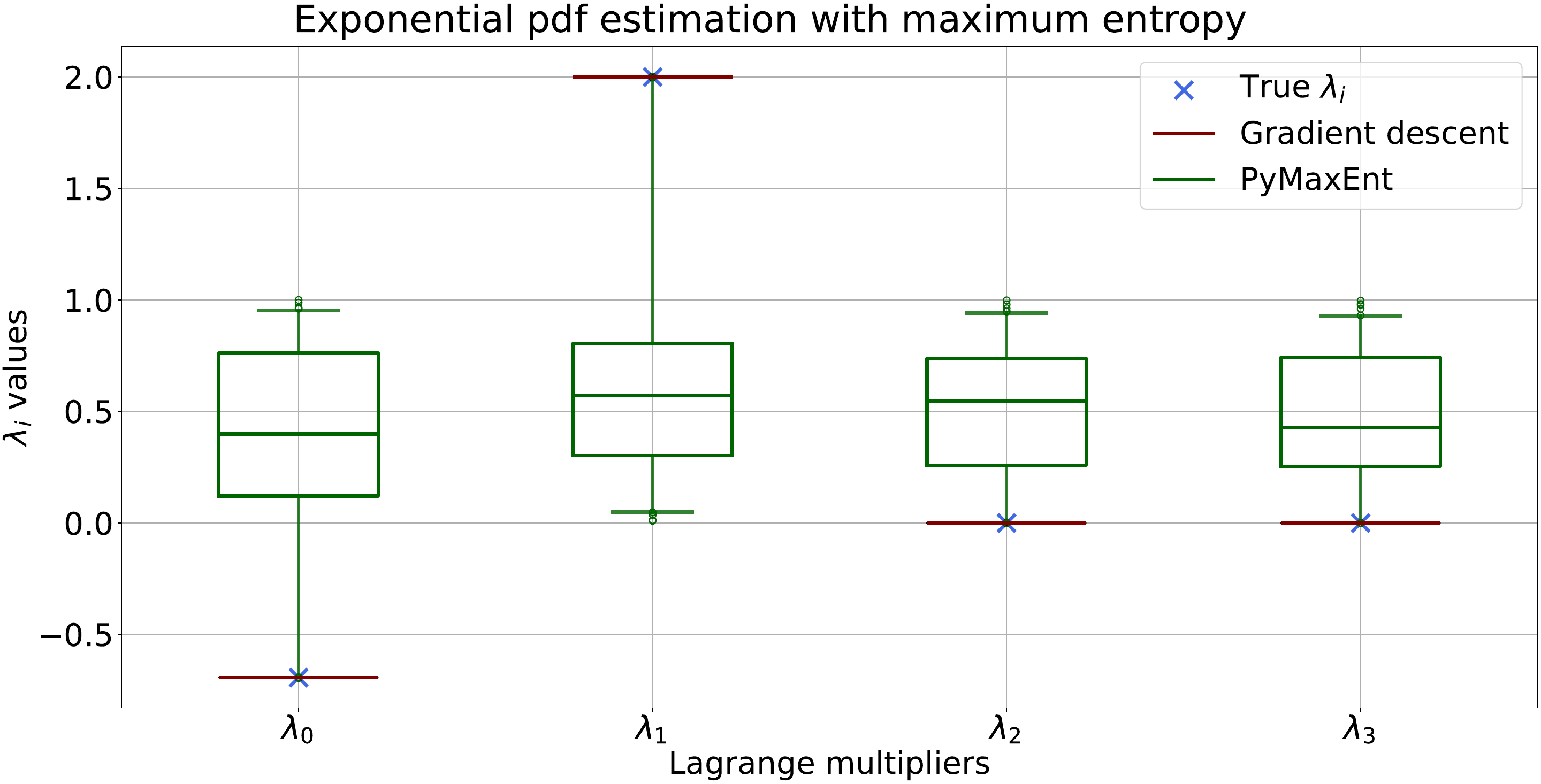}
    \caption{Box plots of Lagrange multipliers for maximum entropy estimation of an exponential rv distribution via gradient-descent (red) and the method from \cite{saad2019pymaxent} (PyMaxEnt, green) for 100 random initializations. Gradient descent always converges to the true value, while PyMaxEnt does not.}
    \label{fig:max_ent_cont_ex}
\end{figure}
\end{example}

\begin{remark}
This maximum entropy approach can be easily modified to accommodate discrete distributions. This is useful when we have access to fewer moments than the amount of symbols the discrete random variable takes, in which case we can no longer use the procedure described in Section \ref{sec:discrete_generation}. 
    
Assuming that our random variable takes on the values $v_0,\dots, v_R$, we wish to find its corresponding probabilities $p_0,\dots,p_R$ by maximizing Shannon's entropy
    \begin{displaymath}
        S(p_0,\dots,p_R) = -\sum_{r=0}^R p_r \log p_r
    \end{displaymath}
    subject to the moments constrains
    \begin{displaymath}
        \sum_{r=0}^R v_r^k p_r = m_k \,\,\, \forall k=0,\dots,K.
    \end{displaymath}
    We define the Lagrangian in analogy to \eqref{eq:lagrangian} and by differentiation we find that it has a maximum when $p_r = \exp\left( -\sum_{k=0}^K \lambda_k v_r^k\right)$. Thus, the dual function can be computed as:
    \begin{displaymath}
        d(\bblam) = \sum_{k=0}^K \lambda_k m_k + \sum_{r=0}^R \exp \left( -\sum_{k=0}^K \lambda_k v_r^k \right) - m_0,
    \end{displaymath}
    which, as before, is a convex function of $\lambda_0,\dots,\lambda_K$. Therefore, we can find its minimizer using any standard convex optimization algorithm and from it compute the $p_r$'s that maximize Shannon's entropy.
\end{remark}

\end{appendix}

\section*{Accompanying code}
The code to generate Figures 3 to 12 in this work is available at \url{https://github.com/bmarenco/wrdpg}.

\section*{Acknowledgments}
Part of the results in this paper were presented at the \textit{2024 Asilomar Conference on Signals, Systems, and Computers}~\cite{marenco2024wrdpg}.

The authors would like to thank Matías Valdés for helpful discussions on optimization related to entropy maximization.

\section*{Funding}
The first, second, third and fourth authors were partially supported by CSIC (I+D project 22520220100076UD) and ANII (SticAmsud LAGOON project).

The fifth author was supported in part by NSF award EECS-2231036.

\bibliographystyle{plainnat}  
\bibliography{citations}

\end{document}